\renewcommand{\tilde}{\widetilde}
\renewcommand{\bar}{\overline}
\renewcommand{\O}{\operatorname{\mathcal O}}
\newcommand{\Otil}{\operatorname{\tilde{\mathcal O}}}
\newtheorem{theorem}{Theorem}
\newtheorem{lemma}[theorem]{Lemma}
\newtheorem{assumption}[theorem]{Assumption}
\Crefname{ALC@line}{Line}{Lines}
\Crefname{assumption}{Assumption}{Assumptions}
\def\eqref#1{equation~\ref{#1}}
\def\1{\bm{1}}
\def\ve{{\bm{e}}}
\def\vl{{\bm{\ell}}}
\def\vx{{\bm{x}}}
\def\vy{{\bm{y}}}
\def\vz{{\bm{z}}}
\def\mL{{\bm{L}}}
\DeclareMathAlphabet{\mathsfit}{\encodingdefault}{\sfdefault}{m}{sl}
\SetMathAlphabet{\mathsfit}{bold}{\encodingdefault}{\sfdefault}{bx}{n}
\def\gF{{\mathcal{F}}}
\def\gO{{\mathcal{O}}}
\def\gR{{\mathcal{R}}}
\def\gT{{\mathcal{T}}}
\newcommand{\wt}{\widetilde}
\newcommand{\E}{\mathbb{E}}
\newcommand{\R}{\mathbb{R}}
\newif\ifsup\supfalse
\DeclareMathOperator*{\argmax}{argmax}
\DeclareMathOperator*{\argmin}{argmin}
\newcommand{\cmark}{{\color{green!70!black}\ding{51}}}%
\newcommand{\xmark}{{\color{red!70!black}\ding{55}}}%
\newcommand{\adv}{{\color{green!70!black}\textbf{Adv.}}}%
\newcommand{\stoc}{{\color{red!70!black}\textbf{Only Stoc.}}}%
\newcommand{\advred}{{\color{red!70!black}\textbf{Only Adv.}}}%
\newcommand{\stocgreen}{{\color{green!70!black}\textbf{Stoc.}}}%
\newcommand{\Clip}{{\text{clip}}}
\newcommand{\Skip}{{\text{skip}}}
\title{
uniINF: Best-of-Both-Worlds Algorithm for Parameter-Free Heavy-Tailed MABs
}
\author{%
Yu Chen \thanks{The first three authors contributed equally to this paper.} \\
IIIS, Tsinghua University \\
\texttt{chenyu23@mails.tsinghua.edu.cn}
\And
Jiatai Huang \footnotemark[1] \\
Beijing ZhuoShi Capital Co., Ltd., China \\
\texttt{huangjiatai@zsquant.com} \\
\AND
Yan Dai \footnotemark[1] \,\thanks{Part of the work was done when Yan Dai was at IIIS, Tsinghua University.} \\
ORC, Massachusetts Institute of Technology \\
\texttt{yandai20@mit.edu} \\
\And
Longbo Huang  \thanks{Corresponding author.}\\
IIIS, Tsinghua University \\
\texttt{longbohuang@tsinghua.edu.cn}
}
\begin{document}
\maketitle

\vspace{-3pt}
\begin{abstract}
\vspace*{-3pt}
In this paper, we present a novel algorithm, \texttt{uniINF}, for the Heavy-Tailed Multi-Armed Bandits (HTMAB) problem, demonstrating robustness and adaptability in both stochastic and adversarial environments. Unlike the stochastic MAB setting where loss distributions are stationary with time, our study extends to the adversarial setup, where losses are generated from heavy-tailed distributions that depend on both arms and time. Our novel algorithm \texttt{uniINF} enjoys the so-called Best-of-Both-Worlds (BoBW) property, performing optimally in both stochastic and adversarial environments \textit{without} knowing the exact environment type. Moreover, our algorithm also possesses a Parameter-Free feature, \textit{i.e.}, it operates \textit{without} the need of knowing the heavy-tail parameters $(\sigma, \alpha)$ a-priori.
To be precise, \texttt{uniINF} ensures nearly-optimal regret in both stochastic and adversarial environments, matching the corresponding lower bounds when $(\sigma, \alpha)$ is known (up to logarithmic factors). To our knowledge, \texttt{uniINF} is the first parameter-free algorithm to achieve the BoBW property for the heavy-tailed MAB problem. Technically, we develop innovative techniques to achieve BoBW guarantees for Parameter-Free HTMABs, including a refined analysis for the dynamics of log-barrier, an auto-balancing learning rate scheduling scheme, an adaptive skipping-clipping loss tuning technique, and a stopping-time analysis for logarithmic regret.
\end{abstract}
\vspace*{-8pt}
\section{Introduction}
Multi-Armed Bandits (MAB) problem serves as a solid theoretical formulation for addressing the exploration-exploitation trade-off inherent in online learning. Existing research in this area often assumes sub-Gaussian loss (or reward) distributions \citep{lattimore2020bandit} or even bounded ones \citep{auer2002finite}. However, recent empirical evidences revealed that \textit{Heavy-Tailed} (HT) distributions appear frequently in realistic tasks such as network routing \citep{liebeherr2012delay}, algorithm portfolio selection \citep{gagliolo2011algorithm}, and online deep learning \citep{zhang2020adaptive}. Such observations underscore the importance of developing MAB solutions that are robust to heavy-tailed distributions.

In this paper, we consider the Heavy-Tailed Multi-Armed Bandits (HTMAB) proposed by \citet{bubeck2013bandits}. In this scenario, the loss distributions associated with each arm do not allow bounded variances but instead have their $\alpha$-th moment bounded by some constant $\sigma^{\alpha}$, where $\alpha\in (1,2]$ and $\sigma\ge 0$ are predetermined constants. Mathematically, we assume $\E[\lvert \ell_i\rvert^\alpha] \le \sigma^\alpha$ for every arm $i$. Although numerous existing HTMAB algorithms operated under the assumption that the parameters $\sigma$ and $\alpha$ are known \citep{bubeck2013bandits,yu2018pure,wei2020minimax}, real-world applications often present limited knowledge of the true environmental parameters. Thus we propose to explore the scenario where the algorithm lacks any prior information about the parameters. This setup, a variant of classical HTMABs, is referred to as the \textit{Parameter-Free} HTMAB problem.

In addition to whether $\sigma$ and $\alpha$ are known, there is yet another separation that distinguishes our setup from the classical one.
In bandit learning literature, there are typically two types of environments,  stochastic ones and  adversarial ones.
In the former type, the loss distributions are always stationary, \textit{i.e.}, they depend solely on the arm and not on time. Conversely, in adversarial environments, the losses can change arbitrarily, as-if manipulated by an adversary aiming to fail our algorithm.

When the environments can be potentially adversarial,
a desirable property for bandit algorithms is called \textit{Best-of-Both-Worlds} (BoBW), as proposed by \citet{bubeck2012best}. A BoBW algorithm behaves well in both stochastic and adversarial environments without knowing whether stochasticity is satisfied. More specifically, while ensuring near-optimal regret in adversarial environments, the algorithmic performance in a stochastic environment should automatically be boosted, ideally matching the optimal performance of those algorithms specially crafted for stochastic environments. 

In practical applications of machine learning and decision-making, acquiring prior knowledge about the environment is often a considerable challenge \citep{talaei2023machine}. This is not only regarding the distribution properties of rewards or losses, which may not conform to idealized assumptions such as sub-Gaussian or bounded behaviors, but also about the stochastic or adversarial settings the agent palying in. Such environments necessitate robust solutions that can adapt without prior distributional knowledge. Therefore, the development of a BoBW algorithm that operates effectively without this prior information -- termed a parameter-free HTMAB algorithm -- is not just a theoretical interest but a practical necessity. 

Various previous work has made strides in enhancing the robustness and applicability of HTMAB algorithms. For instance, \citet{huang2022adaptive} pioneered the development of the first BoBW algorithm for the HTMAB problem, albeit requiring prior knowledge of the heavy-tail parameters $(\alpha,\sigma)$ to achieve near-optimal regret guarantees. \citet{genalti2023towards} proposed an Upper Confidence Bound (UCB) based parameter-free HTMAB algorithm, specifically designed for stochastic environments. Both algorithms' regret adheres to the instance-dependent and instance-independent lower bounds established by \cite{bubeck2013bandits}.
Though excited progress has been made, the following important question still stays open, which further eliminates the need of prior knowledge about the environment:

\vspace{4pt}
\centerline{\textit{\textbf{Can we design a BoBW algorithm for the Parameter-Free HTMAB problem?}}}

Addressing parameter-free HTMABs in both adversarial and stochastic environments presents notable difficulties:
\textit{i)} although UCB-type algorithms well estimate the underlying loss distribution and provide both optimal instance-dependent and independent regret guarantees, they are incapable of adversarial environments where loss distributions are time-dependent;
\textit{ii)} heavy-tailed losses can be potentially very negative, which makes many famous algorithmic frameworks, including Follow-the-Regularized-Leader (FTRL), Online Mirror Descent (OMD), or Follow-the-Perturbed-Leader (FTPL) fall short unless meticulously designed; and
\textit{iii)} while it was shown that FTRL with $\beta$-Tsallis entropy regularizer can enjoy best-of-both-worlds guarantees, attaining optimal regret requires an exact match between $\beta$ and $\nicefrac 1\alpha$ --- which is impossible without knowing $\alpha$ in advance.

\noindent \textbf{Our contribution.} 
In this paper, we answer this open question affirmatively by designing a single algorithm \texttt{uniINF} that enjoys both Parameter-Free and BoBW properties --- that is, it \textit{i)} does not require any prior knowledge of the environment, \textit{e.g.}, $\alpha$ or $\sigma$, and \textit{ii)} its performance when deployed in an adversarial environment nearly matches the universal instance-independent lower bound given by \citet{bubeck2013bandits}, and it attains the instance-dependent lower bound in stochastic environments as well.
For more details, we summarize the advantages of our algorithm in \Cref{table}. Our research directly contributes to enhancing the robustness and applicability of bandit algorithms in a variety of unpredictable and non-ideal conditions.
Our main contributions are three-fold:
\begin{itemize}[nosep,itemsep=2pt]
    \item We develop a novel BoBW algorithm \texttt{uniINF} (see \Cref{alg}) for the Parameter-Free HTMAB problem. Without any prior knowledge about the heavy-tail shape-and-scale parameters $(\alpha, \sigma)$, \texttt{uniINF} can achieve nearly optimal regret upper bound automatically under both adversarial and stochastic environments (see \Cref{table} or \Cref{thm:main} for more details).
    \item We contribute several innovative algorithmic components in designing the algorithm \texttt{uniINF}, including a refined analysis for Follow-the-Regularized-Leader (FTRL) with log-barrier regularizers (refined log-barrier analysis in short; see \Cref{sec:log-barrier}), an auto-balancing learning rate scheduling scheme (see \Cref{sec:learning-rate}), and an adaptive skipping-clipping loss tuning technique (see \Cref{sec:skip-clip}).
    \item To derive the desired BoBW property, we develop novel analytical techniques as well. These include a refined approach to control the Bregman divergence term via calculating partial derivatives and invoking the intermediate value theorem (see \Cref{sec:bregman}) and a stopping-time analysis for achieving $\O(\log T)$ regret in stochastic environments (see \Cref{sec:skip}).
\end{itemize}
\begin{table}[t]
\caption{Overview of Related Works For Heavy-Tailed MABs}
\label{table}
\begin{minipage}{\textwidth}
    \centering
    \begin{savenotes}
    \renewcommand{\arraystretch}{1.5}
    \resizebox{\textwidth}{!}{%
    \begin{tabular}{|c|c|c|c|c|c|}\hline
    \textbf{Algorithm} \footnote{\textbf{$\bm \alpha$-Free?} and \textbf{$\bm \sigma$-Free?} denotes whether the algorithm is parameter-free \textit{w.r.t.} $\alpha$ and $\sigma$, respectively. \textbf{Env.} includes the environments that the algorithm can work; if one algorithm can work in \textit{\textbf{both}} stochastic and adversarial environments, then we mark this column by green. \textbf{Regret} describes the algorithmic guarantees, usually (if applicable) instance-dependent ones above instance-independent ones. \textbf{Opt?} means whether the algorithm matches the instance-dependent lower bound by \citet{bubeck2013bandits} \textit{up to constant factors}, or the instance-independent lower bound \textit{up to logarithmic factors}.} & \textbf{$\bm \alpha$-Free?} & \textbf{$\bm \sigma$-Free?} & \textbf{Env.} & \textbf{Regret} & \textbf{Opt?}\\\hline
    \multirow{2}{*}{\shortstack{Lower Bound\\[3pt]\citep{bubeck2013bandits}}} & \multirow{2}{*}{---} & \multirow{2}{*}{---} & \multirow{2}{*}{---} & $\Omega\left (\sum_{i\ne i^\ast}(\frac{\sigma^\alpha}{\Delta_i})^{\frac{1}{\alpha-1}}\log T\right )$ & \multirow{2}{*}{---} \\\cdashline{5-5}
    & & & & $\Omega\left (\sigma K^{1-\nicefrac 1\alpha}T^{\nicefrac 1\alpha}\right )$ & \\\hline
    \multirow{2}{*}{\shortstack{\texttt{RobustUCB}\\[3pt]\citep{bubeck2013bandits}}} & \multirow{2}{*}{\xmark} & \multirow{2}{*}{\xmark} & \multirow{2}{*}{\stoc} & $\O\left (\sum_{i\ne i^\ast}(\frac{\sigma^\alpha}{\Delta_i})^{\frac{1}{\alpha-1}}\log T\right )$ & \cmark \\\cdashline{5-6}
    & & & & $\Otil\left (\sigma K^{1-\nicefrac 1\alpha}T^{\nicefrac 1\alpha}\right )$ & \cmark \\\hline
    \multirow{2}{*}{\shortstack{\texttt{Robust MOSS}\\[3pt]\citep{wei2020minimax}}} & \multirow{2}{*}{\xmark} & \multirow{2}{*}{\xmark} & \multirow{2}{*}{\stoc} & $\O\left (\sum_{i\ne i^\ast} (\frac{\sigma^\alpha}{\Delta_i})^{\frac{1}{\alpha-1}} \log (\frac{T}{K} (\frac{\sigma^\alpha}{\Delta_i^\alpha})^{\frac{1}{\alpha-1}})\right )$ & \phantom{\footnotemark} {\cmark} \footnote{Up to $\log(\sigma^\alpha)$ and $\log(1/\Delta_i^\alpha)$ factors.} \\\cdashline{5-6}
    & & & & $\O\left (\sigma K^{1-\nicefrac 1\alpha}T^{\nicefrac 1\alpha}\right )$ & \cmark \\\hline
    \multirow{2}{*}{\shortstack{\texttt{APE$^2$}\\[3pt] \citep{lee2020optimal}}} & \multirow{2}{*}{\xmark} & \multirow{2}{*}{\cmark} & \multirow{2}{*}{\stoc} & $\gO\left(e^\sigma + \sum_{i\neq i^\ast}(\frac{1}{\Delta_i})^{\frac{1}{\alpha - 1}}(T\Delta_i^{\frac{\alpha}{\alpha - 1}}\log K)^{\frac{\alpha}{(\alpha - 1)\log K}}\right)$ & \xmark \\\cdashline{5-6}
    & & & & $\Otil\left (\exp(\sigma^{\nicefrac 1\alpha}) K^{1-\nicefrac 1\alpha}T^{\nicefrac 1\alpha}\right )$ & \xmark \\\hline
    \multirow{2}{*}{\shortstack{\texttt{HTINF}\\[3pt]\citep{huang2022adaptive}}} & \multirow{2}{*}{\xmark} & \multirow{2}{*}{\xmark} & \stocgreen & $\O\left(\sum_{i\ne i^\ast} (\frac{\sigma^\alpha}{\Delta_i})^{\frac 1 {\alpha - 1}}\log T\right)$ & \cmark\\
    \cdashline{4-6}
    & & & \adv & $\O\left(\sigma K^{1-\nicefrac 1\alpha} T^{\nicefrac 1 \alpha} \right )$ & \cmark\\\hline
    \multirow{2}{*}{\shortstack{\texttt{OptHTINF}\\[3pt]\citep{huang2022adaptive}}} & \multirow{2}{*}{\cmark} & \multirow{2}{*}{\cmark} & \stocgreen & $\O\left(\sum_{i\ne i^\ast}(\frac{\sigma^{2\alpha}}{\Delta_i^{3-\alpha}})^{\frac{1}{\alpha-1}}\log T\right )$ & \xmark\\
    \cdashline{4-6}
    & & & \adv & $\O\left (\sigma^\alpha K^{\frac{\alpha-1}{2}}T^{\frac{3-\alpha}{2}}\right )$ & \xmark\\\hline
    \multirow{2}{*}{\shortstack{\texttt{AdaTINF} \\[3pt] \citep{huang2022adaptive}}} & \multirow{2}{*}{\cmark} & \multirow{2}{*}{\cmark} & \multirow{2}{*}{\advred} & \multirow{2}{*}{$\O\left( \sigma K^{1-\nicefrac{1}\alpha} T^{\nicefrac 1 \alpha} \right)$} & \multirow{2}{*}{\cmark}\\
    & & & & & \\\hline
    \multirow{2}{*}{\shortstack{\texttt{AdaR-UCB}\\[3pt]\citep{genalti2023towards}}} & \multirow{2}{*}{\cmark} & \multirow{2}{*}{\cmark} & \multirow{2}{*}{\stoc} & $\O\left (\sum_{i\ne i^\ast} (\frac{\sigma^\alpha}{\Delta_i})^{\frac{1}{\alpha-1}} \log T\right )$ & \cmark \\\cdashline{5-6}
    & & & & $\Otil\left (\sigma K^{1-\nicefrac 1\alpha}T^{\nicefrac 1\alpha}\right )$ & \cmark \\\hline
    \multirow{2}{*}{\shortstack{\texttt{uniINF}\\[3pt]\textbf{(Ours)}}} & \multirow{2}{*}{\cmark} & \multirow{2}{*}{\cmark} & \stocgreen & $\O\left (K (\frac{\sigma^\alpha}{\Delta_{\min}})^{\frac{1}{\alpha-1}} \log T \cdot \log \frac{\sigma^\alpha}{\Delta_{\min}}\right )$ & \phantom{\footnotemark} {\cmark} \footnote{Up to $\log(\sigma^\alpha)$ and $\log(1/\Delta_{\min})$ factors when all $\Delta_i$'s are similar to the dominant sub-optimal gap $\Delta_{\min}$.} \\\cdashline{4-6}
    & & & \adv & $\Otil\left (\sigma K^{1-\nicefrac 1\alpha}T^{\nicefrac 1\alpha}\right )$ & \cmark \\\hline
    \end{tabular}}
    \end{savenotes}
\end{minipage}
\end{table}

\section{Related Work}
{Due to space limitations, we defer the detailed comparison with the related works into \Cref{app:related} and only discuss a few most related ones here.}

\noindent\textbf{Heavy-Tailed Multi-Armed Bandits.}{ HTMABs were introduced by \citet{bubeck2013bandits}, who gave both instance-dependent and independent lower and upper bounds under stochastic assumptions. Various efforts have been devoted in this area since then. To exemplify,
\citet{wei2020minimax} improve the instance-independent upper bound;
\citet{yu2018pure} developed a pure exploration algorithm for HTMABs;
\citet{medina2016no,kang2023heavy,xue2024efficient} considered the linear HTMAB problem; and \citet{dorn2024fast} investigated the symmetric reward distribution. }

\noindent\textbf{Best-of-Both-Worlds Algorithms.} {\citet{bubeck2012best} pioneered the study of BoBW bandit algorithms, and was followed by various improvements including \texttt{EXP3}-based approaches \citep{seldin2014one}, FTPL-based approaches \citep{honda2023follow}, and FTRL-based approaches \citep{wei2018more, zimmert2019optimal}. When the loss distribution can be heavy-tailed, \citet{huang2022adaptive} achieves best-of-both-worlds property under the known-$(\alpha,\sigma)$ assumption.}

\noindent\textbf{Parameter-Free HTMABs.} { Another line of HTMABs' research aimed at getting rid of the prior knowledge of $\alpha$ or $\sigma$, which we call Parameter-Free HTMABs. Along this line, \citet{kagrecha2019distribution} presented the GSR method to identify the optimal arm in HTMAB without any prior knowledge. In terms of regret minimization, \citet{lee2020optimal} and \citet{lee2022minimax} considered the case when $\sigma$ is unknown. \citet{genalti2023towards} were the first to achieve the parameter-free property while maintaining near-optimal regret. However, all these algorithms fail in adversarial environments.}

\section{Preliminaries: Heavy-Tailed Multi-Armed Bandits}
\noindent\textbf{Notations.}
For an integer $n\ge 1$, $[n]$ denotes the set $\{1,2,\ldots,n\}$.
For a finite set $\mathcal X$, $\Delta(\mathcal X)$ denotes the set of probability distributions over $\mathcal X$, often also referred to as the simplex over $\mathcal X$. We also use $\Delta^{[K]} := \Delta([K])$ to denote the simplex over $[K]$.
We use $\O$ to hide all constant factors, and use $\Otil$ to additionally suppress all logarithmic factors.
We usually use bold letters  $\vx$ to denote a vector, while $x_i$ denotes an entry of the vector. Unless mentioned explicitly, $\log(x)$ denotes the natural logarithm of $x$. Throughout the text, we will use $\{\mathcal F_t\}_{t=0}^T$ to denote the natural filtration, \textit{i.e.}, $\mathcal F_t$ represents the $\sigma$-algebra generated by all random observations made during the first $t$ time-slots. 

Multi-Armed Bandits (MAB) is an interactive game between a player and an environment that lasts for a finite number of 
 $T>0$ rounds.
In each round $t\in [T]$, the player can choose an action from $K>0$ arms, denoted by $i_t\in [K]$.
Meanwhile, a $K$-dimensional loss vector $\bm \ell_t\in \mathbb R^K$ is generated by the environment from distribution $\bm{\nu}_t$, simultaneously without observing $i_t$.
The player then suffers a loss of $\ell_{t,i_t}$ and observe this loss (but not the whole loss vector $\bm \ell_t$).
The player's objective is to minimize the expected total loss, or equivalently, minimize the following (pseudo-)regret:
\begin{equation}\label{eq:regret}
\mathcal R_T:= \max_{i\in [K]} \E\left [\sum_{t=1}^T \ell_{t,i_t} - \sum_{t=1}^T \ell_{t,i}\right ],
\end{equation}
where the expectation is taken \textit{w.r.t.} the randomness when the player decides the action $i_t$ and the environment generates the loss $\bm \ell_t$. We use $i^\ast = \mathrm{argmin}_i  \mathbb E\left[\sum_{t=1}^T \ell_{t,i}\right]$ to denote the optimal arm.

In Heavy-Tailed MABs (HTMAB), for every $t\in [T]$ and $i\in [K]$, the loss is independently sampled from some heavy-tailed distribution $\nu_{t,i}$ in the sense that $\E_{\ell\sim \nu_{t,i}}[\lvert \ell\rvert^\alpha]\le \sigma^\alpha$, where $\alpha\in (1,2]$ and $\sigma\ge 0$ are some pre-determined but \textit{unknown} constants.
A Best-of-Both-Worlds (BoBW) algorithm is one that behaves well in both stochastic and adversarial environments \citep{bubeck2012best}, where stochastic environments are those with time-homogeneous $\{\bm \nu_t\}_{t\in [T]}$ (\textit{i.e.}, $\nu_{t,i}=\nu_{1,i}$ for every $t\in [T]$ and $i\in[K]$) and adversarial environments are those where $\nu_{t,i}$'s can depend on both $t$ and $i$. However, we do not allow the loss distributions to depend on the player's previous actions.\footnote{Called oblivious adversary model \citep{bubeck2012best,wei2018more,zimmert2019optimal}.}

Before concluding this section, we make the following important assumption which is also considered in \citet{huang2022adaptive} and \citet{genalti2023towards}. Notice that \Cref{ass:truncated-non-negative} is strictly weaker than the common "non-negative losses" assumption in the MABs literature \citep{auer2002finite,lee2020closer,jin2023improved}, which only needs the truncated non-negativity for the optimal arm. We make a detailed discussion on this assumption in \Cref{app:truncated}.
\begin{assumption}[Truncated Non-Negative Loss {\citep[Assumption 3.6]{huang2022adaptive}}]
\label{ass:truncated-non-negative}
There exits an optimal arm $i^\ast \in [K]$ such that $\ell_{t,i^\ast}$ is truncated non-negative for all $t \in [T]$, where a random variable $X$ is called truncated non-negative if $\E[X \cdot \mathbbm{1}[|X| > M ]] \geq 0$ for any $M \geq 0$.
\end{assumption}

Additionally, we make the following assumption for stochastic cases. \Cref{ass:unique best arm} is common for algorithms utilizing self-bounding analyses, especially those with BoBW properties \citep{gaillard2014second,luo2015achieving,wei2018more,zimmert2019optimal,ito2022nearly}.\footnote{While \Cref{ass:unique best arm} is commonly adopted in prior work, it is worth noting that some Tsallis-INF algorithms do not require this assumption as analyzed in \cite{ito2021hybrid} and further extended in \cite{jin2023improved}.}
\begin{assumption}[Unique Best Arm]\label{ass:unique best arm}
In stochastic setups, if we denote the mean of distribution $\nu_{1,i}$ as $\mu_i\coloneqq \E_{\ell\sim \nu_{1,i}}[\ell]$ for all $i\in [K]$, then there exists a unique best arm $i^\ast\in [K]$ such that $\Delta_i\coloneqq \mu_i - \mu_{i^\ast}>0$ for all $i\ne i^\ast$. That is, the minimum gap $\Delta_{\min}\coloneqq \min_{i\ne i^\ast} \Delta_i$ is positive.
\end{assumption}

\section{The BoBW HTMAB Algorithm \texttt{uniINF}}
In this section, we introduce our novel algorithm \texttt{uniINF} (\Cref{alg}) for parameter-free HTMABs achieving BoBW.
To tackle the adversarial environment, we adopt the famous Follow-the-Regularized-Leader (FTRL) framework instead the statistics-based approach. Moreover, we utilize the log-barrier regularizer to derive the logarithmic regret bound in stochastic setting. In the rest of this section, we introduce the main novel components in \texttt{uniINF}, including the refined log-barrier analysis (see \Cref{sec:log-barrier}), the auto-balancing learning rate scheduling scheme (see \Cref{sec:learning-rate}), and the adaptive skipping-clipping loss tuning technique (see \Cref{sec:skip-clip}).

\begin{algorithm}[!t]
\caption{\texttt{uniINF}: the universal INF-type algorithm for Parameter-Free HTMAB}
\label{alg}
\begin{algorithmic}[1]
\STATE Initialize the learning rate $S_1\gets 4$.
\FOR{$t=1,2,\ldots,T$}
\STATE Apply \textit{Follow-the-Regularized-Leader} (FTRL) to calculate the action $\bm x_t\in \Delta^{[K]}$ with the log-barrier regularizer $\Psi_t$ defined in \Cref{eq:Psi-def}: \COMMENT{Refined log-barrier analysis; see \Cref{sec:log-barrier}.}
\begin{equation}
\label{eq:Psi-def}
\vx_t \leftarrow \argmin_{\vx \in \Delta^{[K]}} \left( \sum_{s=1}^{t-1} \left\langle \wt{\vl}_s, \vx \right\rangle + \Psi_t(\vx) \right), \quad \Psi_t(\vx) := -S_t \sum_{i=1}^K \log x_i
\end{equation}
\STATE Sample action $i_t\sim \bm x_t$. Play $i_t$ and observe feedback $\ell_{t,i_t}$.
\FOR[Adaptive skipping-clipping loss tuning; see \Cref{sec:skip-clip}. Note that only $\ell_{t,i_t}^{\text{skip}}$ and $\ell_{t,i_t}^{\text{clip}}$ (but not the whole $\vl_t^{\text{skip}}$ and $\vl_t^{\text{clip}}$ vectors) are accessible to the player.]{$i=1,2,\ldots,K$}
\STATE Calculate the \textit{action-dependent skipping threshold} for arm $i$ and round $t$
\begin{align}
\label{eq:C-def}
    C_{t,i} := \frac{S_t}{4(1 - x_{t,i})},
\end{align}
and define a \textit{skipped} version and a \textit{clipped} version of the actual loss $\ell_{t,i}$
\begin{align*}
\ell^{\text{skip}}_{t,i} &:= \operatorname{Skip}(\ell_{t,i}, C_{t,i}) := \begin{cases}
    \ell_{t,i} & \text{if } \lvert \ell_{t,i} \rvert < C_{t,i} \\
    0 & \text{otherwise}
\end{cases},\\
\ell^{\text{clip}}_{t,i}& := \operatorname{Clip}(\ell_{t,i}, C_{t,i}) := \begin{cases}
    C_{t,i} & \text{if } \ell_{t,i} \ge C_{t,i} \\
    -C_{t,i} & \text{if } \ell_{t,i} \le -C_{t,i} \\
    \ell_{t,i} & \text{otherwise}
\end{cases}.
\end{align*}
\ENDFOR
\STATE Calculate the importance sampling estimate of $\vl^{\text{skip}}_t$, namely $\wt{\vl}_t$, where  
$
\wt{\ell}_{t,i} = \frac{\ell^{\text{skip}}_{t,i}}{x_{t,i}}\cdot \mathbbm{1}[i = i_t], \quad \forall i \in [K].
$
\STATE Update the learning rate $S_{t+1}$ as \COMMENT{Auto-balancing learning rates; see \Cref{sec:learning-rate}.}
\begin{align}
\label{eq:St-def}
S_{t+1}^2 = S_{t}^2 + ({\color{green!50!black}\ell_{t,i_t}^{\text{clip}}})^2 \cdot (1-x_{t,i_t})^2 \cdot (K\log T)^{-1}.
\end{align}
\ENDFOR
\end{algorithmic}
\end{algorithm}

\subsection{Refined Log-Barrier Analysis}
\label{sec:log-barrier}

We adopt the \textit{log-barrier} regularizer $\Psi_t(\vx) := -S_t \sum_{i=1}^K \log x_i$ in \Cref{eq:Psi-def} where $S_t^{-1}$ is the learning rate in round $t$. 
While log-barrier regularizers were commonly used in the literature for data-adaptive bounds such as small-loss bounds \citep{foster2016learning}, path-length bounds \citep{wei2018more}, and second-order bounds \citep{ito2021parameter}, and have also been shown to enable the best-of-both-worlds property in certain settings, this paper introduces novel analysis illustrating that log-barrier regularizers also provide environment-adaptivity for both stochastic and adversarial settings.

Precisely, it is known that log-barrier applied to a loss sequence $\{\bm c_t\}_{t=1}^T$ ensures $\sum_{t=1}^T \langle \bm x_t-\bm y,\bm c_t\rangle\lesssim \sum_{t=1}^T ((S_{t+1}-S_t) K\log T+\textsc{Div}_t)$ where $\textsc{Div}_t\le S_t^{-1} \sum_{i=1}^K {\color{green!50!black}x_{t,i}^2} c_{t,i}^2$ for \textit{non-negative} $\bm c_t$'s \citep[Lemma 16]{foster2016learning} 
and $\textsc{Div}_t\le S_t^{-1} \sum_{i=1}^K {\color{green!50!black}x_{t,i}} c_{t,i}^2$ for \textit{general} $\bm c_t$'s \citep[Lemma 3.1]{dai2023refined}.
In comparison, our \Cref{lemma:constantly-apart,lemma:bregman-bound} focus on the case where $S_t$ is \textit{adequately large} compared to $\lVert \bm c_t\rVert_\infty$ and give a refined version of $\textsc{Div}_t\le S_t^{-1} \sum_{i=1}^K {\color{green!50!black}x_{t,i}^2(1-x_{t,i})^2} c_{t,i}^2$, which means
\begin{equation}\label{eq:FTRL log-barrier informal}
\sum_{t=1}^T \langle \bm x_t-\bm y,\bm c_t\rangle\lesssim \sum_{t=1}^T (S_{t+1}-S_t) K\log T + \sum_{t=1}^T S_t^{-1} \sum_{i=1}^K {\color{green!50!black}x_{t,i}^2(1-x_{t,i})^2} c_{t,i}^2.
\end{equation}
The extra $(1-x_{t,i})^2$ terms are essential to exclude the optimal arm $i^\ast\in [K]$ --- a nice property that leads to best-of-both-worlds guarantees \citep{zimmert2019optimal,dann2023best}.
To give more technical details on why we need this $(1-x_{t,i})^2$, in \Cref{sec:regret-sketch}, we will decompose the regret into skipping error $\sum_{t=1}^T (\ell_{t,i_t}-\ell_{t,i_t}^{\Skip}) \mathbbm{1}[i_t\ne i^\ast]$ and main regret which is roughly $\sum_{t=1}^T \langle \bm x_t-\bm y,\bm \ell^{\Skip}_t\rangle$.
The skipping errors already include the indicator $\mathbbm{1}[i_t\ne i^\ast]$, so the exclusion of $i^\ast$ is automatic. However, for the main regret, we must manually introduce some $(1-x_{t,i})$ to exclude $i^\ast$ --- which means the previous bounds mentioned above do not apply, while our novel \Cref{eq:FTRL log-barrier informal} is instead helpful.

\subsection{Auto-Balancing Learning Rate Scheduling Scheme}
\label{sec:learning-rate}
We design the our auto-balancing learning rate $S_t$ in \Cref{eq:St-def}. The idea is to balance a Bregman divergence term $\textsc{Div}_t$ and a $\Psi$-shifting term $\textsc{Shift}_t$ that arise in our regret analysis (roughly corresponding to the terms on the RHS of \Cref{eq:FTRL log-barrier informal}). They allow the following upper bounds as we will include as \Cref{lemma:bregman-bound,lemma:psi-shift-bound}:
\begin{equation}\label{eq:Div Shift informal}
\underbrace{\textsc{Div}_t \leq \O\left (S_t^{-1} \left (\ell_{t,i_t}^{\text{clip}}\right )^2 (1-x_{t,i_t})^2\right )}_{\text{Bregman Divergence}}, \quad \underbrace{\textsc{Shift}_t \leq \gO\left((S_{t+1}-S_t)\cdot K \log T\right)}_{\text{$\Psi$-Shifting}}.
\end{equation}
Thus, to make $\textsc{Div}_t$ roughly the same as $\textsc{Shift}_t$, it suffices to ensure $(S_{t+1}-S_t) S_t \approx (\ell_{t,i_t}^{\text{clip}})^2 (1-x_{t,i_t})^2 \cdot (K\log T)^{-1}$. Our definition of $S_t$ in \Cref{eq:St-def} follows since $(S_{t+1}-S_t) S_t \approx S_{t+1}^2 - S_t^2$.


\subsection{Adaptive Skipping-Clipping Loss Tuning Technique}
\label{sec:skip-clip}

For heavy-tailed losses, a unique challenge is that $\E[\ell^2_{t,i_t}]$, the squared incurred loss appearing in the Bregman divergence term (as shown in \Cref{eq:Div Shift informal}), does not allow a straightforward upper bound as we only have $\E[\lvert\ell_{t,i_t}\rvert^\alpha]\le \sigma^\alpha$. A previous workaround is deploying a \textit{skipping} technique that replaces large loss with $0$ \citep{huang2022adaptive}. This technique forbids a sudden increase in the divergence term and thus eases the balance between $\textsc{Div}_t$ and $\textsc{Shift}_t$.
However, this skipping is not a free lunch: if we skipped too much, the increase of $S_t$ will be slow, which makes the skipping threshold $C_{t,i}$ grow slowly as well --- therefore, we end up skipping even more and incurring tremendous \textit{skipping error}! This eventually stops us from establishing instance-dependent guarantees in stochastic setups. 

To address this limitation, we develop an \textit{adaptive skipping-clipping} technique. The idea is to ensure that every loss will influence the learning process --- imagine that the green $\ell_{t,i_t}^{\text{clip}}$ in \Cref{eq:St-def} is replaced by $\ell_{t,i_t}^{\text{skip}}$, then $S_t$ will not change if the current $\ell_{t,i_t}$ is large, which risks the algorithm from doing nothing if happening repeatedly. 
Instead, our clipping technique induces an adequate reaction upon observing a large loss, which both prevents the learning rate $S_t$ from having a drastic change and ensures the growth of $S_t$ is not too slow. Specifically, from \Cref{eq:St-def}, if we skip the loss $\ell_{t,i_t}$, we must have $S_{t+1} = \Theta(1 + (K \log T)^{-1})S_t$. This is crucial for our stopping-time analysis in \Cref{sec:bregman}.

While clipping is used to tune $S_t$, we use the skipped loss $\bm \ell^\Skip_{t}$ (more specifically, its importance-weighted version $\tilde{\bm \ell}_{t}$) to decide the action $\bm x_t$ in the FTRL update \Cref{eq:Psi-def}.
Utilizing the truncated non-negativity assumption (\Cref{ass:truncated-non-negative}), we can exclude one arm when controlling the skipping error, as already seen in \Cref{eq:FTRL log-barrier informal}. We shall present more details in \Cref{sec:skip}.

\section{Main Results}\label{sec:sketch}
The main guarantee of our \texttt{uniINF} (\Cref{alg}) is presented in \Cref{thm:main} below, which states that \texttt{uniINF} achieves both optimal minimax regret for adversarial cases (up to logarithmic factors) and near-optimal instance-dependent regret for stochastic cases.
\begin{theorem}[Main Guarantee]
\label{thm:main}
    Under the adversarial environments, \texttt{uniINF} (\Cref{alg}) achieves 
    \begin{align*}
        \gR_T
        = \wt{\gO} \left(\sigma K^{1-\nicefrac 1\alpha}T^{\nicefrac 1\alpha}\right).
    \end{align*}
  Moreover, for the stochastic environments, \texttt{uniINF} (\Cref{alg}) guarantees 
    \begin{equation*}
        \gR_T = \gO\left(K \left (\frac{\sigma^\alpha}{\Delta_{\min}}\right )^{\frac{1}{\alpha - 1}} \log T \cdot \log \frac{\sigma^\alpha}{\Delta_{\min}}\right).
    \end{equation*}
\end{theorem}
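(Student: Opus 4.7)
The plan is to first perform a regret decomposition using the skipped loss as a bridge. Adding and subtracting $\widetilde{\bm{\ell}}_t$, I split $\mathcal{R}_T = \mathbb{E}\sum_t(\ell_{t,i_t} - \ell_{t,i^*})$ into three pieces: (i) the FTRL regret $\mathbb{E}\sum_t\langle \bm{x}_t - \bm{e}_{i^*}, \widetilde{\bm{\ell}}_t \rangle$ on the importance-weighted skipped estimate; (ii) the skipping bias on non-optimal arms, which by the $\alpha$-moment condition satisfies $\mathbb{E}\bigl[|\ell_{t,i}|\,\mathbbm{1}[|\ell_{t,i}| > C_{t,i}]\bigr] \le \sigma^\alpha C_{t,i}^{1-\alpha}$; and (iii) the skipping bias on the optimal arm, which is non-positive by \Cref{ass:truncated-non-negative}. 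For (i), applying the refined log-barrier FTRL bound (\Cref{eq:FTRL log-barrier informal}) with the Bregman divergence bound in \Cref{eq:Div Shift informal} yields
\begin{equation*}
\mathbb{E}\!\left[\sum_t \langle \bm{x}_t - \bm{e}_{i^*}, \widetilde{\bm{\ell}}_t \rangle\right] \le \mathcal{O}\!\left(K\log T \cdot \mathbb{E}[S_{T+1}]\right) + \mathbb{E}\!\left[\sum_t \frac{(\ell_{t,i_t}^{\Clip})^2 (1-x_{t,i_t})^2}{S_t}\right],
\end{equation*}
where the $(1-x_{t,i_t})^2$ factor comes from the importance-weighted estimator collapsing the sum $\sum_i x_{t,i}^2(1-x_{t,i})^2 \widetilde{\ell}_{t,i}^2$ to the single term at $i = i_t$, and the bound on skipped-squared losses by clipped-squared losses.

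For the adversarial bound, the auto-balancing design in \Cref{eq:St-def} gives the telescoping identity $S_{t+1}^2 - S_t^2 = (\ell_{t,i_t}^{\Clip})^2(1-x_{t,i_t})^2/(K\log T)$, so via the elementary inequality $\sum_t (A_{t+1}^2 - A_t^2)/A_t \le 2A_{T+1}$ the divergence sum collapses to $\widetilde{\mathcal{O}}(K\log T \cdot S_{T+1})$, matching the $\Psi$-shift term. Bounding $\mathbb{E}[(\ell_{t,i_t}^{\Clip})^2 \mid \mathcal{F}_{t-1}] \le \sigma^\alpha C_{t,i_t}^{2-\alpha}$ via the moment condition and substituting $C_{t,i} = S_t/(4(1-x_{t,i}))$ turns the telescoping identity into a recursive inequality for $\mathbb{E}[S_{T+1}^2]$; by Jensen and Hölder this resolves to $\mathbb{E}[S_{T+1}] = \widetilde{\mathcal{O}}(\sigma K^{-1/\alpha} T^{1/\alpha})$. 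The skipping error $\sum_t \sigma^\alpha C_{t,i_t}^{1-\alpha}$ from (ii), after substituting $C_{t,i_t} \propto S_t$, is of the same order, giving $\mathcal{R}_T = \widetilde{\mathcal{O}}(\sigma K^{1-1/\alpha} T^{1/\alpha})$.

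For the stochastic bound, I use self-bounding paired with a stopping-time argument. Under \Cref{ass:unique best arm}, $\mathcal{R}_T = \sum_t \sum_{i\ne i^*} \mathbb{E}[x_{t,i}] \Delta_i \ge \Delta_{\min}\sum_t \mathbb{E}[1-x_{t,i^*}]$. The key observation is that once $S_t$ exceeds the threshold $S^\star := (4^{2-\alpha}\sigma^\alpha/\Delta_{\min})^{1/(\alpha-1)}$, conditioning on $\mathcal{F}_{t-1}$ gives
\begin{equation*}
\mathbb{E}\!\left[(\ell_{t,i_t}^{\Clip})^2(1-x_{t,i_t})^2 \mid \mathcal{F}_{t-1}\right] \le \sigma^\alpha S_t^{2-\alpha} \sum_i x_{t,i}(1-x_{t,i})^\alpha/4^{2-\alpha} \le 2\Delta_{\min}(1-x_{t,i^*}) S_t,
\end{equation*}
so the divergence per step is at most twice the per-step regret divided by $S_t$, and the standard self-bounding inequality closes into $\mathcal{R}_T = \mathcal{O}(K(\sigma^\alpha/\Delta_{\min})^{1/(\alpha-1)} \log T)$. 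The remaining task is to bound the rounds before $S_t$ reaches $S^\star$; here the clipped (rather than skipped) loss in \Cref{eq:St-def} is crucial, because whenever $|\ell_{t,i_t}| \ge C_{t,i_t}$ the clipped version equals $\pm C_{t,i_t}$, forcing $S_{t+1}^2 \ge S_t^2(1 + c(K\log T)^{-1})$. A stopping time $\tau := \min\{t : S_t \ge S^\star\}$ is therefore at most $\mathcal{O}(K\log T \cdot \log(\sigma^\alpha/\Delta_{\min}))$ rounds, which supplies the extra $\log(\sigma^\alpha/\Delta_{\min})$ factor.

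The hardest part will be the stopping-time analysis, which must carefully unwind the feedback loop between $S_t$ and the adaptive threshold $C_{t,i} = S_t/(4(1-x_{t,i}))$: I must verify that the multiplicative growth of $S_t$ induced by clipping indeed dominates even in the early rounds when nearly all losses are being skipped, so that $\tau$ is controlled both in expectation and with high probability. This is precisely the reason the algorithm feeds the \emph{clipped} loss (not the skipped one) into the learning-rate update, and is the source of the additional logarithmic factor that \Cref{sec:bregman} and \Cref{sec:skip} explicitly address.
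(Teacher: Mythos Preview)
Your decomposition and overall strategy match the paper: the three-way split into FTRL regret on $\widetilde{\bm\ell}_t$, skipping bias on non-optimal arms, and skipping bias on $i^\ast$ (killed by \Cref{ass:truncated-non-negative}); the auto-balancing argument telescoping $\textsc{Div}+\textsc{Shift}$ into $O(K\log T\cdot S_{T+1})$; and the self-bounding stopping-time scheme for the stochastic bound are all exactly how the paper proceeds. Two related technical points, however, are not yet right.

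\textbf{Adversarial skipping error.} Your claim that ``$\sum_t \sigma^\alpha C_{t,i_t}^{1-\alpha}$, after substituting $C_{t,i_t}\propto S_t$, is of the same order'' does not follow. After substitution the bound is essentially $\sigma^\alpha\sum_t S_t^{1-\alpha}$, and this random sum cannot be controlled from $\mathbb E[S_{T+1}]$ alone: an adversary may play near-zero losses for arbitrarily many rounds, keeping $S_t\equiv S_1$ while the per-round Markov bound $\sigma^\alpha C_{t,i}^{1-\alpha}$ is still driven by the global $\sigma$, so your upper bound becomes $\Theta(T)$. The paper's fix (\Cref{lemma:skiperr-bound}) is to introduce a \emph{fixed} threshold $M$ and split $|\textsc{SkipErr}_t|$ into a universal part $|\ell_{t,i_t}|\mathbbm 1[|\ell_{t,i_t}|\ge M]$, whose conditional expectation is at most $\sigma^\alpha M^{1-\alpha}$, and an action-dependent part that is at most $M$ per occurrence and can occur at most $O(K\log T\cdot\log M)$ times because each occurrence forces $S_{t+1}^2\ge S_t^2(1+(16K\log T)^{-1})$. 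Choosing $M=\sigma(K\log T)^{-1/\alpha}T^{1/\alpha}$ balances the two pieces. The same lemma, with $M=(4\sigma^\alpha/\Delta_{\min})^{1/(\alpha-1)}$, handles the stochastic skipping error.

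\textbf{Stochastic stopping time.} Your statement that $\tau=\min\{t:S_t\ge S^\star\}$ is at most $O(K\log T\cdot\log(\sigma^\alpha/\Delta_{\min}))$ \emph{rounds} is false: the multiplicative growth $S_{t+1}^2\ge S_t^2(1+c(K\log T)^{-1})$ is triggered only on skip events, and on non-skip rounds $S_t$ may not move at all, so $\tau$ itself is unbounded. What is bounded is the \emph{number of skip events} before $S_t$ reaches $S^\star$---this is what feeds into the skipping-error count above. For the $\textsc{Div}$ and $\textsc{Shift}$ contributions before $\tau$, the paper never bounds $\tau$; it uses the summation form of \Cref{lemma:bregman-bound,lemma:psi-shift-bound} directly, namely $\sum_{t<\tau}\textsc{Div}_t=O(S_\tau\cdot K\log T)\le O(S^\star\cdot K\log T)$ (since $S_{t+1}\le\sqrt 2\,S_t$ gives $S_\tau\le 2S^\star$), which holds regardless of how many rounds elapse.
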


The formal proof of this theorem is provided in \Cref{app:mainthm}. As shown in \Cref{table}, our \texttt{uniINF} automatically achieves nearly optimal instance-dependent and instance-independent regret guarantees in stochastic and adversarial environments, respectively. Specifically, under stochastic settings, $\texttt{uniINF}$ achieves the regret upper bound $\gO\left(K \left (\nicefrac{\sigma^\alpha}{\Delta_{\min}}\right )^{\nicefrac{1}{\alpha - 1}} \log T \cdot \log \nicefrac{\sigma^\alpha}{\Delta_{\min}}\right)$.
Compared to the instance-dependent lower bound $\Omega\left (\sum_{i\ne i^\ast}(\nicefrac{\sigma^\alpha}{\Delta_i})^{\nicefrac{1}{\alpha-1}}\log T\right )$ given by \citet{bubeck2013bandits}, our result matches the lower bound up to logarithmic factors $\log \nicefrac{\sigma^\alpha}{\Delta_{\min}}$ which is independent of $T$ when all $\Delta_i$'s are similar to the dominant sub-optimal gap $\Delta_{\min}$.
For adversarial environments, our algorithm achieves an $\wt{\gO} \left(\sigma K^{1-\nicefrac 1\alpha}T^{\nicefrac 1\alpha}\right)$ regret, which matches the instance-independent lower bound $\Omega\left (\sigma K^{1-\nicefrac 1\alpha}T^{\nicefrac 1\alpha}\right)$ given in \citet{bubeck2013bandits} up to logarithmic terms. Therefore,  the regret guarantees of \texttt{uniINF} are nearly-optimal in both stochastic and adversarial environments .

\subsection{Regret Decomposition}
\label{sec:regret-sketch}
In \Cref{sec:regret-sketch,sec:bregman,sec:psi-shift,sec:skip}, we sketch the proof of our BoBW result in \Cref{thm:main}.
To begin with, we decompose the regret $\mathcal R_T$ into a few terms and handle each of them separately. Denoting $\vy \in \R^K$ as the one-hot vector on the optimal action $i^\ast\in [K]$, \textit{i.e.}, $y_i := \mathbbm{1}[i = i^\ast]$, we know from \Cref{eq:regret} that
\begin{align*}
    \gR_T = \E\left[ \sum_{t=1}^T \langle \vx_t - \vy, \vl_t\rangle \right].
\end{align*}
As the log-barrier regularizer $\Psi_t$ is prohibitively large when close to the boundary of $\Delta^{[K]}$, we instead consider the adjusted benchmark $\wt{\vy}$ defined as $
    \tilde{y}_i := \begin{cases}
    \frac{1}{T} & i \neq i^\ast \\
    1  - \frac{K - 1}{T} & i = i^\ast
\end{cases}$ and rewrite $\gR_T$ as 
\begin{align}
\label{eq:regret-decomp}
    \gR_T = \underbrace{\E\left[ \sum_{t=1}^T \langle  \tilde\vy - \vy, \vl^\Skip_t \rangle \right]}_{\textsc{I. Benchmark Calibration Error}} + \underbrace{\E\left[ \sum_{t=1}^T \langle \vx_t - \tilde\vy,  \vl^\Skip_t \rangle \right]}_{\textsc{II. Main Regret}} + \underbrace{\E\left[ \sum_{t=1}^T \langle \vx_t - \vy, \vl_t - \vl^\Skip_t \rangle \right]}_{\textsc{III. Skipping Error}}.
\end{align}
We now go over each term one by one.

\textbf{Term I. Benchmark Calibration Error.}
As in a typical log-barrier analysis \citep{wei2018more,ito2021parameter}, the Benchmark Calibration Error is not the dominant term. This is because
\begin{align*}
    \E\left[ \sum_{t=1}^T \langle  \tilde\vy - \vy, \vl^\Skip_t \rangle \right] \leq \sum_{t = 1}^T \frac{K - 1}{T}\E[|\ell^\Skip_{t,i_t}|] \leq \sum_{t = 1}^T \frac{K - 1}{T}\E[|\ell_{t,i_t}|] \leq \sigma K,
\end{align*}
which is independent from $T$. Therefore, the key is analyzing the other two terms.

\textbf{Term II. Main Regret.}
By FTRL regret decomposition (see \Cref{lemma:ftrl-decomp} in the appendix; it is an extension of the classical FTRL bounds \citep[Theorem 28.5]{lattimore2020bandit}), we have 
\begin{align*}
    &\quad \textsc{Main Regret} = \E\left[ \sum_{t = 1}^T \langle \vx_t - \wt\vy , \vl^\Skip_t \rangle \right] = \E\left[ \sum_{t = 1}^T \langle \vx_t - \wt\vy , \wt{\vl}_t \rangle \right] \\
    &\leq \sum_{t=1}^T \E[D_{\Psi_t}(\vx_t, \vz_t)]  + \sum_{t=0}^{T-1}\E\left[\left(\Psi_{t+1}(\tilde \vy) - \Psi_{t}(\tilde \vy)\right) - \left(\Psi_{t+1}(\vx_{t+1}) - \Psi_{t}(\vx_{t+1})\right)\right],
\end{align*}
where $D_{\Psi_t}(\bm y,\bm x)=\Psi_t(\bm y)-\Psi_t(\bm x)-\langle \nabla \Psi_t(\bm x),\bm y-\bm x\rangle$ is the Bregman divergence induced by the $t$-th regularizer $\Psi_t$, and $\vz_t$ denotes the posterior optimal estimation in episode $t$, namely
\begin{align}
\label{eq:zt-def}
    \vz_t := \argmin_{\vz \in \Delta^{[K]}} \left(\sum_{s=1}^t \langle \wt{\vl}_s, \vz \rangle + \Psi_t(\vz)\right).
\end{align}
For simplicity, we use the abbreviation $\textsc{Div}_t := D_{\Psi_t}(\vx_t, \vz_t)$ for the Bregman divergence between $\vx_t$ and $\vz_t$ under regularizer $\Psi_t$, and let $\textsc{Shift}_t := \left[\left(\Psi_{t+1}(\tilde \vy) - \Psi_{t}(\tilde \vy)\right) - \left(\Psi_{t+1}(\vx_{t+1}) - \Psi_{t}(\vx_{t+1})\right)\right]$ be the $\Psi$-shifting term. Then, we can reduce the analysis of main regret to bounding the sum of Bregman divergence term $\E[\textsc{Div}_t]$ and $\Psi$-shifting term $\E[\textsc{Shift}_t]$.

\textbf{Term III. Skipping Error.}
To control the skipping error, we define $\textsc{SkipErr}_t := \ell_{t,i_t} - \ell_{t,i_t}^\Skip = \ell_{t,i_t} \mathbbm{1}[\lvert \ell_{t,i_t}\rvert\ge C_{t,i_t}]$ as the loss incurred by the skipping operation at episode $t$. Then we have
\begin{align*}
    \langle \vx_t - \vy, \vl_t - \vl_t^\Skip \rangle &= \sum_{i \in [K]} (x_{t,i} - y_i) \cdot (\ell_{t,i} - \ell_{t,i}^\Skip) \\
    &\leq  \sum_{i\ne i^\ast} x_{t,i} \cdot \left|\ell_{t,i} - \ell^\Skip_{t,i}\right| + (x_{t,i^\ast} - 1) \cdot \left(\ell_{t,i^\ast} - \ell_{t,i^\ast}^\Skip\right) \\
    &= \E\left [\lvert \textsc{SkipErr}_t\rvert \cdot \mathbbm{1}[i_t\ne i^\ast] \mid \mathcal F_{t-1}\right ] + (x_{t,i^\ast} - 1) \cdot \left(\ell_{t,i^\ast} - \ell_{t,i^\ast}^\Skip\right).
\end{align*}
Notice that the factor $(x_{t,i^\ast} - 1)$ in the second term is negative and $\gF_{t-1}$-measurable. Then we have
\begin{align*}
    \E\left[\ell_{t,i^\ast} - \ell_{t,i^\ast}^\Skip \middle| \mathcal F_{t-1}\right] & = \E\left[\mathbbm{1}[\lvert \ell_{t,i^\ast} \rvert \geq C_{t,i^\ast}]\cdot \ell_{t,i^\ast} \right]  \ge 0,
\end{align*}
where the inequality is due to the truncated non-negative assumption (\Cref{ass:truncated-non-negative}) of the optimal arm $i^\ast$. Therefore, we have $\mathbb E[(x_{t,i^\ast} - 1) \cdot (\ell_{t,i^\ast} - \ell_{t,i^\ast}^\Skip)\mid \mathcal F_{t-1}] \le 0$ and thus
\begin{align}
\label{eq:Bt-bound-by-ht}
    \mathbb E[\langle \vx_t - \vy, \vl_t - \vl_t^\Skip \rangle \mid \mathcal F_{t-1}] & \leq  \mathbb E[\lvert \textsc{SkipErr}_t \rvert \cdot \mathbbm{1}[i_t \ne i^\ast] \mid \mathcal F_{t-1}],
\end{align}
which gives an approach to control the skipping error by the sum of skipping losses $\textsc{SkipErr}_t$'s where we pick a sub-optimal arm $i_t \ne i^\ast$. Formally, we give the following inequality:
\begin{align*}
    \textsc{Skipping Error} \leq \E\left[ \sum_{t=1}^T \lvert \textsc{SkipErr}_t \rvert \cdot \mathbbm{1}[i_t \neq i^\ast] \right].
\end{align*}
To summarize, the regret $\gR_T$ decomposes into the sum of Bregman divergence terms $\E[\textsc{Div}_t]$, $\Psi$-shifting terms $\E[\textsc{Shift}_t]$, and sub-optimal skipping losses $\E[|\textsc{SkipErr}_t|\cdot \mathbbm{1}[i_t \neq i^\ast]]$, namely
\begin{align}
\label{eq:definitions of Div Shift SkipErr}
\gR_T \leq \underbrace{\E\left[\sum_{t=1}^T \textsc{Div}_t\right]}_{\textsc{Bregman Divergence Terms}} + \underbrace{\E\left[\sum_{t=0}^{T-1} \textsc{Shift}_t\right]}_{\textsc{$\Psi$-Shifting Terms}} + \underbrace{\E\left[\sum_{t=1}^T |\textsc{SkipErr}_t| \cdot \mathbbm{1}[i_t \neq i^\ast]\right]}_{\textsc{Sub-Optimal Skipping Losses}} + \sigma K.
\end{align}
In \Cref{sec:bregman,sec:psi-shift,sec:skip}, we analyze these three key items and introduce our novel analytical techniques for both adversarial and stochastic cases. Specifically, we bound the Bregman divergence terms in \Cref{sec:bregman} (all formal proofs in \Cref{app:bregman}), the $\Psi$-shifting terms in \Cref{sec:psi-shift} (all formal proofs in \Cref{app:shift}), and the skipping error terms in \Cref{sec:skip} (all formal proofs in \Cref{app:skip}). Afterwards, to get our \Cref{thm:main}, putting \Cref{lemma:bregman-adv-bound,lemma:adv-shift,lemma:adv-skip} together gives the adversarial guarantee, while the stochastic guarantee follows from a combination of \Cref{lemma:sto-bregman,lemma:sto-shift-sum,lemma:sto-ht-sum}.

\subsection{Analyzing Bregman Divergence Terms}
\label{sec:bregman}
From the log-barrier regularizer defined in \Cref{eq:Psi-def}, we can explicitly write out $\textsc{Div}_t$ as
\begin{align*}
    \textsc{Div}_t
    &= D_{\Psi_t}(\bm x_t,\bm z_t)=S_t\sum_{i\in[K]}\left (-\log \frac{x_{t,i}}{z_{t,i}} + \frac{x_{t,i}}{z_{t,i}} - 1\right).
\end{align*}

To simplify notations, we define $w_{t,i} := \frac{x_{t,i}}{z_{t,i}} - 1$, which gives $\textsc{Div}_t = S_t \sum_{i=1}^K (w_{t,i} - \log(w_{t,i} + 1))$. Therefore, one natural idea is to utilize the inequality $x - \log(x + 1) \leq x^2$ for $x\in[-\nicefrac{1}{2}, \nicefrac{1}{2}]$. {To do so, we need to ensure $-\nicefrac 12\le w_{t,i}\le \nicefrac 12$, \textit{i.e.}, $\vz_t$ is multiplicatively close to $\vx_t$. If the losses are bounded, this property is achieved in previously papers \citep{lee2020closer,jin2020simultaneously}. However, it is non-trivial for heavy-tailed (thus unbounded) losses. By refined analysis on the calculation of $\vx$ and $\vz$ by KKT conditions, we develop novel technical tools to depict $\vz_t$ and provide the following important lemma in HTMABs. We refer readers to the proof of \Cref{lemma:constantly-apart-formal} for more technical details.}
\begin{lemma}[$\vz_t$ is Multiplicatively Close to $\vx_t$]
    \label{lemma:constantly-apart}
    $\frac{1}{2}x_{t,i} \leq z_{t,i} \leq 2 x_{t,i}$ for every $t \in [T]$ and $i \in [K]$.
\end{lemma}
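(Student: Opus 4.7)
The plan is to combine the KKT optimality conditions of the two FTRL subproblems defining $\vx_t$ and $\vz_t$ with an intermediate-value-theorem argument applied to a continuous deformation from $\vx_t$ to $\vz_t$.

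First, write out the KKT conditions: with dual multipliers $\lambda_x, \lambda_z$ for the two simplex constraints, stationarity yields
\begin{align*}
\frac{S_t}{x_{t,i}} = \sum_{s=1}^{t-1} \wt\ell_{s,i} - \lambda_x, \qquad \frac{S_t}{z_{t,i}} = \sum_{s=1}^{t} \wt\ell_{s,i} - \lambda_z.
\end{align*}
Setting $\delta := \lambda_x - \lambda_z$ and $\eta := 1/S_t$, subtracting yields the closed forms
\begin{align*}
z_{t,i} = \frac{x_{t,i}}{1 + \eta \delta x_{t,i}} \ (i \ne i_t), \qquad z_{t,i_t} = \frac{x_{t,i_t}}{1 + \eta x_{t,i_t}(\wt\ell_{t,i_t} + \delta)},
\end{align*}
with $\delta$ uniquely pinned down by $\sum_i z_{t,i} = 1$. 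The lemma thus reduces to bounding the ``adjustment factors'' $\eta \delta x_{t,i}$ (for $i \ne i_t$) and $\eta x_{t,i_t}(\wt\ell_{t,i_t} + \delta)$ (for $i = i_t$) inside $[-1/2, 1]$.

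Next, bound $\delta$ via implicit differentiation along a continuous deformation. Define $\vz(\tau)$ to be the FTRL solution when only a fraction $\tau \in [0, 1]$ of $\wt\vl_t$ is applied, so that $\vz(0) = \vx_t$, $\vz(1) = \vz_t$, and the corresponding multiplier $\delta(\tau)$ is continuously differentiable with $\delta(0) = 0$. Differentiating $\sum_i z_{t,i}(\tau) = 1$ in $\tau$ and solving yields the clean formula
\begin{align*}
\delta'(\tau) = -\frac{z_{t,i_t}(\tau)^2 \, \wt\ell_{t,i_t}}{\sum_j z_{t,j}(\tau)^2}.
\end{align*}
Combined with the adaptive skipping bound $|\eta \wt\ell_{t,i_t} x_{t,i_t}| = |\ell^\Skip_{t,i_t}|/S_t < 1/(4(1-x_{t,i_t}))$ from \Cref{eq:C-def}, the crucial $(1-x_{t,i_t})^{-1}$ scaling of the threshold---together with $x_{t,i} \le 1 - x_{t,i_t}$ for $i \ne i_t$---allows the IVT to conclude that $|\eta \delta(\tau) x_{t,i}|$ remains suitably bounded along the path.

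Finally, conclude via case analysis on $x_{t,i_t}$. For $i \ne i_t$, the bound on $\delta$ gives $z_{t,i}/x_{t,i} = 1/(1 + \eta \delta x_{t,i}) \in [2/3, 2] \subset [1/2, 2]$. For $i = i_t$: when $x_{t,i_t} \ge 2/3$, the mass conservation $\sum_i(z_{t,i}-x_{t,i}) = 0$ together with $|z_{t,i} - x_{t,i}| \le x_{t,i}$ for $i \ne i_t$ yields $|z_{t,i_t}-x_{t,i_t}| \le 1-x_{t,i_t} \le x_{t,i_t}/2$, whence $z_{t,i_t} \in [x_{t,i_t}/2, 3x_{t,i_t}/2]$; when $x_{t,i_t} < 2/3$, the skipping bound $|\eta \wt\ell_{t,i_t} x_{t,i_t}| < 3/4$ combined with the bound on $\delta$ places $z_{t,i_t}/x_{t,i_t}$ in $[1/2, 2]$ directly via the closed form. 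The main obstacle is the middle step: when $x_{t,i_t}$ is close to $1$, the importance-weighted loss $\wt\ell_{t,i_t}$ can be extremely large (since $C_{t,i_t}$ scales as $1/(1-x_{t,i_t})$), and controlling $\delta$ requires the careful implicit-differentiation calculation combined with the specific $(1-x_{t,i})^{-1}$ scaling of the skipping threshold in \Cref{eq:C-def}---precisely the design that makes the IVT argument close.
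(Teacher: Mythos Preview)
Your setup is correct---the KKT closed forms and the implicit-differentiation identity $\delta'(\tau)=-\wt\ell_{t,i_t}\,z_{t,i_t}(\tau)^2/\sum_j z_{t,j}(\tau)^2$ are exactly right---but the central ``IVT'' step does not close as written. To bound $\lvert\eta\delta\, x_{t,i}\rvert$ for $i\ne i_t$ by integrating $\delta'$, you must control the ratio $z_{t,i_t}(\tau)^2/\sum_j z_{t,j}(\tau)^2$ along the whole path, and this ratio depends on the very bounds you are trying to establish. When $\wt\ell_{t,i_t}>0$ the circularity can indeed be broken: monotonicity of the path gives $z_{t,i_t}(\tau)\le x_{t,i_t}$ and $z_{t,j}(\tau)\ge x_{t,j}$ for $j\ne i_t$, so the ratio is at most $x_{t,i_t}^2/\sum_j x_{t,j}^2$ and a short calculation then yields $\lvert\eta\delta\, x_{t,i}\rvert<\tfrac12$. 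But when $\wt\ell_{t,i_t}<0$ the monotonicity reverses, and neither the trivial bound $\lvert\delta\rvert\le\lvert\wt\ell_{t,i_t}\rvert$ (which only gives $\eta\delta\, x_{t,i}\le 1/(4x_{t,i_t})$, insufficient once $x_{t,i_t}<\tfrac14$) nor a bootstrap on the ratio suffices. Your proposal asserts the conclusion here without supplying the missing inequality. A smaller issue: the single threshold $2/3$ in your $i=i_t$ case analysis does not cover both signs uniformly (for negative losses with $x_{t,i_t}\in(1/2,2/3)$, the stated ``$<3/4$'' bound only yields $z_{t,i_t}/x_{t,i_t}<4$).

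The paper's proof avoids the circularity by a different mechanism: rather than integrate along the deformation, it works at the endpoint $\tau=1$ and constructs explicit \emph{test multipliers} $\bar Z_t$ (separately for each sign of the loss and each side of the two-sided inequality), then verifies by direct algebraic computation that $\sum_i z_{t,i}(\bar Z_t)\le 1$ or $\ge 1$ at the test value; monotonicity of the map $\lambda\mapsto\sum_i z_{t,i}(\lambda)$ then pins down the true multiplier $\wt Z_t$ on the correct side of $\bar Z_t$. Your derivative formula does appear in the paper, but only in the \emph{next} lemma (bounding $\textsc{Div}_t$), after multiplicative closeness has already been secured by the test-multiplier technique.
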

\Cref{lemma:constantly-apart} implies $w_{t,i} \in [-\nicefrac{1}{2}, \nicefrac{1}{2}]$. Hence $\textsc{Div}_t=S_t \sum_{t=1}^K (w_{t,i}-\log(w_{t,i}+1))\le S_t \sum_{t=1}^K w_{t,i}^2$.
Conditioning on the natural filtration $\gF_{t-1}$, $w_{t,i}$ is fully determined by feedback $\wt{\vl}_t$ in episode $t$, which allows us to give a precise depiction of $w_{t,i}$ via calculating the partial derivative ${\partial w_{t,i}}/{\partial \wt{\ell}_{t,j}}$ and invoking the intermediate value theorem. The detailed procedure is included as \Cref{lemma:w-expression-formal} in the appendix, which further results in the following lemma on the Bregman divergence term $\textsc{Div}_t$.

\begin{lemma}[Upper Bound of $\textsc{Div}_t$]
\label{lemma:bregman-bound}
For every $t\in [T]$, we can bound $\textsc{Div}_t$ as
\begin{equation*}
    \textsc{Div}_t = \gO\left( S_t^{-1} (\ell^\Skip_{t,i_t})^2(1 - x_{t,i_t})^2\right), \quad 
    \sum_{\tau=1}^t \textsc{Div}_\tau 
    = \gO\left(S_{t+1} \cdot K \log T\right).
\end{equation*}
\end{lemma}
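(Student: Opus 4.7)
\textbf{Proof Plan for \Cref{lemma:bregman-bound}.} The first bound (on a single $\textsc{Div}_t$) is the real substance; the summation follows from it by a telescoping argument using the auto-balancing learning rate.

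\textbf{Step 1 — Express $\vz_t$ via KKT.} The FTRL problems defining $\vx_t$ and $\vz_t$ differ only by the extra loss $\wt{\vl}_t$, which has a single nonzero coordinate $\wt\ell_{t,i_t}=\ell^{\Skip}_{t,i_t}/x_{t,i_t}$. Writing Lagrangian multipliers $\mu$ (for $\vx_t$) and $\nu$ (for $\vz_t$) for the simplex constraint, the KKT conditions for the log-barrier regularizer give $x_{t,i}=S_t/(L_{t-1,i}+\mu)$ and $z_{t,i}=S_t/(L_{t-1,i}+\wt\ell_{t,i_t}\mathbbm{1}[i=i_t]+\nu)$, where $L_{t-1,i}=\sum_{s<t}\wt\ell_{s,i}$. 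Setting $\Delta:=\nu-\mu$ and rearranging, one obtains the \emph{exact} identity
\[
w_{t,i}=\frac{x_{t,i}}{z_{t,i}}-1=\bigl(\wt\ell_{t,i_t}\mathbbm{1}[i=i_t]+\Delta\bigr)\frac{x_{t,i}}{S_t}.
\]
By \Cref{lemma:constantly-apart}, $w_{t,i}\in[-\tfrac12,\tfrac12]$, and since $u-\log(1+u)\le u^2$ on this interval, $\textsc{Div}_t\le S_t\sum_{i}w_{t,i}^2$.

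\textbf{Step 2 — Pin down $\Delta$ via IVT.} Thinking of $\Delta$ as a function of the scalar $\wt\ell_{t,i_t}$ (with $\Delta=0$ when $\wt\ell_{t,i_t}=0$), the normalization $\sum_i z_{t,i}=1$ gives an implicit equation whose derivative $\partial\Delta/\partial\wt\ell_{t,i_t}$ can be computed in closed form. Applying the mean value theorem between $0$ and the true $\wt\ell_{t,i_t}$, we obtain $\Delta=-\wt\ell_{t,i_t}\cdot\phi$ for some $\phi$ that equals $\widetilde x_{i_t}^2/\sum_j \widetilde x_j^2$ evaluated at an intermediate point $\widetilde{\vx}$ on the segment from $\vx_t$ to $\vz_t$. \Cref{lemma:constantly-apart} shows $\widetilde x_i\in[\tfrac12 x_{t,i},2x_{t,i}]$, so $\phi=\Theta(a/(a+b))$ up to absolute constants, where $a:=x_{t,i_t}^2$ and $b:=\sum_{i\ne i_t}x_{t,i}^2$.

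\textbf{Step 3 — Compute $\sum_i w_{t,i}^2$ and invoke the key inequality.} Substituting into Step 1,
\[
\sum_i w_{t,i}^2=\frac{\wt\ell_{t,i_t}^2}{S_t^2}\Bigl[(1-\phi)^2 a+\phi^2 b\Bigr]=\Theta(1)\cdot\frac{\wt\ell_{t,i_t}^2}{S_t^2}\cdot\frac{ab}{a+b}=\Theta(1)\cdot\frac{(\ell^{\Skip}_{t,i_t})^2}{S_t^2}\cdot\frac{b}{a+b},
\]
where we used $\wt\ell_{t,i_t}^2=(\ell^{\Skip}_{t,i_t})^2/a$. The key observation — and the main obstacle of the proof — is the elementary inequality
\[
\frac{b}{a+b}\le 2\,(1-x_{t,i_t})^2,
\]
which I verify as follows. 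Writing $p=x_{t,i_t}$, $q=1-p$, the bound $\sum_{i\ne i_t}x_{t,i}^2\le(\max_{i\ne i_t}x_{t,i})\cdot q\le q^2$ gives $b\le q^2$. If $q\ge 1/\sqrt{2}$ the RHS dominates trivially; otherwise the desired inequality reduces to $1-2q^2\le 2(1-q)^2$, i.e.\ $(2q-1)^2\ge 0$. Plugging in yields $\textsc{Div}_t=\gO(S_t^{-1}(\ell^{\Skip}_{t,i_t})^2(1-x_{t,i_t})^2)$, the first half of the lemma.

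\textbf{Step 4 — Telescoping for the summation bound.} Since $|\ell^{\Skip}_{t,i_t}|\le|\ell^{\text{clip}}_{t,i_t}|$ (skipping replaces out-of-range values by $0$ while clipping replaces them by $\pm C_{t,i_t}$), the first bound implies $\textsc{Div}_t=\gO(S_t^{-1}(\ell^{\text{clip}}_{t,i_t})^2(1-x_{t,i_t})^2)$. By the update \Cref{eq:St-def}, this equals $\gO(K\log T\cdot(S_{t+1}^2-S_t^2)/S_t)$. The threshold $C_{t,i_t}=S_t/(4(1-x_{t,i_t}))$ guarantees $(\ell^{\text{clip}}_{t,i_t})^2(1-x_{t,i_t})^2\le S_t^2/16$, so $S_{t+1}\le\sqrt{2}\,S_t$; hence $(S_{t+1}^2-S_t^2)/S_t=(S_{t+1}-S_t)(S_{t+1}+S_t)/S_t\le 3(S_{t+1}-S_t)$. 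Summing telescopes to $\sum_{\tau=1}^t\textsc{Div}_\tau=\gO(K\log T\cdot(S_{t+1}-S_1))=\gO(S_{t+1}\cdot K\log T)$, completing the lemma.
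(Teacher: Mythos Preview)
Your proof follows essentially the same route as the paper's: both express $w_{t,i}$ via KKT, apply the mean value theorem to pin down the multiplier shift, invoke \Cref{lemma:constantly-apart} to control the intermediate point, reduce to bounding $b/(a+b)$ with $a=x_{t,i_t}^2$ and $b=\sum_{i\ne i_t}x_{t,i}^2$, and then telescope using the learning-rate update. Your elementary inequality $b/(a+b)\le 2(1-x_{t,i_t})^2$ in Step~3 is in fact a cleaner version of the paper's case analysis on $x_{t,i_t}\lessgtr\tfrac12$ (which yields constant~$8$).

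One small gap to patch: in Step~2 you only record $\phi=\Theta(a/(a+b))$, but this alone does not give $(1-\phi)^2a+\phi^2b=\gO(ab/(a+b))$ --- when $a\gg b$, a multiplicative bound on $\phi$ leaves $1-\phi$ essentially unconstrained, and the term $(1-\phi)^2a$ can dominate. You also need $1-\phi=\gO(b/(a+b))$, which follows immediately from the symmetric identity $1-\phi=\sum_{k\ne i_t}\widetilde x_k^{\,2}\big/\sum_k\widetilde x_k^{\,2}$ together with \Cref{lemma:constantly-apart}; with both bounds in hand your $\Theta(1)$ claim goes through with an absolute constant.
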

Compared to previous bounds on $\textsc{Div}_t$, \Cref{lemma:bregman-bound} contains an $(1-x_{t,i_t})^2$ that is crucial for our instance-dependent bound and serves as a main technical contribution, as we sketched in \Cref{sec:log-barrier}.

\noindent\textbf{Adversarial Cases.}
By definition of $S_t$ in \Cref{eq:St-def}, we can bound $S_{T+1}$ as in the following lemma.
\begin{lemma}[Upper Bound for $S_{T+1}$]
\label{lemma:adv-S-bound}
    The expectation of $S_{T+1}$ can be bounded by
    \begin{align}
        \E[S_{T+1}] \leq 2 \cdot \sigma (K \log T)^{-\nicefrac{1}{\alpha}} T^{\nicefrac{1}{\alpha}}.
    \end{align}
\end{lemma}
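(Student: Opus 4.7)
The plan is to convert the recursion on $S_t^2$ into a discrete ``differential inequality'' for $v_t:=\E[S_t^2]$ and then read off $\E[S_{T+1}]$ via Jensen. Telescoping \Cref{eq:St-def} gives
\[
S_{T+1}^2 \;=\; S_1^2 \;+\; \frac{1}{K\log T}\sum_{t=1}^{T} \bigl(\ell_{t,i_t}^{\text{clip}}\bigr)^2 (1-x_{t,i_t})^2,
\]
so it suffices to control the conditional expectation of each summand. I would then pass from $\E[S_{T+1}^2]$ to $\E[S_{T+1}]$ at the end by $\E[S_{T+1}]\le\sqrt{\E[S_{T+1}^2]}$.

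\textbf{Per-step bound via the clipping threshold.} Since $|\ell^{\text{clip}}_{t,i}|\le \min(|\ell_{t,i}|,C_{t,i})$, the standard interpolation $|\ell^{\text{clip}}_{t,i}|^{2}\le C_{t,i}^{\,2-\alpha}|\ell_{t,i}|^{\alpha}$ holds (checking the two cases $|\ell_{t,i}|\lessgtr C_{t,i}$). Conditioning on $\gF_{t-1}$, both $S_t$ and $\vx_t$ are deterministic, $i_t\sim\vx_t$, and $\ell_{t,i}$ has $\alpha$-th moment $\le\sigma^\alpha$. Plugging in $C_{t,i}=S_t/(4(1-x_{t,i}))$ and using $(1-x_{t,i})^{\alpha}\le 1$ together with $\sum_i x_{t,i}=1$ gives
\[
\E\!\left[(\ell_{t,i_t}^{\text{clip}})^{2}(1-x_{t,i_t})^{2}\,\big|\,\gF_{t-1}\right]
\;\le\; \sigma^{\alpha}\,\frac{S_t^{\,2-\alpha}}{4^{\,2-\alpha}}\sum_{i} x_{t,i}(1-x_{t,i})^{\alpha}
\;\le\; \frac{\sigma^{\alpha}}{4^{\,2-\alpha}}\,S_t^{\,2-\alpha}.
\]
This is the crucial cancellation: the explicit $(1-x_{t,i})^2$ from the update exactly counterbalances the $(1-x_{t,i})^{-(2-\alpha)}$ coming from $C_{t,i}^{\,2-\alpha}$, and the remaining $(1-x_{t,i})^{\alpha}$ is harmless. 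Taking total expectation and applying Jensen ($\E[S_t^{\,2-\alpha}]\le v_t^{(2-\alpha)/2}$ since $(2-\alpha)/2\in[0,\tfrac12]$), I obtain the scalar recursion
\[
v_{t+1} - v_t \;\le\; A\, v_t^{(2-\alpha)/2}, \qquad A:=\frac{\sigma^{\alpha}}{4^{\,2-\alpha}\,K\log T}.
\]

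\textbf{Solving the recursion.} I would handle this via concavity of $x\mapsto x^{\alpha/2}$ (valid since $\alpha/2\le 1$):
\[
v_{t+1}^{\alpha/2}-v_t^{\alpha/2}\;\le\;\tfrac{\alpha}{2}\,v_t^{\alpha/2-1}(v_{t+1}-v_t)\;\le\;\tfrac{\alpha}{2}\,v_t^{\alpha/2-1}\cdot A\,v_t^{(2-\alpha)/2}\;=\;\tfrac{A\alpha}{2},
\]
a telescope that yields $v_{T+1}^{\alpha/2}\le v_1^{\alpha/2}+TA\alpha/2$. Finally, combining $\E[S_{T+1}]\le v_{T+1}^{1/2}=(v_{T+1}^{\alpha/2})^{1/\alpha}$ with the subadditivity $(a+b)^{1/\alpha}\le a^{1/\alpha}+b^{1/\alpha}$ (since $1/\alpha\le 1$) and $v_1=16$:
\[
\E[S_{T+1}] \;\le\; 4 \;+\; \Bigl(\tfrac{T\alpha\,\sigma^\alpha}{2\cdot 4^{\,2-\alpha}\,K\log T}\Bigr)^{1/\alpha}
\;\le\; 4 \;+\; \sigma\,(K\log T)^{-1/\alpha}\,T^{1/\alpha},
\]
where the constant $(\alpha/(2\cdot 4^{2-\alpha}))^{1/\alpha}$ is routinely checked to be $\le 1$ for $\alpha\in(1,2]$. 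For $T$ large enough that the second term dominates $4$, the stated factor of $2$ absorbs the constant $4$.

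\textbf{Anticipated obstacle.} The only delicate step is the per-step cancellation: one must be careful that the factor $(1-x_{t,i_t})^{2}$ coming from the learning-rate update is taken against $C_{t,i_t}^{\,2-\alpha}$ rather than $C_{t,i}^{\,2-\alpha}$, i.e.\ the \emph{random} arm index. Written out as $\sum_i x_{t,i}(1-x_{t,i})^{2}C_{t,i}^{\,2-\alpha}$, the powers of $(1-x_{t,i})$ combine as $2-(2-\alpha)=\alpha\ge 0$, which is exactly what makes the bound uniform in $\vx_t$; had \Cref{eq:St-def} used any other power, this clean cancellation (and hence the final $T^{1/\alpha}$ rate) would fail.
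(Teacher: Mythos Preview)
Your argument is correct and takes a genuinely different route from the paper's. The paper proceeds via a \emph{pathwise self-bounding} inequality: it observes that on every sample path $|\ell_{t,i_t}^{\Clip}|(1-x_{t,i_t})\le S_t/4\le S_{T+1}/4$, so
\[
S_{T+1}^2 \;\le\; S_1^2 + (K\log T)^{-1}\,(S_{T+1}/4)^{2-\alpha}\sum_{t=1}^T |\ell_{t,i_t}|^{\alpha},
\]
rearranges to $S_{T+1}^{\alpha}\lesssim (K\log T)^{-1}\sum_t |\ell_{t,i_t}|^{\alpha}$, and only then takes a single expectation (using convexity of $x\mapsto x^{\alpha}$ to pass from $\E[S_{T+1}^{\alpha}]$ to $\E[S_{T+1}]^{\alpha}$). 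By contrast, you take expectations \emph{step by step}, deriving the scalar recursion $v_{t+1}-v_t\le A\,v_t^{(2-\alpha)/2}$ for $v_t=\E[S_t^2]$ and integrating it via the concavity of $x^{\alpha/2}$. The paper's route is shorter (one global inequality, one Jensen), while yours is more granular and makes explicit the per-round cancellation between $(1-x_{t,i})^2$ and $C_{t,i}^{2-\alpha}$; either delivers the same $T^{1/\alpha}$ rate.

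One small caveat: your final display gives $\E[S_{T+1}]\le 4+\sigma(K\log T)^{-1/\alpha}T^{1/\alpha}$, and the claim that the factor $2$ ``absorbs'' the additive $4$ is not literally true for all $(\sigma,K,T)$---it requires $\sigma(K\log T)^{-1/\alpha}T^{1/\alpha}\ge 4$. This is harmless for the downstream regret bounds (the extra $4$ contributes only $\O(K\log T)$ after multiplying by $K\log T$ in \Cref{lemma:bregman-adv-bound,lemma:adv-shift}), but you should state it as $\E[S_{T+1}]\le 4+\sigma(K\log T)^{-1/\alpha}T^{1/\alpha}$ rather than forcing the exact constant in the lemma.
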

Combining this upper-bound on $\E[S_{T+1}]$ with \Cref{lemma:bregman-bound}, we can control the expected sum of Bregman divergence terms $\E[\sum_{t=1}^T \textsc{Div}_t]$ in adversarial environments, as stated in the following theorem.
\begin{theorem}[Adversarial Bounds for Bregman Divergence Terms]
\label{lemma:bregman-adv-bound}
In adversarial environments, the sum of Bregman divergence terms can be bounded by
\begin{align*}
    \E\left[\sum_{t=1}^T \textsc{Div}_t \right] = \wt{\gO}\left(\sigma K^{1 - \nicefrac{1}{\alpha}}T^{\nicefrac{1}{\alpha}}\right).
\end{align*}
\end{theorem}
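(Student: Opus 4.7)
The plan is to combine the two inequalities already available: the pathwise telescoping bound
\[
\sum_{\tau=1}^{T} \textsc{Div}_\tau \;=\; \gO\bigl(S_{T+1} \cdot K \log T\bigr)
\]
from the second conclusion of \Cref{lemma:bregman-bound}, together with the expectation bound $\E[S_{T+1}] \leq 2\sigma (K\log T)^{-\nicefrac{1}{\alpha}} T^{\nicefrac{1}{\alpha}}$ from \Cref{lemma:adv-S-bound}. The goal is to substitute one into the other cleanly.

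Concretely, I would first apply \Cref{lemma:bregman-bound} with $t=T$ and observe that the inequality holds along every realization of the random play, since it is derived from the per-round bound $\textsc{Div}_t = \gO(S_t^{-1}(\ell_{t,i_t}^{\Skip})^2 (1-x_{t,i_t})^2)$ combined with the update rule of $S_t^2$ in \Cref{eq:St-def} (and the elementary fact that $(\ell_{t,i_t}^{\Skip})^2 \leq (\ell_{t,i_t}^{\Clip})^2$, so the telescoping step connects $\textsc{Div}_t$ to $S_{t+1}^2 - S_t^2$). Because the hidden constant in the $\gO$ notation is universal and the factor $K\log T$ is deterministic, both can be pulled outside the expectation, yielding
\[
\E\left[\sum_{t=1}^{T} \textsc{Div}_t\right] \;=\; \gO\bigl(\E[S_{T+1}] \cdot K \log T\bigr).
\]

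Next I would plug in the bound on $\E[S_{T+1}]$ from \Cref{lemma:adv-S-bound} to obtain
\[
\E\left[\sum_{t=1}^{T} \textsc{Div}_t\right] \;=\; \gO\!\left(\sigma (K\log T)^{-\nicefrac{1}{\alpha}} T^{\nicefrac{1}{\alpha}} \cdot K \log T\right) \;=\; \wt{\gO}\bigl(\sigma K^{1-\nicefrac{1}{\alpha}} T^{\nicefrac{1}{\alpha}}\bigr),
\]
where the last step absorbs the leftover $(\log T)^{1-\nicefrac{1}{\alpha}}$ factor into the $\wt{\gO}$ notation. This matches the claim.

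There is no real obstacle in the combination itself, as it is essentially one line of substitution; the only subtlety is to verify that the constant hidden in the $\gO$ of \Cref{lemma:bregman-bound} is non-random and independent of $T$ (so it survives the expectation unchanged). The substantive work sits upstream, in \Cref{lemma:adv-S-bound}, which is where the $\alpha$-th moment assumption $\E[|\ell_{t,i}|^\alpha]\le \sigma^\alpha$ has to be converted into a bound on $\E[S_{T+1}]$ via a Jensen-type argument applied to $S_{T+1}^2 = 4 + (K\log T)^{-1}\sum_t (\ell_{t,i_t}^{\Clip})^2 (1-x_{t,i_t})^2$ and the clipping inequality $(\ell_{t,i_t}^{\Clip})^2 \le C_{t,i_t}^{2-\alpha}|\ell_{t,i_t}|^\alpha$. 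Once that upstream bound is in hand, the present theorem follows immediately.
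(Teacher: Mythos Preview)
Your proposal is correct and follows essentially the same approach as the paper: apply the second conclusion of \Cref{lemma:bregman-bound} pathwise to get $\sum_{t=1}^T \textsc{Div}_t = \gO(S_{T+1}\cdot K\log T)$, take expectations, and substitute the bound on $\E[S_{T+1}]$ from \Cref{lemma:adv-S-bound}. The paper's proof is exactly this one-line combination, with explicit constants $4096$ and $8192$ in place of your $\gO$.
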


\noindent\textbf{Stochastic Cases.}
For $\O(\log T)$ bounds, we opt for the first statement of \Cref{lemma:bregman-bound}. By definition of $\ell_{t,i_t}^\Skip$, we immediately have $|\ell_{t,i_t}^\Skip| \leq C_{t,i_t}=\O(S_t (1-x_{t,i_t})^{-1})$. Further using $\E[\lvert \ell_{t,i_t}\rvert^\alpha]\le \sigma^\alpha$, we have $\E[\textsc{Div}_t \mid \mathcal F_{t-1}] = \gO(S_t^{1-\alpha} \sigma^{\alpha} (1 - x_{t,i^\ast}))$ (formalized as \Cref{lemma:sto-div-bound-formal} in the appendix).

We can now perform a \textit{stopping-time argument} for the sum of Bregman divergence terms in stochastic cases. Briefly, we pick a fixed constant $M > 0$. The expected sum of $\textsc{Div}_t$'s on those $\{t\mid S_t \ge M\}$ is then within $\O(M^{1-\alpha} \cdot T)$ according to \Cref{lemma:bregman-bound}. On the other hand, we claim that the sum of $\textsc{Div}_t$'s on those $\{t\mid S_t < M\}$ can also be well controlled because $\E[\textsc{Div}_t \mid \mathcal F_{t-1}] = \gO(\E[S_t^{1-\alpha} \sigma^{\alpha} (1 - x_{t,i^\ast}) \mid \gF_{t-1}])$.
The detailed analysis is presented in \Cref{app:bregman-sto}, and we summarize it as follows. Therefore, the sum of Bregman divergences in stochastic environments is also well-controlled.

\begin{theorem}[Stochastic Bounds for Bregman Divergence Terms]
\label{lemma:sto-bregman} 
In stochastic settings, the sum of Bregman divergence terms can be bounded by
\begin{align*}
    \E\left[\sum_{t=1}^T \textsc{Div}_t\right] = \gO\left(K \sigma^{\frac{\alpha}{\alpha - 1}}\Delta_{\min}^{-\frac{1}{\alpha - 1}}\log T + \frac{\gR_T}{4}\right).
\end{align*}
\end{theorem}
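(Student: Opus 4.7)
The strategy is a stopping-time split of the time horizon, together with the standard self-bounding trick for stochastic bandits. The key inputs are two forms of the Bregman-divergence bound:
(i) the deterministic bound $\textsc{Div}_t = \gO(S_t^{-1}(\ell^\Skip_{t,i_t})^2(1-x_{t,i_t})^2)$ and its telescoped version $\sum_{\tau\le t}\textsc{Div}_\tau = \gO(S_{t+1} K\log T)$ from \Cref{lemma:bregman-bound}; and (ii) the refined conditional bound $\E[\textsc{Div}_t\mid \gF_{t-1}] = \gO(S_t^{1-\alpha}\sigma^\alpha (1-x_{t,i^\ast}))$ sketched right before the statement (to be proven as the auxiliary \Cref{lemma:sto-div-bound-formal}). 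The latter is obtained by taking the conditional expectation of (i), using $\E[(\ell_{t,i}^\Skip)^2]\le C_{t,i}^{2-\alpha}\sigma^\alpha$ for $\alpha\in(1,2]$, substituting $C_{t,i}=\Theta(S_t/(1-x_{t,i}))$, and exploiting $\sum_i x_{t,i}(1-x_{t,i})^\alpha \le 2(1-x_{t,i^\ast})$ to concentrate the factor on the optimal coordinate.

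Fix a threshold $M>0$ to be tuned, and define the stopping time $T^\ast := \min\{t\in[T]:S_t\ge M\}$ (taking $T^\ast=T+1$ if no such $t$ exists). Since $\{S_t\}$ is non-decreasing, the set $\{t:S_t<M\}$ is exactly the prefix $\{1,\dots,T^\ast-1\}$. For the early phase I would invoke the telescoped bound from \Cref{lemma:bregman-bound} at index $t=T^\ast-1$, yielding $\sum_{t<T^\ast}\textsc{Div}_t = \gO(S_{T^\ast}K\log T)$. The crucial observation is that clipping prevents $S_t$ from overshooting: from \Cref{eq:St-def} and $|\ell_{t,i_t}^{\text{clip}}|\le C_{t,i_t}=S_t/(4(1-x_{t,i_t}))$ one gets $S_{T^\ast}^2 \le S_{T^\ast-1}^2(1+(16K\log T)^{-1})\le M^2(1+o(1))$, hence $S_{T^\ast}=\gO(M)$ and $\sum_{t<T^\ast}\textsc{Div}_t=\gO(MK\log T)$.

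For the late phase $\{t\ge T^\ast\}$ I apply the refined conditional bound: since $S_t\ge M$ there, $\sum_{t\ge T^\ast}\E[\textsc{Div}_t]\le \gO(M^{1-\alpha}\sigma^\alpha)\cdot \sum_{t=1}^T\E[1-x_{t,i^\ast}]$. The classical self-bounding inequality for stochastic bandits, together with $\gR_T\ge \Delta_{\min}\sum_{t=1}^T\E[1-x_{t,i^\ast}]$, converts this into $\gO(M^{1-\alpha}\sigma^\alpha \Delta_{\min}^{-1})\cdot \gR_T$. Choosing $M=C\cdot(\sigma^\alpha/\Delta_{\min})^{1/(\alpha-1)}$ with $C$ a sufficiently large absolute constant drives the prefactor of $\gR_T$ down to $1/4$, while the early-phase contribution becomes $\gO(K(\sigma^\alpha/\Delta_{\min})^{1/(\alpha-1)}\log T)$. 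Adding the two pieces gives the claimed bound.

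\textbf{Main obstacle.} The delicate step is controlling $S_{T^\ast}$: without the clipping in \Cref{eq:St-def}, a single huge heavy-tailed sample could push $S_{T^\ast}$ far beyond $M$ and the early-phase term would blow up. The clipping threshold $C_{t,i}=S_t/(4(1-x_{t,i}))$ was tailored exactly so that $(\ell_{t,i_t}^{\text{clip}})^2(1-x_{t,i_t})^2\le S_t^2/16$, keeping $S_{T^\ast}/M=1+o(1)$ and enabling the stopping-time argument to close. A secondary subtlety is that $T^\ast$ is a random stopping time, so one must verify that the telescoped bound from \Cref{lemma:bregman-bound} may be applied at this random index (which is fine, since the inequality holds pathwise for every $t$) and that the sum $\sum_{t\ge T^\ast}\E[1-x_{t,i^\ast}]$ may be upper bounded by the full sum $\sum_{t=1}^T\E[1-x_{t,i^\ast}]$ before invoking self-bounding.
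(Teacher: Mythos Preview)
Your proposal is correct and follows essentially the same approach as the paper: define a stopping time based on a threshold $M \asymp (\sigma^\alpha/\Delta_{\min})^{1/(\alpha-1)}$, use the telescoped bound from \Cref{lemma:bregman-bound} together with the clipping-induced overshoot control $S_{T^\ast}=\gO(M)$ for the early phase, apply the conditional bound $\E[\textsc{Div}_t\mid\gF_{t-1}]=\gO(S_t^{1-\alpha}\sigma^\alpha(1-x_{t,i^\ast}))$ on the late phase, and close with the self-bounding inequality $\gR_T\ge\Delta_{\min}\sum_t\E[1-x_{t,i^\ast}]$. The obstacles you flag (overshoot control and the pathwise validity of the telescoped bound at a random index) are exactly the ones the paper handles, and your treatment of them is right.
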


\subsection{Analyzing $\Psi$-Shifting Terms}
\label{sec:psi-shift}
Since our choice of $\{S_t\}_{t\in[T]}$ is non-decreasing, the $\Psi$-shifting term $\Psi_{t+1}(\vx) - \Psi_t(\vx) \geq 0$ trivially holds for any $\vx \in \Delta^{[K]}$. We get the following lemma by algebraic manipulations.
\begin{lemma}[Upper Bound of $\textsc{Shift}_t$]
\label{lemma:psi-shift-bound}
For every $t \ge 0$, we can bound $\textsc{Shift}_t$ as
\begin{align*}
    \textsc{Shift}_t = \gO\left(S_t^{-1}(\ell^{\Clip}_{t,i_t})^2(1 - x_{t,i_t})^2\right), \quad \sum_{\tau=0}^t \textsc{Shift}_\tau = \gO\left(S_{t+1} \cdot K\log T\right).
\end{align*}
\end{lemma}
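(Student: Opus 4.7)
The proof is a direct computation using the explicit form of the log-barrier regularizer together with the learning-rate update rule in \Cref{eq:St-def}. First, I would expand
\[
\textsc{Shift}_t \;=\; \bigl(\Psi_{t+1}(\tilde\vy) - \Psi_t(\tilde\vy)\bigr) - \bigl(\Psi_{t+1}(\vx_{t+1}) - \Psi_t(\vx_{t+1})\bigr)
\;=\; (S_{t+1}-S_t)\sum_{i=1}^K \log\frac{x_{t+1,i}}{\tilde y_i},
\]
since $\Psi_t(\vx) = -S_t\sum_i \log x_i$ and only the coefficient $S_t$ changes with $t$. Because $\{S_t\}$ is non-decreasing by construction, the prefactor $(S_{t+1}-S_t)$ is non-negative, so it suffices to upper-bound the log-sum.

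Next, I would observe that $x_{t+1,i}\le 1$ and $\tilde y_i \ge 1/T$ by the definition of $\tilde\vy$, which gives $\log(x_{t+1,i}/\tilde y_i)\le \log(1/\tilde y_i)\le \log T$ term-by-term. Summing over $i\in[K]$ yields the uniform bound
\[
\textsc{Shift}_t \;\le\; (S_{t+1}-S_t)\cdot K\log T.
\]
Telescoping immediately gives $\sum_{\tau=0}^t \textsc{Shift}_\tau \le (S_{t+1}-S_1)\cdot K\log T = \gO(S_{t+1}\cdot K\log T)$, which is the second claim.

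For the per-round bound, I would combine the estimate above with the update rule $S_{t+1}^2 - S_t^2 = (\ell_{t,i_t}^{\Clip})^2(1-x_{t,i_t})^2(K\log T)^{-1}$. Factoring the difference of squares as $(S_{t+1}-S_t)(S_{t+1}+S_t)$ and using $S_{t+1}+S_t \ge 2S_t$ gives
\[
S_{t+1}-S_t \;\le\; \frac{(\ell_{t,i_t}^{\Clip})^2(1-x_{t,i_t})^2}{2 S_t\, K\log T}.
\]
Plugging back into the bound $\textsc{Shift}_t \le (S_{t+1}-S_t)K\log T$ cancels the $K\log T$ factor and produces $\textsc{Shift}_t = \gO\bigl(S_t^{-1}(\ell_{t,i_t}^{\Clip})^2(1-x_{t,i_t})^2\bigr)$, proving the first claim.

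\textbf{Main obstacle.} This lemma is essentially bookkeeping, so there is no deep obstacle; the only subtle point is justifying that the per-round and cumulative bounds are \emph{simultaneously} the right shape to balance the Bregman divergence bound in \Cref{lemma:bregman-bound}. This is precisely why the learning-rate update in \Cref{eq:St-def} uses $(\ell_{t,i_t}^{\Clip})^2(1-x_{t,i_t})^2$ rather than the skipped loss: the clipping ensures every round contributes to $S_t$, while the $(1-x_{t,i_t})^2$ factor is what matches the refined divergence estimate and, via the telescoping, produces the target $S_{T+1}K\log T$ scaling that \Cref{lemma:adv-S-bound} turns into the adversarial regret rate.
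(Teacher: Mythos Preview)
Your proposal is correct and essentially identical to the paper's proof: both expand $\textsc{Shift}_t$ as $(S_{t+1}-S_t)$ times a log-sum, bound the log-sum by $K\log T$ using $x_{t+1,i}\le 1$ and $\tilde y_i\ge 1/T$, then factor $S_{t+1}-S_t=(S_{t+1}^2-S_t^2)/(S_{t+1}+S_t)$ and apply the update rule \Cref{eq:St-def} for the per-round bound, and telescope for the cumulative bound. The only trivial slip is that the telescoping starts at $\tau=0$ with $S_0=0$, giving $S_{t+1}-S_0=S_{t+1}$ rather than $S_{t+1}-S_1$, which is immaterial for the $\gO(\cdot)$ claim.
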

Similarly, \Cref{lemma:psi-shift-bound} also contains a useful $(1-x_{t,i_t})^2$ term --- in fact, the two bounds in \Cref{lemma:psi-shift-bound} are extremely similar to those in \Cref{lemma:bregman-bound} and thus allow analogous analyses. This is actually an expected phenomenon, thanks to our auto-balancing learning rates introduced in \Cref{sec:learning-rate}.

\noindent\textbf{Adversarial Cases.} Again, we utilize the $\E[S_{T+1}]$ bound in \Cref{lemma:adv-S-bound} and get the following theorem.
\begin{theorem}[Adversarial Bounds for $\Psi$-Shifting Terms]
\label{lemma:adv-shift}
In adversarial environments, the expectation of the sum of $\Psi$-shifting terms can be bounded by
\begin{align*}
    \E\left[ \sum_{t=0}^{T-1} \textsc{Shift}_t \right] \leq \E[S_{T+1} \cdot (K \log T)] = \wt{\gO}\left( \sigma K^{1-\nicefrac{1}{\alpha}}T^{\nicefrac{1}{\alpha}} \right).
\end{align*}
\end{theorem}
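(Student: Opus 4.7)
The plan is to reduce the theorem to a direct application of two previously established ingredients: \Cref{lemma:psi-shift-bound}, which provides a cumulative pathwise upper bound on the $\Psi$-shifting terms in terms of the learning-rate parameter, and \Cref{lemma:adv-S-bound}, which provides an expected upper bound on the final learning-rate parameter $S_{T+1}$.

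First, I would invoke the second (cumulative) statement of \Cref{lemma:psi-shift-bound} with $t=T-1$ to obtain, pathwise,
\[
\sum_{t=0}^{T-1} \textsc{Shift}_t \;\le\; \gO(S_T \cdot K\log T) \;\le\; \gO(S_{T+1} \cdot K\log T),
\]
where the last inequality uses the fact that, by the update rule in \Cref{eq:St-def}, the sequence $\{S_t\}_{t\ge 1}$ is non-decreasing. Taking expectations on both sides then yields
\[
\E\Big[\sum_{t=0}^{T-1} \textsc{Shift}_t\Big] \;\le\; \E[S_{T+1} \cdot (K\log T)],
\]
which is exactly the intermediate inequality asserted in the statement.

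Next, I would plug in the bound $\E[S_{T+1}] \le 2\sigma\,(K\log T)^{-1/\alpha} T^{1/\alpha}$ supplied by \Cref{lemma:adv-S-bound}, giving
\[
\E\Big[\sum_{t=0}^{T-1} \textsc{Shift}_t\Big] \;\le\; 2\sigma \,(K\log T)^{1-1/\alpha} T^{1/\alpha} \;=\; \wt{\gO}\big(\sigma K^{1-1/\alpha} T^{1/\alpha}\big),
\]
which matches the claimed rate after absorbing the $(\log T)^{1-1/\alpha}$ factor into $\wt{\gO}$.

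Since this theorem is essentially the composition of two lemmas whose proofs already encapsulate the substantive work---the careful telescoping enabled by the auto-balancing learning rate \Cref{eq:St-def} in the proof of \Cref{lemma:psi-shift-bound}, and the heavy-tail moment control on the squared clipped losses in the proof of \Cref{lemma:adv-S-bound}---the only real care needed here is bookkeeping: aligning the $S_T$-versus-$S_{T+1}$ indexing via monotonicity of $\{S_t\}$, and ensuring that the absolute constants hidden by the $\gO(\cdot)$ in \Cref{lemma:psi-shift-bound} are correctly absorbed into the final $\wt{\gO}$. There is thus no genuine technical obstacle at this step; the heavy lifting lives upstream, chiefly in the concavity/Jensen-type estimate behind \Cref{lemma:adv-S-bound} which converts the $\alpha$-th moment assumption $\E[|\ell_{t,i_t}|^\alpha]\le\sigma^\alpha$ into a rate-optimal control of $\E[S_{T+1}]$.
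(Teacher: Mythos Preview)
Your proposal is correct and follows essentially the same approach as the paper: apply the cumulative bound from \Cref{lemma:psi-shift-bound} at $t=T-1$, use monotonicity of $\{S_t\}$ to pass from $S_T$ to $S_{T+1}$, take expectations, and plug in \Cref{lemma:adv-S-bound}. The paper's formal proof is identical in structure (with the cumulative constant in \Cref{lemma:psi-shift-bound} actually equal to $1$, so no hidden $\gO$ is needed there).
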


\noindent\textbf{Stochastic Cases.}
Still similar to $\textsc{Div}_t$'s, we condition on $\gF_{t-1}$ and utilize $\lvert \ell_{t,i_t}^{\Clip}\rvert\le C_{t,i_t}$ to get $\E\left[ \textsc{Shift}_t \mid \gF_{t-1} \right] = \gO(S_t^{1-\alpha} \sigma^\alpha (1 - x_{t,i^\ast}))$ (formalized as \Cref{lemma:sto-shift-single-formal} in the appendix). Thus, for stochastic environments, a stopping-time argument similar to that of $\textsc{Div}_t$ yields \Cref{lemma:sto-shift-sum}.
\begin{theorem}[Stochastic Bounds for $\Psi$-shifting Terms]
\label{lemma:sto-shift-sum}
In stochastic environments, the sum of $\Psi$-shifting terms can be bounded by
\begin{align*}
    \E\left[\sum_{t=0}^{T-1} \textsc{Shift}_t\right] = \gO\left(K \sigma^{\frac{\alpha}{\alpha - 1}}\Delta_{\min}^{-\frac{1}{\alpha - 1}}\log T + \frac{\gR_T}{4}\right).
\end{align*}
\end{theorem}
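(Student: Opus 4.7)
The plan is to mirror the stopping-time argument used for the Bregman divergence terms in \Cref{lemma:sto-bregman}. This reuse is natural because the per-step and telescoping bounds in \Cref{lemma:psi-shift-bound} are structurally identical to those of \Cref{lemma:bregman-bound} (each features a $(1-x_{t,i_t})^2$ factor, a clipped/skipped loss, and the same deterministic envelope $\gO(S_{t+1}K\log T)$); the auto-balancing learning rate in \Cref{sec:learning-rate} was designed precisely to make the two quantities behave symmetrically.

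\textbf{Step 1: conditional per-step bound.} First I would condition on $\gF_{t-1}$ and take expectation in the single-round bound $\textsc{Shift}_t = \gO(S_t^{-1}(\ell_{t,i_t}^{\Clip})^2(1-x_{t,i_t})^2)$. Since $|\ell_{t,i}^{\Clip}|\le C_{t,i} = S_t/(4(1-x_{t,i}))$, the heavy-tail moment assumption gives $\E[(\ell_{t,i}^{\Clip})^2\mid \gF_{t-1}]\le \sigma^\alpha C_{t,i}^{2-\alpha}$. Averaging over the sampling probability $x_{t,i}$ and using $\sum_{i\in[K]} x_{t,i}(1-x_{t,i})^\alpha \le 2(1-x_{t,i^\ast})$ (split $i=i^\ast$ from $i\ne i^\ast$ and apply $(1-x_{t,i})^\alpha \le \min\{1,1-x_{t,i}\}$) yields
\[
\E[\textsc{Shift}_t \mid \gF_{t-1}] = \gO\bigl(\sigma^\alpha S_t^{1-\alpha}(1-x_{t,i^\ast})\bigr),
\]
which is exactly the claim made just after the theorem statement (formalized as \Cref{lemma:sto-shift-single-formal} in the appendix).

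\textbf{Step 2: stopping-time decomposition.} Next I would fix a threshold $M>0$ and split the horizon into $\{t:S_t<M\}$ and $\{t:S_t\ge M\}$. On the high-$S_t$ segment, Step 1 gives $\E[\textsc{Shift}_t\mid\gF_{t-1}]=\gO(\sigma^\alpha M^{1-\alpha}(1-x_{t,i^\ast}))$; summing in $t$ and invoking the stochastic identity $\gR_T=\E[\sum_t\sum_{i\ne i^\ast}x_{t,i}\Delta_i]\ge \Delta_{\min}\E[\sum_t(1-x_{t,i^\ast})]$ bounds this contribution by $\gO(\sigma^\alpha M^{1-\alpha}\gR_T/\Delta_{\min})$. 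On the low-$S_t$ segment, the telescoping bound in \Cref{lemma:psi-shift-bound} gives $\sum_\tau \textsc{Shift}_\tau=\gO(S_{t+1}K\log T)$; combined with the update \Cref{eq:St-def} and the identity $|\ell_{t,i_t}^{\Clip}|(1-x_{t,i_t})\le S_t/4$, one derives $S_{t+1}^2 \le S_t^2(1+1/(16K\log T))$, so $S_t$ can exceed $M$ by at most a multiplicative constant at the first crossing, and this segment contributes $\gO(MK\log T)$.

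\textbf{Step 3: balance and conclude.} Choosing $M=c(\sigma^\alpha/\Delta_{\min})^{1/(\alpha-1)}$ for a sufficiently large absolute constant $c$ makes the low-$S_t$ contribution $\gO(K\sigma^{\alpha/(\alpha-1)}\Delta_{\min}^{-1/(\alpha-1)}\log T)$ and the high-$S_t$ contribution $c^{1-\alpha}\gR_T\le \gR_T/4$, yielding the claim. The main obstacle is executing the stopping-time step cleanly: because $S_t$ is random, I would formally set $\tau:=\inf\{t:S_t\ge M\}$ (a $\gF_{t-1}$-stopping time) and combine the two segments through optional stopping, using the multiplicative growth control $S_\tau = \gO(M)$ to keep the deterministic telescoping envelope valid at the boundary. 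This step is the only place where the ``adaptive'' part of the adaptive skipping-clipping design (ensuring $S_t$ never jumps drastically) plays a decisive role.
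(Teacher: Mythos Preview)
Your proposal is correct and follows essentially the same route as the paper: the paper's formal proof (\Cref{lemma:sto-shift-sum-formal}) first establishes the per-step bound $\E[\textsc{Shift}_t\mid\gF_{t-1}]\le S_t^{1-\alpha}\sigma^\alpha(1-x_{t,i^\ast})$ (\Cref{lemma:sto-shift-single-formal}) via exactly the computation in your Step~1, then defines the stopping time $\gT_2=\inf\{t:S_t\ge M_2\}$ with $M_2$ chosen so that $M_2^{1-\alpha}\sigma^\alpha=\Delta_{\min}/4$, uses the multiplicative growth $S_{\gT_2}\le 2M_2$ and the telescoping bound from \Cref{lemma:psi-shift-bound} on the early segment, and the per-step bound plus $\gR_T\ge\Delta_{\min}\E[\sum_t(1-x_{t,i^\ast})]$ on the late segment. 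Your Steps~2--3 reproduce this argument with the same threshold (up to your constant $c$), the same crossing control, and the same self-bounding inequality.
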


\subsection{Analyzing Sub-Optimal Skipping Losses}
\label{sec:skip}

This section controls the sub-optimal skipping losses when we pick a sub-optimal arm $i_t\ne i^\ast$, \textit{i.e.},
\begin{equation*}
\sum_{t=1}^T \lvert \textsc{SkipErr}_t\cdot \mathbbm{1}[i_t\ne i^\ast]\rvert = \sum_{t=1}^T \lvert \ell_{t,i_t} - \ell^\Skip_{t,i_t}\rvert\cdot \mathbbm{1}[i_t\ne i^\ast] = \sum_{t=1}^T \lvert \ell_{t,i_t}\rvert \cdot \mathbbm 1[\lvert \ell_{t,i_t} \rvert \ge C_{t,i_t}]\cdot \mathbbm{1}[i_t\ne i^\ast].
\end{equation*}

For skipping errors, we need a more dedicated stopping-time analysis in both adversarial and stochastic cases.
Different from previous sections, it is now non-trivial to bound the sum of $\textsc{SkipErr}_t$'s on those $\{t\mid S_t < M\}$ where $M$ is the stopping-time threshold. However, a key observation is that whenever we encounter a non-zero $\textsc{SkipErr}_t$, $S_{t+1}$ will be $\Omega(1)$-times larger than $S_t$.
Thus the number of non-zero $\textsc{SkipErr}_t$'s before $S_t$ reaching $M$ is small.
Equipped with this observation, we derive the following lemma, whose formal proof is included in \Cref{app:skiperr-threshold}.
\begin{lemma}[Stopping-Time Argument for Skipping Losses]
\label{lemma:skiperr-bound}
Given a stopping-time threshold $M$, the total skipping loss on those $t$'s with $i_t\ne i^\ast$ is bounded by
\begin{align*}
    &\E\left[\sum_{t=1}^T |\textsc{SkipErr}_t| \cdot \mathbbm{1}[i_t \neq i^\ast]\right]   \leq M \left(\frac{\sigma^\alpha}{M^{\alpha}}\E\left[\sum_{t=1}^T\mathbbm{1}[i_t \neq i^\ast]\right] + 2 \cdot \frac{\log M}{\log\left(1+\frac{1}{16K\log T}\right)} + 1\right).
\end{align*}
\end{lemma}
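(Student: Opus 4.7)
The approach is to split every non-optimal skipping contribution by the magnitude of the realized loss and treat the two regimes with different tools. Concretely, I would write
\begin{equation*}
|\textsc{SkipErr}_t|\,\mathbbm{1}[i_t\ne i^\ast] \;=\; A_t + B_t,
\end{equation*}
where $A_t:=|\ell_{t,i_t}|\,\mathbbm{1}[|\ell_{t,i_t}|\ge M]\,\mathbbm{1}[i_t\ne i^\ast]$ captures the \emph{tail} regime and $B_t:=|\ell_{t,i_t}|\,\mathbbm{1}[C_{t,i_t}\le|\ell_{t,i_t}|<M]\,\mathbbm{1}[i_t\ne i^\ast]$ captures the \emph{bulk} regime. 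The plan is that $\E\sum_t A_t$ will produce the $M\cdot\sigma^\alpha/M^\alpha\cdot\E[\sum_t\mathbbm{1}[i_t\ne i^\ast]]$ term via a Markov-type estimate on the $\alpha$-moment, while $\sum_t B_t$ will be bounded pathwise by a stopping-time count of bulk events.

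For the tail part, I will condition on $\mathcal F_{t-1}$. Since $\vx_t$ is $\mathcal F_{t-1}$-measurable, $i_t\sim\vx_t$ is independent of $\vl_t$, and $\E|\ell_{t,i}|^\alpha\le\sigma^\alpha$ implies $\E[|\ell_{t,i}|\,\mathbbm{1}[|\ell_{t,i}|\ge M]]\le\sigma^\alpha M^{1-\alpha}$ for every suboptimal arm, I obtain
\begin{equation*}
\E[A_t\mid\mathcal F_{t-1}]\;\le\;\sigma^\alpha M^{1-\alpha}\sum_{i\ne i^\ast}x_{t,i}\;=\;\sigma^\alpha M^{1-\alpha}\,\mathbb P[i_t\ne i^\ast\mid\mathcal F_{t-1}].
\end{equation*}
Taking expectations and summing over $t$ reproduces the first term of the lemma after the rewriting $\sigma^\alpha M^{1-\alpha}=M\cdot\sigma^\alpha/M^\alpha$.

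The bulk part is the main obstacle, and I will handle it via a deterministic counting argument driven by the auto-balancing learning rate. Two observations are crucial. First, the defining conditions of a bulk event force $C_{t,i_t}\le|\ell_{t,i_t}|<M$, so by the definition of $C_{t,i_t}$ in \Cref{eq:C-def} we get $S_t=4(1-x_{t,i_t})C_{t,i_t}<4M$. Second, on \emph{any} skipping event the clipped loss satisfies $|\ell^{\text{clip}}_{t,i_t}|=C_{t,i_t}$, and since $C_{t,i_t}(1-x_{t,i_t})=S_t/4$, the recursion \Cref{eq:St-def} collapses to a fixed multiplicative step,
\begin{equation*}
S_{t+1}^2 \;=\; S_t^2 + \frac{C_{t,i_t}^2(1-x_{t,i_t})^2}{K\log T} \;=\; S_t^2\left(1+\tfrac{1}{16K\log T}\right).
\end{equation*}
Letting the bulk events (across any arm; those with $i_t\ne i^\ast$ are a subset) occur at $t_1<\cdots<t_N$, monotonicity of $\{S_t^2\}$ together with $S_1^2=16$ yields $S_{t_N}^2\ge 16\bigl(1+\tfrac{1}{16K\log T}\bigr)^{N-1}$, while the bulk constraint yields $S_{t_N}^2<16M^2$. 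Rearranging gives $N\le 1+\tfrac{2\log M}{\log(1+1/(16K\log T))}$ deterministically, and since every bulk event contributes at most $M$ to $\sum_t B_t$, the second and third terms of the lemma follow. Adding the tail and bulk estimates completes the proof. The hardest step is recognizing the cancellation $C_{t,i_t}(1-x_{t,i_t})=S_t/4$, which renders the $S^2$-update insensitive to the realized loss whenever skipping occurs and thereby makes the pathwise counting robust despite the heavy-tailed nature of the losses.
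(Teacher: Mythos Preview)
Your proposal is correct and follows essentially the same approach as the paper: split the skipping loss into a tail part ($|\ell_{t,i_t}|\ge M$, bounded via the $\alpha$-moment) and a residual part (bounded by $M$ times a count, using that each skipping event forces $S_{t+1}^2=S_t^2(1+1/(16K\log T))$ while simultaneously requiring $S_t<4M$). The paper's only cosmetic difference is phrasing the split as a ``universal'' plus ``action-dependent'' clipping error; one trivial slip in your write-up is that the decomposition should read ``$\le$'' rather than ``$=$'' (the case $M\le|\ell_{t,i_t}|<C_{t,i_t}$ has $\textsc{SkipErr}_t=0$ but $A_t>0$), which is harmless for the upper bound you need.
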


Equipped with this novel stopping-time analysis, it only remains to pick a proper threshold $M$ for \Cref{lemma:skiperr-bound}. It turns out that for adversarial and stochastic cases, we have to pick different $M$'s. Specifically, we achieve the following two theorems, whose proofs are in \Cref{app:adv-skip,app:sto-skip}.
\begin{theorem}[Adversarial Bounds for Skipping Losses]
\label{lemma:adv-skip}
By setting adversarial stopping-time threshold $M^{\text{adv}} := \sigma (K\log T)^{-\nicefrac{1}{\alpha}} T^{\nicefrac{1}{\alpha}}$, we have
\begin{align*}
    \E\left[\sum_{t=1}^T |\textsc{SkipErr}_t|\cdot \mathbbm{1}[i_t \neq i^\ast]\right] &= \wt{\gO} \left(\sigma K^{1-\nicefrac{1}{\alpha}} T^{\nicefrac{1}{\alpha}}\right).
\end{align*}
\end{theorem}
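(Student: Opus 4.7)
\textbf{Proof proposal for \Cref{lemma:adv-skip}.}
The plan is to instantiate the stopping-time bound from \Cref{lemma:skiperr-bound} with the specific choice $M = M^{\text{adv}} = \sigma (K\log T)^{-\nicefrac 1 \alpha} T^{\nicefrac 1 \alpha}$ and then estimate each of the three resulting terms separately. This is essentially a direct computation: \Cref{lemma:skiperr-bound} already absorbs all the non-trivial probabilistic content (the skipping-error tail bound and the geometric growth of $S_t$ on skip events), so the only work left is to check that $M^{\text{adv}}$ balances the two dominant contributions.

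First, using the trivial bound $\sum_{t=1}^T \mathbbm 1[i_t\ne i^\ast]\le T$, the first summand inside the parentheses contributes
\[
M\cdot \frac{\sigma^\alpha}{M^\alpha}\cdot T
= \frac{\sigma^\alpha T}{M^{\alpha-1}}
= \sigma\, K^{1-\nicefrac 1\alpha}\, T^{\nicefrac 1\alpha}\, (\log T)^{\nicefrac{\alpha-1}{\alpha}}
= \wt{\gO}\bigl(\sigma K^{1-\nicefrac 1\alpha}T^{\nicefrac 1\alpha}\bigr),
\]
after substituting the definition of $M^{\text{adv}}$. Second, for the logarithmic term I will use the elementary inequality $\log(1+x)\ge x/2$ valid for $x\in[0,1]$ (which applies since $1/(16K\log T)\le 1$) to get $1/\log(1+1/(16K\log T))\le 32 K\log T$. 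Combined with the crude bound $\log M^{\text{adv}} = \gO(\log(\sigma T))$, the middle term contributes
\[
M\cdot 2\cdot \frac{\log M}{\log(1+\tfrac{1}{16K\log T})}
\le 64\, K\, M\, \log T\, \log M
= \wt{\gO}\bigl(\sigma K^{1-\nicefrac 1\alpha}T^{\nicefrac 1\alpha}\bigr),
\]
again using the definition of $M^{\text{adv}}$ so that the leftover $(\log T)^{2-\nicefrac 1\alpha}$ factor is absorbed by $\wt\gO$. The trailing $M$ in \Cref{lemma:skiperr-bound} is dominated by either of the first two contributions. Summing the three pieces gives the claimed $\wt\gO(\sigma K^{1-\nicefrac 1\alpha}T^{\nicefrac 1\alpha})$ bound.

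The only mild subtlety is making the $\log M^{\text{adv}}$ factor legitimately a polylog in the problem parameters so that it can be hidden in $\wt\gO$; since $M^{\text{adv}}$ is a fixed polynomial of $\sigma, K, T$ (and $\sigma$ is treated as a problem constant, as throughout the paper), this is unproblematic. There is no truly hard step here: the whole point of introducing $M^{\text{adv}}$ is that it is the unique choice that equalizes the $M^{1-\alpha}\sigma^\alpha T$ contribution with the $M\cdot K\log T$ contribution (up to logs), and \Cref{lemma:skiperr-bound} was specifically designed to make this balancing argument trivial.
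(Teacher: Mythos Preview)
Your proposal is correct and follows essentially the same approach as the paper: instantiate \Cref{lemma:skiperr-bound} at $M=M^{\text{adv}}$, use the trivial bound $\sum_t\mathbbm{1}[i_t\ne i^\ast]\le T$ for the first term, lower-bound $\log(1+1/(16K\log T))$ by a $\Theta(1/(K\log T))$ quantity for the second, and absorb the remaining $\log M^{\text{adv}}=\gO(\log(\sigma T))$ factor into $\wt{\gO}$. The paper's formal proof does exactly this computation with explicit constants.
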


\begin{theorem}[Stochastic Bounds for Skipping Losses]
\label{lemma:sto-ht-sum}
By setting stochastic stopping-time threshold $M^{\text{sto}} := 4^{\frac{1}{\alpha - 1}}\sigma^{\frac{\alpha}{\alpha - 1}} \Delta_{\min}^{-\frac{1}{\alpha - 1}}$, we have
\begin{align*}
    \E\left[\sum_{t=1}^T |\textsc{SkipErr}_t|\cdot \mathbbm{1}[i_t \neq i^\ast]\right] = \gO\left( K \log T \cdot \sigma^{ \frac{\alpha}{\alpha - 1}} \Delta_{\min}^{-\frac{1}{\alpha - 1}} \cdot \log \left(\frac{\sigma^\alpha}{\Delta_{\min}}\right) + \frac{\gR_T}{4} \right).
\end{align*}
\end{theorem}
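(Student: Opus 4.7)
\textbf{Proof plan for Theorem~\ref{lemma:sto-ht-sum}.} The plan is to instantiate the stopping-time bound of \Cref{lemma:skiperr-bound} with the specific threshold $M^{\text{sto}} = 4^{\frac{1}{\alpha-1}} \sigma^{\frac{\alpha}{\alpha-1}} \Delta_{\min}^{-\frac{1}{\alpha-1}}$ that is carefully tuned so that the first term on the right-hand side of \Cref{lemma:skiperr-bound} can be absorbed into $\gR_T/4$. The rest of the argument is essentially algebraic bookkeeping together with the elementary inequality $\log(1+x) \ge x/2$ for small $x>0$, which tames the denominator $\log(1+\tfrac{1}{16K\log T})$ in the stopping-time bound.

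\textbf{Step 1: balancing the leading term.} I would first compute $M^{\text{sto}} \cdot \frac{\sigma^\alpha}{(M^{\text{sto}})^\alpha} = (M^{\text{sto}})^{1-\alpha}\sigma^\alpha$. Plugging in the definition of $M^{\text{sto}}$ gives $(M^{\text{sto}})^{\alpha-1} = 4\sigma^\alpha/\Delta_{\min}$, and hence $M^{\text{sto}} \cdot \frac{\sigma^\alpha}{(M^{\text{sto}})^\alpha} = \Delta_{\min}/4$. This is precisely the choice that transforms the first summand of \Cref{lemma:skiperr-bound} into
\[
\frac{\Delta_{\min}}{4} \cdot \E\left[\sum_{t=1}^T \mathbbm{1}[i_t \ne i^\ast]\right] \le \frac{1}{4} \E\left[\sum_{t=1}^T \Delta_{i_t} \mathbbm{1}[i_t \ne i^\ast]\right] = \frac{\gR_T}{4},
\]
where the inequality uses $\Delta_{\min} \le \Delta_{i_t}$ for every sub-optimal $i_t$ (Assumption~\ref{ass:unique best arm}), and the final equality is the standard regret-decomposition identity for stochastic environments.

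\textbf{Step 2: handling the second and third terms.} The remaining contributions from \Cref{lemma:skiperr-bound} are $2M^{\text{sto}} \cdot \tfrac{\log M^{\text{sto}}}{\log(1+1/(16K\log T))}$ and $M^{\text{sto}}$. Using $\log(1+x)\ge x/2$ for $x \in (0,1)$, the denominator $\log(1+\tfrac{1}{16K\log T})$ is $\Omega(1/(K\log T))$, so the second term is $\gO(M^{\text{sto}} \log M^{\text{sto}} \cdot K \log T)$. Substituting $M^{\text{sto}}$ and observing that $\log M^{\text{sto}} = \tfrac{1}{\alpha-1}\log(4\sigma^\alpha/\Delta_{\min}) = \gO(\log(\sigma^\alpha/\Delta_{\min}))$ yields an upper bound of $\gO\!\left(K \log T \cdot \sigma^{\frac{\alpha}{\alpha-1}} \Delta_{\min}^{-\frac{1}{\alpha-1}} \cdot \log(\sigma^\alpha/\Delta_{\min})\right)$. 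The third term $M^{\text{sto}}$ is dominated by the second and can be folded into the same expression.

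\textbf{Main obstacle and wrap-up.} The only genuinely delicate point is the exponent arithmetic in Step~1: one must identify the unique value of $M$ for which $M \cdot \sigma^\alpha/M^\alpha$ equals a small constant multiple of $\Delta_{\min}$, which is what motivates the otherwise mysterious-looking factor $4^{1/(\alpha-1)}\sigma^{\alpha/(\alpha-1)}\Delta_{\min}^{-1/(\alpha-1)}$. Once this balancing is done, the two contributions from Steps~1 and~2 combine into the stated bound $\gO\!\left(K \log T \cdot \sigma^{\frac{\alpha}{\alpha-1}} \Delta_{\min}^{-\frac{1}{\alpha-1}} \log(\sigma^\alpha/\Delta_{\min}) + \gR_T/4\right)$. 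The $\gR_T/4$ term is harmless: it will later be absorbed on the left-hand side when this lemma is combined with \Cref{lemma:sto-bregman} and \Cref{lemma:sto-shift-sum} (each of which contributes another $\gR_T/4$) through the standard self-bounding trick, since the prefactor $3/4 < 1$.
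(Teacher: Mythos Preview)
Your proposal is correct and follows essentially the same approach as the paper: instantiate \Cref{lemma:skiperr-bound} with $M^{\text{sto}}$, observe that $(M^{\text{sto}})^{1-\alpha}\sigma^\alpha = \Delta_{\min}/4$ so the first term is bounded by $\gR_T/4$ via $\gR_T \ge \Delta_{\min}\E[\sum_t \mathbbm{1}[i_t\ne i^\ast]]$, and simplify the denominator $\log(1+1/(16K\log T))$ with the elementary inequality you mention (the paper writes it equivalently as $2x \ge \log(1+x^{-1})$ for $x\ge 1$). The remaining bookkeeping on $\log M^{\text{sto}}$ matches as well.
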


\section{Conclusion}
This paper designs the first algorithm for Parameter-Free HTMABs that enjoys the Best-of-Both-Worlds property. Specifically, our algorithm, \texttt{uniINF}, simultaneously achieves near-optimal instance-independent and instance-dependent bounds in adversarial and stochastic environments, respectively. \texttt{uniINF} incorporates several innovative algorithmic components, such as \textit{i)} refined log-barrier analysis, \textit{ii)} auto-balancing learning rates, and \textit{iii)} adaptive skipping-clipping loss tuning. Analytically, we also introduce meticulous techniques including \textit{iv)} analyzing Bregman divergence via partial derivatives and intermediate value theorem and \textit{v)} stopping-time analysis for logarithmic regret. We expect many of these techniques to be of independent interest. In terms of limitations, \texttt{uniINF} does suffer from some extra logarithmic factors; it's dependency on the gaps $\{\Delta_i^{-1}\}_{i\ne i^\ast}$ is also improvable when some of the gaps are much smaller than the other. We leave these for future investigation.

\section*{Reproducibility Statement}
This paper is primarily theoretical in nature, with the main contributions being the development of \texttt{uniINF}, the Best-of-Both-Worlds algorithm for Parameter-Free Heavy-Tailed Multi-Armed Bandits, and the introduction of several innovative algorithmic components and analytical tools. To ensure reproducibility, we have provided clear and detailed explanations of our methods, assumptions, and the development process of the algorithm throughout the paper. Complete proofs of our claims and results are included in the appendices. We have made every effort to present our research in a transparent and comprehensive manner to enable other researchers to understand and replicate our results. 

\ificlrfinal
\section*{Acknowledgments}
We thank the anonymous reviewers for their detailed reviews, from which we benefit greatly.
This work was supported by the National Natural Science Foundation of China Grants 52450016 and 52494974, and in part by the National Science Foundation of China under Grant 623B1015.
\fi

\bibliographystyle{iclr2025_conference}
\bibliography{references}

\begin{thebibliography}{51}
\providecommand{\natexlab}[1]{#1}
\providecommand{\url}[1]{\texttt{#1}}
\expandafter\ifx\csname urlstyle\endcsname\relax
  \providecommand{\doi}[1]{doi: #1}\else
  \providecommand{\doi}{doi: \begingroup \urlstyle{rm}\Url}\fi

\bibitem[Auer et~al.(2002)Auer, Cesa{-}Bianchi, and Fischer]{auer2002finite}
Peter Auer, Nicol{\`{o}} Cesa{-}Bianchi, and Paul Fischer.
\newblock Finite-time analysis of the multiarmed bandit problem.
\newblock \emph{Machine Learning}, 47:\penalty0 235--256, 2002.

\bibitem[Bubeck \& Slivkins(2012)Bubeck and Slivkins]{bubeck2012best}
S{\'e}bastien Bubeck and Aleksandrs Slivkins.
\newblock The best of both worlds: Stochastic and adversarial bandits.
\newblock In \emph{Conference on Learning Theory}, pp.\  42--1. JMLR Workshop and Conference Proceedings, 2012.

\bibitem[Bubeck et~al.(2012)Bubeck, Cesa-Bianchi, et~al.]{bubeck2012regret}
S{\'e}bastien Bubeck, Nicolo Cesa-Bianchi, et~al.
\newblock Regret analysis of stochastic and nonstochastic multi-armed bandit problems.
\newblock \emph{Foundations and Trends{\textregistered} in Machine Learning}, 5\penalty0 (1):\penalty0 1--122, 2012.

\bibitem[Bubeck et~al.(2013)Bubeck, Cesa-Bianchi, and Lugosi]{bubeck2013bandits}
S{\'e}bastien Bubeck, Nicolo Cesa-Bianchi, and G{\'a}bor Lugosi.
\newblock Bandits with heavy tail.
\newblock \emph{IEEE Transactions on Information Theory}, 59\penalty0 (11):\penalty0 7711--7717, 2013.

\bibitem[Chen \& Zhang(2023)Chen and Zhang]{chen2023improved}
Mingyu Chen and Xuezhou Zhang.
\newblock Improved algorithms for adversarial bandits with unbounded losses.
\newblock \emph{arXiv preprint arXiv:2310.01756}, 2023.

\bibitem[Chen \& Zhang(2024)Chen and Zhang]{chen2024scale}
Mingyu Chen and Xuezhou Zhang.
\newblock Scale-free adversarial reinforcement learning.
\newblock \emph{arXiv preprint arXiv:2403.00930}, 2024.

\bibitem[Cheng et~al.(2024)Cheng, Zhou, and Ji]{chengtaming}
Duo Cheng, Xingyu Zhou, and Bo~Ji.
\newblock Taming heavy-tailed losses in adversarial bandits and the best-of-both-worlds setting.
\newblock In \emph{The Thirty-eighth Annual Conference on Neural Information Processing Systems}, 2024.

\bibitem[Dai et~al.(2023)Dai, Luo, Wei, and Zimmert]{dai2023refined}
Yan Dai, Haipeng Luo, Chen-Yu Wei, and Julian Zimmert.
\newblock Refined regret for adversarial mdps with linear function approximation.
\newblock In \emph{International Conference on Machine Learning}, pp.\  6726--6759. PMLR, 2023.

\bibitem[Dann et~al.(2023{\natexlab{a}})Dann, Wei, and Zimmert]{dann2023blackbox}
Chris Dann, Chen-Yu Wei, and Julian Zimmert.
\newblock A blackbox approach to best of both worlds in bandits and beyond.
\newblock In \emph{The Thirty Sixth Annual Conference on Learning Theory}, pp.\  5503--5570. PMLR, 2023{\natexlab{a}}.

\bibitem[Dann et~al.(2023{\natexlab{b}})Dann, Wei, and Zimmert]{dann2023best}
Christoph Dann, Chen-Yu Wei, and Julian Zimmert.
\newblock Best of both worlds policy optimization.
\newblock In \emph{International Conference on Machine Learning}, pp.\  6968--7008. PMLR, 2023{\natexlab{b}}.

\bibitem[Deng et~al.(2022)Deng, Zhou, Kim, Tewari, Gupta, and Shroff]{deng2022weighted}
Yuntian Deng, Xingyu Zhou, Baekjin Kim, Ambuj Tewari, Abhishek Gupta, and Ness Shroff.
\newblock Weighted gaussian process bandits for non-stationary environments.
\newblock In \emph{International Conference on Artificial Intelligence and Statistics}, pp.\  6909--6932. PMLR, 2022.

\bibitem[Dorn et~al.(2024)Dorn, Katrutsa, Latypov, and Pudovikov]{dorn2024fast}
Yuriy Dorn, Aleksandr Katrutsa, Ilgam Latypov, and Andrey Pudovikov.
\newblock Fast ucb-type algorithms for stochastic bandits with heavy and super heavy symmetric noise.
\newblock \emph{arXiv preprint arXiv:2402.07062}, 2024.

\bibitem[Foster et~al.(2016)Foster, Li, Lykouris, Sridharan, and Tardos]{foster2016learning}
Dylan~J Foster, Zhiyuan Li, Thodoris Lykouris, Karthik Sridharan, and Eva Tardos.
\newblock Learning in games: Robustness of fast convergence.
\newblock \emph{Advances in Neural Information Processing Systems}, 29, 2016.

\bibitem[Gagliolo \& Schmidhuber(2011)Gagliolo and Schmidhuber]{gagliolo2011algorithm}
Matteo Gagliolo and J{\"u}rgen Schmidhuber.
\newblock Algorithm portfolio selection as a bandit problem with unbounded losses.
\newblock \emph{Annals of Mathematics and Artificial Intelligence}, 61:\penalty0 49--86, 2011.

\bibitem[Gaillard et~al.(2014)Gaillard, Stoltz, and Van~Erven]{gaillard2014second}
Pierre Gaillard, Gilles Stoltz, and Tim Van~Erven.
\newblock A second-order bound with excess losses.
\newblock In \emph{Conference on Learning Theory}, pp.\  176--196. PMLR, 2014.

\bibitem[Genalti et~al.(2024)Genalti, Marsigli, Gatti, and Metelli]{genalti2023towards}
Gianmarco Genalti, Lupo Marsigli, Nicola Gatti, and Alberto~Maria Metelli.
\newblock $(\varepsilon, u)$-adaptive regret minimization in heavy-tailed bandits.
\newblock In \emph{The Thirty Seventh Annual Conference on Learning Theory}, pp.\  1882--1915. PMLR, 2024.

\bibitem[Hadiji \& Stoltz(2023)Hadiji and Stoltz]{hadiji2023adaptation}
H{\'e}di Hadiji and Gilles Stoltz.
\newblock Adaptation to the range in k-armed bandits.
\newblock \emph{Journal of Machine Learning Research}, 24\penalty0 (13):\penalty0 1--33, 2023.

\bibitem[Honda et~al.(2023)Honda, Ito, and Tsuchiya]{honda2023follow}
Junya Honda, Shinji Ito, and Taira Tsuchiya.
\newblock Follow-the-perturbed-leader achieves best-of-both-worlds for bandit problems.
\newblock In \emph{International Conference on Algorithmic Learning Theory}, pp.\  726--754. PMLR, 2023.

\bibitem[Huang et~al.(2022)Huang, Dai, and Huang]{huang2022adaptive}
Jiatai Huang, Yan Dai, and Longbo Huang.
\newblock Adaptive best-of-both-worlds algorithm for heavy-tailed multi-armed bandits.
\newblock In \emph{International Conference on Machine Learning}, pp.\  9173--9200. PMLR, 2022.

\bibitem[Ito(2021{\natexlab{a}})]{ito2021hybrid}
Shinji Ito.
\newblock Hybrid regret bounds for combinatorial semi-bandits and adversarial linear bandits.
\newblock In M.~Ranzato, A.~Beygelzimer, Y.~Dauphin, P.S. Liang, and J.~Wortman Vaughan (eds.), \emph{Advances in Neural Information Processing Systems}, volume~34, pp.\  2654--2667. Curran Associates, Inc., 2021{\natexlab{a}}.
\newblock URL \url{https://proceedings.neurips.cc/paper_files/paper/2021/file/15a50c8ba6a0002a2fa7e5d8c0a40bd9-Paper.pdf}.

\bibitem[Ito(2021{\natexlab{b}})]{ito2021parameter}
Shinji Ito.
\newblock Parameter-free multi-armed bandit algorithms with hybrid data-dependent regret bounds.
\newblock In \emph{Conference on Learning Theory}, pp.\  2552--2583. PMLR, 2021{\natexlab{b}}.

\bibitem[Ito \& Takemura(2023)Ito and Takemura]{ito2023best}
Shinji Ito and Kei Takemura.
\newblock Best-of-three-worlds linear bandit algorithm with variance-adaptive regret bounds.
\newblock In \emph{The Thirty Sixth Annual Conference on Learning Theory}, pp.\  2653--2677. PMLR, 2023.

\bibitem[Ito et~al.(2022)Ito, Tsuchiya, and Honda]{ito2022nearly}
Shinji Ito, Taira Tsuchiya, and Junya Honda.
\newblock Nearly optimal best-of-both-worlds algorithms for online learning with feedback graphs.
\newblock \emph{Advances in Neural Information Processing Systems}, 35:\penalty0 28631--28643, 2022.

\bibitem[Jin \& Luo(2020)Jin and Luo]{jin2020simultaneously}
Tiancheng Jin and Haipeng Luo.
\newblock Simultaneously learning stochastic and adversarial episodic mdps with known transition.
\newblock \emph{Advances in neural information processing systems}, 33:\penalty0 16557--16566, 2020.

\bibitem[Jin et~al.(2023)Jin, Liu, and Luo]{jin2023improved}
Tiancheng Jin, Junyan Liu, and Haipeng Luo.
\newblock Improved best-of-both-worlds guarantees for multi-armed bandits: Ftrl with general regularizers and multiple optimal arms.
\newblock \emph{Advances in Neural Information Processing Systems}, 36:\penalty0 30918--30978, 2023.

\bibitem[Kagrecha et~al.(2019)Kagrecha, Nair, and Jagannathan]{kagrecha2019distribution}
Anmol Kagrecha, Jayakrishnan Nair, and Krishna~P Jagannathan.
\newblock Distribution oblivious, risk-aware algorithms for multi-armed bandits with unbounded rewards.
\newblock In \emph{NeurIPS}, pp.\  11269--11278, 2019.

\bibitem[Kang \& Kim(2023)Kang and Kim]{kang2023heavy}
Minhyun Kang and Gi-Soo Kim.
\newblock Heavy-tailed linear bandit with huber regression.
\newblock In \emph{Uncertainty in Artificial Intelligence}, pp.\  1027--1036. PMLR, 2023.

\bibitem[Kong et~al.(2023)Kong, Zhao, and Li]{kong2023best}
Fang Kong, Canzhe Zhao, and Shuai Li.
\newblock Best-of-three-worlds analysis for linear bandits with follow-the-regularized-leader algorithm.
\newblock In \emph{The Thirty Sixth Annual Conference on Learning Theory}, pp.\  657--673. PMLR, 2023.

\bibitem[Lattimore \& Szepesv{\'a}ri(2020)Lattimore and Szepesv{\'a}ri]{lattimore2020bandit}
Tor Lattimore and Csaba Szepesv{\'a}ri.
\newblock \emph{Bandit algorithms}.
\newblock Cambridge University Press, 2020.

\bibitem[Lee et~al.(2020{\natexlab{a}})Lee, Luo, and Zhang]{lee2020closer}
Chung-Wei Lee, Haipeng Luo, and Mengxiao Zhang.
\newblock A closer look at small-loss bounds for bandits with graph feedback.
\newblock In \emph{Conference on Learning Theory}, pp.\  2516--2564. PMLR, 2020{\natexlab{a}}.

\bibitem[Lee et~al.(2024)Lee, Honda, Ito, and Oh]{lee2024follow}
Jongyeong Lee, Junya Honda, Shinji Ito, and Min-hwan Oh.
\newblock Follow-the-perturbed-leader with fr$\backslash$'$\{$e$\}$ chet-type tail distributions: Optimality in adversarial bandits and best-of-both-worlds.
\newblock \emph{arXiv preprint arXiv:2403.05134}, 2024.

\bibitem[Lee \& Lim(2022)Lee and Lim]{lee2022minimax}
Kyungjae Lee and Sungbin Lim.
\newblock Minimax optimal bandits for heavy tail rewards.
\newblock \emph{IEEE Transactions on Neural Networks and Learning Systems}, 35\penalty0 (4):\penalty0 5280--5294, 2022.

\bibitem[Lee et~al.(2020{\natexlab{b}})Lee, Yang, Lim, and Oh]{lee2020optimal}
Kyungjae Lee, Hongjun Yang, Sungbin Lim, and Songhwai Oh.
\newblock Optimal algorithms for stochastic multi-armed bandits with heavy tailed rewards.
\newblock \emph{Advances in Neural Information Processing Systems}, 33:\penalty0 8452--8462, 2020{\natexlab{b}}.

\bibitem[Liebeherr et~al.(2012)Liebeherr, Burchard, and Ciucu]{liebeherr2012delay}
J{\"o}rg Liebeherr, Almut Burchard, and Florin Ciucu.
\newblock Delay bounds in communication networks with heavy-tailed and self-similar traffic.
\newblock \emph{IEEE Transactions on Information Theory}, 58\penalty0 (2):\penalty0 1010--1024, 2012.

\bibitem[Luo \& Schapire(2015)Luo and Schapire]{luo2015achieving}
Haipeng Luo and Robert~E Schapire.
\newblock Achieving all with no parameters: Adanormalhedge.
\newblock In \emph{Conference on Learning Theory}, pp.\  1286--1304. PMLR, 2015.

\bibitem[Medina \& Yang(2016)Medina and Yang]{medina2016no}
Andres~Munoz Medina and Scott Yang.
\newblock No-regret algorithms for heavy-tailed linear bandits.
\newblock In \emph{International Conference on Machine Learning}, pp.\  1642--1650. PMLR, 2016.

\bibitem[Putta \& Agrawal(2022)Putta and Agrawal]{putta2022scale}
Sudeep~Raja Putta and Shipra Agrawal.
\newblock Scale-free adversarial multi armed bandits.
\newblock In \emph{International Conference on Algorithmic Learning Theory}, pp.\  910--930. PMLR, 2022.

\bibitem[Seldin \& Lugosi(2017)Seldin and Lugosi]{seldin2017improved}
Yevgeny Seldin and G{\'a}bor Lugosi.
\newblock An improved parametrization and analysis of the exp3++ algorithm for stochastic and adversarial bandits.
\newblock In \emph{Conference on Learning Theory}, pp.\  1743--1759. PMLR, 2017.

\bibitem[Seldin \& Slivkins(2014)Seldin and Slivkins]{seldin2014one}
Yevgeny Seldin and Aleksandrs Slivkins.
\newblock One practical algorithm for both stochastic and adversarial bandits.
\newblock In \emph{International Conference on Machine Learning}, pp.\  1287--1295. PMLR, 2014.

\bibitem[Slivkins et~al.(2019)]{slivkins2019introduction}
Aleksandrs Slivkins et~al.
\newblock Introduction to multi-armed bandits.
\newblock \emph{Foundations and Trends{\textregistered} in Machine Learning}, 12\penalty0 (1-2):\penalty0 1--286, 2019.

\bibitem[Srinivas et~al.(2009)Srinivas, Krause, Kakade, and Seeger]{srinivas2009gaussian}
Niranjan Srinivas, Andreas Krause, Sham~M Kakade, and Matthias Seeger.
\newblock Gaussian process optimization in the bandit setting: No regret and experimental design.
\newblock \emph{arXiv preprint arXiv:0912.3995}, 2009.

\bibitem[Talaei~Khoei \& Kaabouch(2023)Talaei~Khoei and Kaabouch]{talaei2023machine}
Tala Talaei~Khoei and Naima Kaabouch.
\newblock Machine learning: Models, challenges, and research directions.
\newblock \emph{Future Internet}, 15\penalty0 (10):\penalty0 332, 2023.

\bibitem[Thompson(1933)]{thompson1933likelihood}
William~R Thompson.
\newblock On the likelihood that one unknown probability exceeds another in view of the evidence of two samples.
\newblock \emph{Biometrika}, 25\penalty0 (3-4):\penalty0 285--294, 1933.

\bibitem[Tsuchiya et~al.(2023)Tsuchiya, Ito, and Honda]{tsuchiya2023best}
Taira Tsuchiya, Shinji Ito, and Junya Honda.
\newblock Best-of-both-worlds algorithms for partial monitoring.
\newblock In \emph{International Conference on Algorithmic Learning Theory}, pp.\  1484--1515. PMLR, 2023.

\bibitem[Wei \& Luo(2018)Wei and Luo]{wei2018more}
Chen-Yu Wei and Haipeng Luo.
\newblock More adaptive algorithms for adversarial bandits.
\newblock In \emph{Conference On Learning Theory}, pp.\  1263--1291. PMLR, 2018.

\bibitem[Wei \& Srivastava(2020)Wei and Srivastava]{wei2020minimax}
Lai Wei and Vaibhav Srivastava.
\newblock Minimax policy for heavy-tailed bandits.
\newblock \emph{IEEE Control Systems Letters}, 5\penalty0 (4):\penalty0 1423--1428, 2020.

\bibitem[Wu et~al.(2015)Wu, Gy{\"o}rgy, and Szepesv{\'a}ri]{wu2015online}
Yifan Wu, Andr{\'a}s Gy{\"o}rgy, and Csaba Szepesv{\'a}ri.
\newblock Online learning with gaussian payoffs and side observations.
\newblock \emph{Advances in Neural Information Processing Systems}, 28, 2015.

\bibitem[Xue et~al.(2024)Xue, Wang, Wan, Yi, and Zhang]{xue2024efficient}
Bo~Xue, Yimu Wang, Yuanyu Wan, Jinfeng Yi, and Lijun Zhang.
\newblock Efficient algorithms for generalized linear bandits with heavy-tailed rewards.
\newblock \emph{Advances in Neural Information Processing Systems}, 36, 2024.

\bibitem[Yu et~al.(2018)Yu, Shao, Lyu, and King]{yu2018pure}
Xiaotian Yu, Han Shao, Michael~R Lyu, and Irwin King.
\newblock Pure exploration of multi-armed bandits with heavy-tailed payoffs.
\newblock In \emph{UAI}, pp.\  937--946, 2018.

\bibitem[Zhang et~al.(2020)Zhang, Karimireddy, Veit, Kim, Reddi, Kumar, and Sra]{zhang2020adaptive}
Jingzhao Zhang, Sai~Praneeth Karimireddy, Andreas Veit, Seungyeon Kim, Sashank Reddi, Sanjiv Kumar, and Suvrit Sra.
\newblock Why are adaptive methods good for attention models?
\newblock \emph{Advances in Neural Information Processing Systems}, 33:\penalty0 15383--15393, 2020.

\bibitem[Zimmert \& Seldin(2019)Zimmert and Seldin]{zimmert2019optimal}
Julian Zimmert and Yevgeny Seldin.
\newblock An optimal algorithm for stochastic and adversarial bandits.
\newblock In \emph{The 22nd International Conference on Artificial Intelligence and Statistics}, pp.\  467--475. PMLR, 2019.

\end{thebibliography}

\newpage

\appendix
\renewcommand{\appendixpagename}{\centering \LARGE Supplementary Materials}
\appendixpage

\startcontents[section]
\printcontents[section]{l}{1}{\setcounter{tocdepth}{2}}

\newpage

{
\section{Additional Related Works}
\label{app:related}
In this section, we present a detailed version of literature review to include more discussion and related works.
\subsection{Multi-Armed Bandits}
The Multi-Armed Bandits (MAB) problem \citep{thompson1933likelihood,slivkins2019introduction,lattimore2020bandit,bubeck2012regret} is an important theoretical framework for addressing the exploration-exploitation trade-off inherent in online learning. Various studies discovered different setups for the loss (or reward) signals. For example, most works focus on standard bounded losses where the loss of each arm belongs to a bounded range such as $[0, 1]$ \citep{auer2002finite,wei2018more,lee2020closer,jin2023improved}. \citet{srinivas2009gaussian,wu2015online,deng2022weighted} assumed the losses following Gaussian or sub-Gaussian distribution. Another branch of works called scale-free MABs consider the case when the losses' range is unknown to the agent \citep{putta2022scale,chen2023improved,chen2024scale,hadiji2023adaptation}. In this paper, we consider the heavy-tailed loss signals where each arm's loss do not allow bounded variances but instead have their $\alpha$-th moment bounded by some constant $\sigma^{\alpha}$, which is called the Heavy-Tailed Multi-Armed Bandits (HTMABs).

\subsection{Heavy-Tailed MABs} 
HTMABs were introduced by \citet{bubeck2013bandits}, who gave both instance-dependent and instance-independent lower and upper bounds under stochastic assumptions. Various efforts have been devoted in this area since then. To exemplify,
\citet{wei2020minimax} removed a sub-optimal $(\log T)^{1-1/\alpha}$ factor in the instance-independent upper bound;
\citet{yu2018pure} developed a pure exploration algorithm for HTMABs;
\citet{medina2016no}, \citet{kang2023heavy}, and \citet{xue2024efficient} considered the linear HTMAB problem; and \citet{dorn2024fast} specifically investigated the case where the heavy-tailed reward distributions are presumed to be symmetric. Nevertheless, all these works focused on stochastic environments and required the prior knowledge of heavy-tail parameters $(\alpha, \sigma)$. In contrast, this paper focuses on both-of-both-worlds algorithms in parameter-free HTMABs, which means \textit{i)} the loss distributions can possibly be non-stationary, and \textit{ii)} the true heavy-tail parameters $(\alpha, \sigma)$ remain unknown.

\subsection{Best-of-Both-Worlds Algorithms} 
\citet{bubeck2012best} pioneered the study of Best-of-Both-Worlds bandit algorithms, and was followed by various improvements including \texttt{EXP3}-based approaches \citep{seldin2014one, seldin2017improved}, FTPL-based approaches \citep{honda2023follow, lee2024follow}, and FTRL-based approaches \citep{wei2018more, zimmert2019optimal, jin2023improved}. Equipped with data-dependent learning rates, \citet{ito2021hybrid,ito2021parameter,tsuchiya2023best,ito2023best,kong2023best} achieve Best-of-Both-Worlds algorithms in various online learning problems. When the loss distribution can be heavy-tailed,  \citet{huang2022adaptive} gave an algorithm \texttt{HTINF} that achieves Best-of-Both-Worlds property under the known-$(\alpha,\sigma)$ assumption. Unfortunately, without access to these true parameters, their alternative algorithm \texttt{OptHTINF} failed to achieve near-optimal regret guarantees in either adversarial or stochastic environments.

\subsection{Parameter-Free HTMABs} 
Another line of research aimed at getting rid of the prior knowledge of $\alpha$ or $\sigma$, which we call Parameter-Free HTMABs. Along this line, \citet{kagrecha2019distribution} presented the GSR method to identify the optimal arm in HTMAB without any prior knowledge. In terms of regret minimization, \citet{lee2020optimal} and \citet{lee2022minimax} considered the case when $\sigma$ is unknown. \citet{genalti2023towards} were the first to achieve the parameter-free property while maintaining near-optimal regret. However, all these algorithms fail in adversarial environments -- not to mention the best-of-both-worlds property which requires optimality in both stochastic and adversarial environments.

\section{Discussions on \Cref{ass:truncated-non-negative} (Truncated Non-Negative Loss)}
\label{app:truncated}

In this section, we detailedly discuss the existing results on \Cref{ass:truncated-non-negative} in HTMABs works. It is important to note that the truncated non-negativity loss assumption is not unique to our study. In fact, it was also used in \citet{huang2022adaptive} and \citet{genalti2023towards}, and is indeed a relaxation of the more common "non-negative losses" assumption found in the MAB literature \citep{auer2002finite,lee2020closer,jin2023improved}. For a better comparison, we summarize the exisiting results in HTMABs for the relationship between the prior knowledge of heavy-tailed parameters $(\sigma, \alpha)$ and \Cref{ass:truncated-non-negative} in the following table. 
\begin{table}[!h]
\centering
\caption{Impact of \Cref{ass:truncated-non-negative} on BoBW Guarantees for HTMABs}
\label{tab:comparison-truncated-non-negative}
\begin{minipage}{\textwidth}
\centering
\renewcommand{\arraystretch}{1.5}
\resizebox{\textwidth}{!}{%
\begin{tabular}{|c|c|c|}
    \hline
    \textbf{Assumptions} & \textbf{Known \((\sigma, \alpha)\)} & \textbf{Unknown \((\sigma, \alpha)\)} \\ 
    \hline
    \multirow{2}{*}{With \Cref{ass:truncated-non-negative}} & \texttt{HTINF} achieves BoBW & \texttt{uniINF} achieves BoBW \\
    & \citep{huang2022adaptive} & (\textbf{Our Work}) \\ \hline
    \multirow{2}{*}{Weaker than \Cref{ass:truncated-non-negative}?}& Inferred by \cite{chengtaming} & \multirow{2}{*}{\textbf{Open Problem}} \\ 
    & (discussed below) & \\
    \hline
    \multirow{2}{*}{Without Any Assumption} & \texttt{SAO-HT} achieves BoBW & No BoBW possible \\ 
    & \citep{chengtaming} & \citep[Theorems 2 \& 3]{genalti2023towards} \\
    \hline
\end{tabular}
}
\end{minipage}
\end{table}

The truncated non-negative loss assumption is first introduced by \citet{huang2022adaptive}, and their algorithm \texttt{HTINF} achieves nearly optimal automatically in both stochasitc and adversarial environments with \Cref{ass:truncated-non-negative} and prior knowledge of heavy-tailed parameters $(\sigma, \alpha)$, as shown in \Cref{tab:comparison-truncated-non-negative}. 

Recently, \citet{chengtaming} justified that when parameters $(\sigma,\alpha)$ are known, BoBW is achievable without any assumptions (that is, \Cref{ass:truncated-non-negative} is redundant when parameters are known). Specifically, the algorithm \texttt{SAO-HT} presented by \citet{chengtaming} achieves nearly optimal in adversarial and stochastic cases up to some extra logarithm factors. This result implies that \Cref{ass:truncated-non-negative} is redundant for optiaml HTMABs when parameters are known.

When it comes to parameter-free setups, Theorems 2 and 3 by \citet{genalti2023towards} highlight that achieving optimal worst-case regret guarantees is impossible without any assumptions. Moreover, \citet{genalti2023towards} present an efficient algorithm \texttt{Adar-UCB} which achieve nearly minimax optimal on both instance-dependent and instance-independent regret bounds, under the stochastic environments and \Cref{ass:truncated-non-negative}. 

Therefore, we adopt \Cref{ass:truncated-non-negative} when developing our BoBW algorithms for parameter-free HTMABs. Our algorithm, \texttt{uniINF} (\Cref{alg}), achieves nearly optimal automatically in both stochastic and adversarial settings without knowing the parameters $(\sigma, \alpha)$ a-priori. Our result also demonstrates that \Cref{ass:truncated-non-negative} is sufficient for BoBW guarantees in parameter-free HTMABs.

As summarized by \Cref{tab:comparison-truncated-non-negative}, while \Cref{ass:truncated-non-negative} has been validated as sufficient for achieving BoBW guarantees in both known and unknown parameter settings, the exploration for weaker assumptions continues to be an intersting topic for future research.
}

\section{Bregman Divergence Terms: Omitted Proofs in Section~\ref{sec:bregman}}
\label{app:bregman}

In this section, we present the omitted proofs in \Cref{sec:bregman}, the analyses for the Bregman divergence term $\textsc{Div}_t=D_{\Psi_t}(\bm x_t,\bm z_t) = \Psi_t(\vx_t) - \Psi_t(\vz_t) - \langle \nabla\Psi_t(\vz_t) , \vx_t - \vz_t\rangle$ where $\vx_t, \vz_t$ are defined as
\begin{align*}
    &\vx_t := \argmin_{\vx \in \Delta^{[K]}} \langle \mL_{t-1}, \vx\rangle + \Psi_t(\vx), \\
    &\vz_t := \argmin_{\vz \in \Delta^{[K]}} \langle \mL_t, \vz \rangle + \Psi_t(\vz),
\end{align*}
and $\mL_t$ is defined as the cumulative loss
\begin{align*}
    \mL_t = \sum_{s=1}^t \wt{\vl}_s,\quad \wt{\ell}_{s,i}=\frac{\ell_{t,i}^{\Skip}}{x_{t,i}}\cdot \mathbbm{1}[i=i_t].
\end{align*}
 
By KKT conditions, there exist two unique multipliers $Z_t$ and $\wt{Z}_t$ such that
\begin{align*}
    x_{t,i} = \frac{S_t}{L_{t-1,i} - Z_t}, \quad z_{t,i} = \frac{S_t}{L_{t,i} - \wt{Z}_t},\quad \forall i\in [K].
\end{align*}
We highlight that $Z_t$ is fixed conditioning on $\gF_{t-1}$, while $\wt{Z}_t$ is fully determined by $\wt{\vl}_t$.

\subsection{$z_{t,i}$ is Multiplicatively Close to $x_{t,i}$: Proof of Lemma~\ref{lemma:constantly-apart}}
\Cref{lemma:constantly-apart} investigates the relationship between $x_{t,i}$ and $z_{t,i}$. By previous discussion, the key of the proof is carefully studied the multipliers $Z_t$ and $\wt{Z}_t$. 
\begin{lemma}[Restatement of \Cref{lemma:constantly-apart}]
    \label{lemma:constantly-apart-formal}
    For every $t \in [T]$ and $i \in [K]$, we have
    \begin{align*}
        \frac{1}{2}x_{t,i} \leq z_{t,i} \leq 2 x_{t,i}.
    \end{align*}

\begin{proof}
\noindent\textbf{Positive Loss. }
Below we first consider $\wt{\ell}_{t,i_t} > 0$, the other case $\wt{\ell}_{t,i_t} < 0$ can be verified similarly, and the $\wt{\ell}_{t,i_t} = 0$ case is trivial. We prove this lemma for left and right sides, respectively. 

\noindent\textbf{The left side $\frac{1}{2}x_{t,i} \leq z_{t,i}$: }
    According to the KKT conditions, given $S_t$ and $L_{t-1}$ ($L_t$), the goal of computing $x_t$ ($z_t$) can be reduced to find a scalar multiplier $Z_t$ ($\tilde Z_t$), such that we have
    \begin{align*}
        \sum_{i=1}^K x_{t,i}(Z_t) = 1
    \end{align*}
    and 
    \begin{align*}
        \sum_{i=1}^K z_{t,i}(\tilde Z_t) = 1
    \end{align*}
    where 
    \begin{align}
    \label{eq:xz-def}
        x_{t,i}(\lambda) = \frac{S_t}{L_{t-1,i} - \lambda}, \quad z_{t,i}(\lambda) = \frac{S_t}{L_{t,i}  - \lambda} = \frac{S_t}{L_{t-1,i} + \wt{\ell}_{t,i} - \lambda}.
    \end{align}
    Note that each $z_{t,i}(\lambda)$ is an increasing function of $\lambda$ on $(-\infty, L_{t,i}]$. Since $\tilde \ell_{t,i_t} > 0$, it is easy to see that $z_{t,i_t}(Z_t) < x_{t,i_t}$. Thus in order to satisfy the sum-to-one constraint, we must have $\wt{Z}_t > Z_t$ and $\wt{Z}_t - Z_t < \tilde \ell_{t,i_t}$. Moreover, we have $z_{t,i_t} < x_{t,i_t}$ and $z_{t,i} > x_{t,i}$ for all $i \neq i_t$. Therefore, we only need to prove that $z_{t,i_t} \geq \frac{1}{2}x_{t,i_t}$. Thanks to the monotonicity of $z_{t,i}(\lambda)$, it suffices to find a multiplier $\bar{Z}_{t}$ such that
    \begin{align}
        z_{t,i_t}(\bar Z_t) & \geq \frac{x_{t,i_t}}{2}, \label{eq:cond1} \\
        \sum_{i\in[K]} z_{t,i}(\bar Z_t) & \leq 1. \label{eq:cond2}
    \end{align}
    With the above two conditions hold, we can conclude that $\tilde Z_t \ge \bar Z_t$, \textit{i.e.}, $\bar Z_t$ is a lower-bound for the actual multiplier $\tilde Z_t$ making $z_t \in \Delta$, and thus $z_{t,i_t} = z_{t,i_t}(\tilde Z_t) \ge z_{t,i_t}(\bar Z_t) \ge \frac{x_{t,i_t}}{2}$.

    For our purpose, we will choose $\bar Z_t$ as follows
    \begin{align*}
    \bar Z_t = \begin{cases}
        Z_t & \text{if } x_{t,i_t} < \frac 1 2, \\
        \text{the unique solution to } z_{t,i_t}(\lambda) = \frac{x_{t,i_t}}{2} & \text{if } x_{t,i_t} \ge \frac 1 2.
    \end{cases}
    \end{align*}
    We will verify \Cref{eq:cond1} and \Cref{eq:cond2} for both cases. For the condition \Cref{eq:cond1}, when $x_{t,i_t} \geq \frac{1}{2}$, it automatically holds by definition of $\bar Z_t$. When $x_{t,i_t} < \frac{1}{2}$, we have $S_t = 4C_{t,i_t}\cdot (1-x_{t,i_t}) \geq 2\ell^{\text{skip}}_{t,i_t}$. Therefore,
    \begin{align*}
        \frac{S_t}{L_{t,i_t} - \bar{Z}_t} & = \frac{S_t}{L_{t-1,i_t} + \frac{\ell^{\text{skip}}_{t,i_t}}{x_{t,i_t}} - Z_t} \\
        & \geq \frac{S_t}{L_{t-1,i_t} + \frac{S_t}{2x_{t,i_t}} - Z_t} \\
        & =\frac{S_t}{(L_{t-1,i_t} - Z_t)\cdot (1  + \frac{S_t}{2x_{t,i_t}(L_{t-1,i_t} - Z_t)})} \\
        & = x_{t,i_t} \cdot \left( {1 + \frac{1}{2x_{t,i_t}} \cdot x_{t,i_t}} \right)^{-1} \\
        & = \frac{2}{3}x_{t,i_t} \\
        & \geq \frac{1}{2}x_{t,i_t}.
    \end{align*}
    Thus, we have $\frac{S_t}{L_{t,i_t} - \bar{Z}_t} \geq \frac{x_{t,i_t}}{2}$ holds regardless $x_{t,i_t} \ge 1/2$ or not.

    Then, we verify the other statement \Cref{eq:cond2}. When $x_{t,i_t} < \frac{1}{2}$, we have
    \begin{align*}
        \sum_{i\in[K]} z_{t,i}(\bar Z_t) & = z_{t,i_t}(Z_t) + \sum_{i\ne i_t} z_{t,i}(Z_t)\\
        & = z_{t,i_t}(Z_t) + \sum_{i\ne i_t} x_{t,i} \\
        & < x_{t,i_t} + \sum_{i\ne i_t} x_{t,i} = 1.
    \end{align*}

    For the other case $x_{t,i_t} \geq \frac{1}{2}$. It turns out that the definition of $\bar Z_t$, \textit{i.e.}, $z_{t,i_t}(\bar Z_t) = \frac{x_{t,i_t}}{2}$, solves to
    \begin{align*}
        \bar{Z}_t & = -\frac{2S_t}{x_{t,i_t}} + L_{t-1,i_t} + \frac{\ell^{\text{skip}}_{t,i_t}}{x_{t,i_t}} \\
        & \leq Z_t - \frac{S_t}{x_{t,i_t}} + \frac{C_{t,i_t}}{x_{t,i_t}} \\
        & \leq Z_t - \frac{S_t}{x_{t,i_t}} + \frac{S_t}{4x^2_{t,i_t}(1-x_{t,i_t})} \\
        &= Z_t + \frac{S_t(1 - 4x_{t,i_t}(1 - x_{t,i_t}))}{4x^2_{t,i_t}(1-x_{t,i_t})} \\
        &= Z_t + \frac{S_t(2x_{t,i_t} - 1)^2}{4x^2_{t,i_t}(1-x_{t,i_t})}
    \end{align*}
    where the first inequality holds by the definition of $x_{t,i_t}$ in \Cref{eq:xz-def} and $0 < \ell^{\text{skip}}_{t,i_t} \leq C_{t,i_t}= \frac{S_t}{4(1-x_{t,i_t})} \leq  \frac{S_t}{4x_{t,i_t}(1 - x_{t,i_t})}$. For all $i \neq i_t$, we then have
    \begin{align*}
        \frac{S_t}{L_{t,i} - \bar{Z}_t} & = \frac{S_t}{L_{t-1,i} - \bar{Z}_t} \\
        & \leq \frac{S_t}{L_{t-1,i} - Z_t - \frac{S_t(2x_{t,i_t} - 1)^2}{4x^2_{t,i_t}(1-x_{t,i_t})}} \\
        & = \frac{S_t}{(L_{t-1,i} - Z_t)\cdot \left(1 - \frac{S_t(2x_{t,i_t} - 1)^2}{(L_{t-1,i} - Z_t)4x^2_{t,i_t}(1-x_{t,i_t})}\right)} \\
        & = x_{t,i} \cdot \left(1 - x_{t,i}\frac{(2x_{t,i_t} - 1)^2}{4x^2_{t,i_t}(1-x_{t,i_t})}\right)^{-1} 
    \end{align*}
    Since we have $x_{t,i} \leq \sum_{i\neq i_t} x_{t,i} = 1 - x_{t,i_t}$, then
    \begin{align}
        z_{t,i}(\bar{Z}_t) = \frac{S_t}{L_{t,i} - \bar Z_t} &\leq x_{t,i}\cdot \left(1 - \frac{1 - 4x_{t,i_t} + 4x_{t,i_t}^2}{4x_{t,i_t}^2}\right)^{-1} = x_{t,i} \cdot \frac{4x_{t,i_t}^2}{4x_{t,i_t} - 1}. \label{eq:zti}
    \end{align}
    Therefore, we have
    \begin{align*}
        \sum_{i \in [K]} z_{t,i}(\bar{Z}_t) & \overset{(a)}{\leq} \frac{1}{2}x_{t,i_t} + \frac{4x_{t,i_t}^2}{4 x_{t,i_t} - 1}\sum_{i\neq i_t} x_{t,i} \\
        & = \frac{1}{2}x_{t,i_t} + \frac{4x_{t,i_t}^2}{4x_{t,i_t} - 1}(1 - x_{t,i_t}) \\
        & \overset{(b)}{\leq} \frac{1}{2}x_{t,i_t} + 4x_{t,i_t}^2(1-x_{t,i_t}) \\
        & \overset{(c)}{\leq} 1,
    \end{align*}
    where step $(a)$ is obtained by applying \Cref{eq:zti} to $i\ne i_t$ and $z_{t,i_t}(\bar Z_t) = x_{t,i_t}/2$; step $(b)$ is due to $x_{t,i_t} \geq \nicefrac{1}{2}$; step $(c)$ is due to the fact that $x\mapsto \nicefrac{x}{2} + 4x^2(1-x)$ has a maximum value less than $1$.
    Then, we have already verified \Cref{eq:cond2}, which finishes the proof of $z_{t,i_t} \geq {x_{t,i_t}}/2$.

\noindent\textbf{The right side $z_{t,i} \leq 2x_{t,i}$. }
    We then show that $z_{t,i} \leq 2x_{t,i}$ holds for all $i \in [K]$. The main idea is similar to what we have done in the argument for the left-side inequality, we will find some $\bar Z_t$ under which we can verify that
    \begin{align}
        z_{t,i}(\bar Z_t) & \le 2x_{t,i}\quad \forall i\ne i_t, \label{eq:cond3} \\
        \sum_{i\in[K]} z_{t,i}(\bar Z_t) & \ge 1. \label{eq:cond4}
    \end{align}
    We can then claim that this $\bar Z_t$ is indeed an upper-bound of the actual multiplier $\tilde Z_t$.
    
    Let $j^* = \argmax_{j\neq i_t} x_{t,j}$, we just take $\bar Z_t$ to be the unique solution to $z_{t,j^*}(\lambda) = 2x_{t,j^*}$, which solves to
    \begin{align*}
        \bar{Z}_t = Z_t + \frac{S_t}{2x_{t,j^*}}.
    \end{align*}
    One can verify that
    \begin{align*}
        z_{t,j^*}(\bar Z_t) & = \frac{S_t}{(L_{t-1, j^*} - Z_t)\cdot \left(1 - \frac{S_t}{(L_{t-1,j^*} - Z_t)\cdot 2x_{t,j^*}}\right)} \\
        & = x_{t,j^*}  \left(1 - \frac{x_{t,j^*}}{2x_{t,j^*}}\right)^{-1} \\
        & = 2x_{t,j^*}.
    \end{align*}
    For $i \in [K] \setminus \{i_t, j^*\}$, it is easy to see that
    \begin{align*}
        z_{t,i}(\bar Z_t) & = \frac{S_t}{L_{t-1,i} - Z_t  - \frac{S_t}{2x_{t,j^*}}} \\
        & = \frac{S_t}{(L_{t-1,i} - Z_t)\cdot\left(1  - \frac{S_t}{2x_{t,j^*}(L_{t-1,i} - Z_t)}\right)} \\
        & = x_{t,i} \left( 1 - \frac{x_{t,i}}{2x_{t,j^*}} \right)^{-1} \\
        & \leq 2x_{t,i}.
    \end{align*}
    Hence \Cref{eq:cond3} holds. In order to verify \Cref{eq:cond4}, note that
    \begin{align*}
        \wt{\ell}_{t,i_t} & \leq \frac{C_{t,i_t}}{x_{t,i_t}} \leq \frac{S_t}{4x_{t,i_t}(1-x_{t,i_t})}.
    \end{align*}
    When $x_{t,i_t} \geq 1/2$, we have
    \begin{align*}
        \wt{\ell}_{t,i_t} & \leq \frac{S_t}{4\cdot \frac{1}{2}\cdot \sum_{j\neq i_t} x_{t,j}} \\
        & \leq \frac{S_t}{2x_{t,j^*}} \\
        & = \bar{Z}_t - Z_t.
    \end{align*}
    Therefore, we have
    \begin{align*}
        z_{t,i_t}(\bar Z_t) = \frac{S_t}{L_{t,i_t} - \bar{Z}_t} \geq \frac{S_t}{L_{t-1, i_t} + \bar{Z}_t - Z_t - \bar{Z}_t} = x_{t,i_t}.
    \end{align*}
    Therefore,
    \begin{align}
        \sum_{i\in[K]} z_{t,i}(\bar Z_t) & = z_{t,i_t}(\bar Z_t) + \sum_{i\neq i_t} z_{t,i}(\bar Z_t)  \label{eq:same-analysis-above} \\
        & \geq z_{t,i_t}(\bar Z_t) + \sum_{i\neq i_t} z_{t,i}(Z_t) \nonumber \\
        & \ge x_{t,i_t} + \sum_{i\neq i_t} x_{t,i} = 1. \nonumber
    \end{align}
    On the other hand, when $x_{t,i_t} < 1/2$, we have
    \begin{align*}
        \wt{\ell}_{t,i_t} \leq \frac{S_t}{4x_{t,i_t}(1 - x_{t,i_t})} \leq \frac{S_t}{4x_{t,i_t}\cdot \frac{1}{2}} = \frac{S_t}{2x_{t,i_t}}.
    \end{align*}
    If $x_{t,i_t} \geq x_{t,j^*}$, we have $\wt{\ell}_{t,i_t} \leq S_t/(2x_{t,j^*}) = \bar{Z}_t - Z_t$. Then we can apply the same analysis as \Cref{eq:same-analysis-above}. Hence the only remaining case is $x_{t,i_t} < 1/2$ and $x_{t,i_t} < x_{t,j^*}$, where we have
    \begin{align}
        z_{t,j^*}(\bar Z_t) - x_{t,j^*} = x_{t,j^*} > x_{t,i_t} > x_{t,i_t} - z_{t,i_t}(\bar Z_t). \label{eq:diff-i_t-and-j-ast}
    \end{align}
    Therefore, we have
    \begin{align*}
        \sum_{i\in[K]} z_{t,i}(\bar Z_t) - 1 & = \left[z_{t,j^*}(\bar Z_t) - x_{t,j^*}\right] + \left[ z_{t,i_t}(\bar Z_t) - x_{t,i_t} \right] + \sum_{i\neq i_t, j^*} z_{t,i}(\bar Z_t) - x_{t,i} \\
        & > \sum_{i\neq i_t, j^*} z_{t,i}(\bar Z_t) - x_{t,i} \\
        & \geq 0
    \end{align*}
    hence $\sum_{i\in[K]} z_{t,i}(\bar Z_t) > 1$.
    In all, we show that $\sum_{i\in[K]} z_{t,i}(\bar Z_t) \geq 1$, and we are done.

\noindent\textbf{Negative Loss. }
    The proof of case $\wt{\ell}_{t,i} < 0$ is very similar. In this case, we have $z_{t,i_t}(Z_t) > x_{t,i_t}$, which shows that $\sum_{i\in[K]} z_{t,i}(Z_t) > 1$. Therefore, we have $\wt{Z}_t < Z_t$, which implies $z_{t,i} < x_{t,i}$ for $i \neq i_t$ and $z_{t,i_t} >x_{t,i_t}$. Therefore, we only need to verify that $z_{t,i_t} \leq 2x_{t,i_t}$ and $z_{t,i} \geq \frac{1}{2}x_{t,i}$ for $i \neq i_t$. We apply similar proof process as above statement. 
    
    For the first inequality, we just need to verify that there exists a multiplier $\bar {Z}_t$ such that
    \begin{align}
        z_{t,i_t}(\bar Z_t) & \leq 2x_{t,i_t}, \label{eq:cond5} \\
        \sum_{i\in[K]} z_{t,i}(\bar Z_t) & \geq 1. \label{eq:cond6}
    \end{align}
    We set $\bar Z_t$ as the unique solution to $z_{t,i_t}(\lambda) = 2x_{t,i_t}$. Then we have
    \begin{align*}
        \bar{Z}_t &= -\frac{S_t}{2x_{t,i_t}} + L_{t-1,i_t} + \wt{\ell}_{t,i_t} \\
        &= -\frac{S_t}{x_{t,i_t}} + \frac{S_t}{2x_{t,i_t}} +  L_{t-1,i_t} + \frac{\ell^\Skip_{t,i_t}}{x_{t,i_t}} \\
        &\geq Z_t + \frac{S_t}{2x_{t,i_t}} - \frac{C_{t,i_t}}{x_{t,i_t}} \\
        &= Z_t + \frac{S_t}{2x_{t,i_t}} - \frac{S_t}{4x_{t,i_t}(1 - x_{t,i_t})} \\
        &= Z_t - \frac{S_t (2x_{t,i_t} - 1)}{4x_{t,i_t}(1 - x_{t,i_t})}
    \end{align*}
    If $x_{t,i_t} \le 1/2$, we have $\bar{Z}_t \ge Z_t$. Therefore,
    \begin{align*}
        \sum_{i\in[K]} z_{t,i}(\bar Z_t) \geq z_{t,i_t}(\bar Z_t) + \sum_{i\neq i_t} z_{t,i}(Z_t) = 2x_{t,i_t} + \sum_{i\neq i_t} x_{t,i} \geq 1.
    \end{align*}
    For the other case, if $x_{t,i_t} > 1/2$, we have for any $i \neq i_t$
    \begin{align*}
        z_{t,i}(\bar Z_t) &\geq \frac{S_t}{L_{t-1,i} - Z_t + \frac{S_t (2x_{t,i_t} - 1)}{4x_{t,i_t}(1 - x_{t,i_t})}} \\
        &= x_{t,i}\cdot \left(1 + x_{t,i} \cdot \frac{2x_{t,i_t} - 1}{4x_{t,i_t}(1 - x_{t,i_t})}\right)^{-1} \\
        &\geq x_{t,i} \cdot \left(1 + \frac{2x_{t,i_t} - 1}{4x_{t,i_t}}\right)^{-1} \\
        &= x_{t,i} \cdot \frac{4x_{t,i_t}}{6x_{t,i_t} - 1},
    \end{align*}
    where the second inequality holds by $2x_{t,i_t} - 1 > 0$ and $x_{t,i} \le 1 - x_{t,i_t}$. Therefore, we have
    \begin{align*}
        \sum_{i\in[K]} z_{t,i}(\bar Z_t) \ge 2x_{t,i_t} + \sum_{i\neq i_t} x_{t,i} \cdot \frac{4x_{t,i_t}}{6x_{t,i_t} - 1} = 2x_{t,i_t} + \frac{4x_{t,i_t}(1-x_{t,i_t})}{6x_{t,i_t} - 1} \ge 1.
    \end{align*}

    For the second inequality, we need to verify that there exists a multiplier $\bar Z_t$ such that 
    \begin{align}
        z_{t,i}(\bar Z_t) & \ge \frac{1}{2}x_{t,i}\quad \forall i\ne i_t, \label{eq:cond7} \\
        \sum_{i\in[K]} z_{t,i}(\bar Z_t) & \le 1. \label{eq:cond8}
    \end{align}
    Let $j^* = \argmax_{j \neq i_t} x_{t,j}$. Then we set $\bar Z_t$ as
    \begin{align*}
    \bar Z_t = 
        Z_t-\frac{S_t}{x_{t,j^*}}.
    \end{align*}
    Hence, we have for any $i \neq i_t$,
    \begin{align*}
        z_{t,i}\left(\bar Z_t\right) &= \frac{S_t}{(L_{t-1,i} - Z_t) \cdot \left(1 + \frac{S_t}{(L_{t-1,i} - Z_t)x_{t,j^*}}\right)} \\
        &= x_{t,i} \cdot \left(1 + \frac{x_{t,i}}{x_{t,j^*}}\right)^{-1} \\
        &\ge x_{t,i} \cdot \frac{1}{2},
    \end{align*}
    where the inequality holds by $x_{t,j^*}  > x_{t,i}$. Therefore, we have
    \begin{equation*}
        z_{t,i}(\bar Z_t) \ge \frac{1}{2}x_{t,i}.
    \end{equation*}
    Then we verity \Cref{eq:cond8}. Since we have $\bar Z_t \le Z_t$, then $z_{t,i}(\bar Z_t) \le x_{t,i}$ for any $i \neq i_t$. Thus we only need to prove $z_{t,i_t}(\bar Z_t) \le x_{t,i_t}$. Notice that 
    \begin{align*}
        z_{t,i_t}(\bar Z_t) = \frac{S_t}{L_{t-1,i_t} - Z_t + \wt{\ell}_{t,i_t} + \frac{S_t}{x_{t,j^*}}} \le \frac{S_t}{L_{t-1,i_t}-Z_t + \frac{S_t}{x_{t,j^*}} - \frac{S_t}{4x_{t,i_t}(1 - x_{t,i_t})}},
    \end{align*}
    where the inequality is due to $\wt{\ell}_{t,i_t} = \ell^\Skip_{t,i_t}/x_{t,i_t} \geq -C_{t,i_t}/x_{t,i_t}$ and the definition of $C_{t,i_t}$ in \Cref{eq:C-def}. If we have $x_{t,i_t} \ge 1/2$, then
    \begin{align*}
        \frac{S_t}{x_{t,j^*}} - \frac{S_t}{4x_{t,i_t}(1 - x_{t,i_t})} &\ge \frac{S_t}{x_{t,j^*}} - \frac{S_t}{2(1 - x_{t,i_t})} \\
        &= \frac{S_t}{x_{t,j^*}} - \frac{S_t}{2\sum_{j\neq i_t} x_{t,j}} \\
        &\ge \frac{S_t}{x_{t,j^*}} - \frac{S_t}{2x_{t,j^*}} \\
        &\ge 0.
    \end{align*}
    Therefore, we have $z_{t,i_t}(\bar Z_t) \le x_{t,i_t}$, which implies that $\sum_{i\in[K]} z_{t,i}(\bar Z_t) \le \sum_{i\in[K]} x_{t,i} = 1$. For the case when $x_{t,i_t} < 1/2$, we also have
    \begin{align*}
        \frac{S_t}{x_{t,j^*}} - \frac{S_t}{4x_{t,i_t}(1 - x_{t,i_t})} &\ge \frac{S_t}{x_{t,j^*}} - \frac{S_t}{2x_{t,i_t}}.
    \end{align*}
    If $x_{t,j^*} \le 2x_{t,i_t}$, we still have $z_{t,i_t}(\bar Z_t) \le x_{t,i_t}$. Otherwise, for $x_{t,j^*} > 2x_{t,i_t}$, we can write
    \begin{align*}
        z_{t,i_t}(\bar Z_t) & = \frac{S_t}{L_{t-1,i_t} - Z_t + \wt{\ell}_{t,i_t} + \frac{S_t}{x_{t,j^*}}} \\
        &\leq \frac{S_t}{L_{t-1,i_t}-Z_t + \frac{S_t}{x_{t,j^*}} - \frac{S_t}{4x_{t,i_t}(1 - x_{t,i_t})}} \\
        &= x_{t,i_t} \cdot \left(1 + \frac{x_{t,i_t}}{x_{t,j^*}} -\frac{1}{4(1-x_{t,i_t})}\right)^{-1}
    \end{align*}
    Notice that here $x_{t,i_t} \in [0, 1/2]$, which implies that  $1 + \frac{x_{t,i_t}}{x_{t,j^*}} -\frac{1}{4(1-x_{t,i_t})} \ge 1/2$. Therefore, we have
    \begin{align*}
        z_{t,i_t}(\bar Z_t) \le 2x_{t,i_t} \le \frac{1}{2}x_{t,j^*} + x_{t,i_t}.
    \end{align*}
    Since $z_{t,j^*}(\bar Z_t) = \frac
    {1}{2} x_{t,j^*}$, we have
    \begin{align*}
        \sum_{i\in[K]} z_{t,i}(\bar Z_t) \le \sum_{i \in [K] \setminus\{i_t, j^*\}} x_{t,i} + \frac{1}{2}x_{t,j^*} + \frac{1}{2}x_{t,j^*} + x_{t,i_t} = 1.
    \end{align*}
    In all, we conclude \Cref{eq:cond8} and prove this lemma.
\end{proof}
\end{lemma}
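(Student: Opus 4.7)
The plan is to exploit the KKT characterization of both $\vx_t$ and $\vz_t$: by the structure of the log-barrier regularizer, there exist unique Lagrange multipliers $Z_t$ and $\wt{Z}_t$ such that $x_{t,i} = S_t/(L_{t-1,i} - Z_t)$ and $z_{t,i} = S_t/(L_{t,i} - \wt{Z}_t)$, where $L_{t,i} - L_{t-1,i} = \wt{\ell}_{t,i}$ is nonzero only at the played arm $i = i_t$. The key resource we have is the clipping/skipping threshold: $|\wt{\ell}_{t,i_t}| = |\ell^\Skip_{t,i_t}|/x_{t,i_t} \le C_{t,i_t}/x_{t,i_t} = S_t/(4 x_{t,i_t}(1-x_{t,i_t}))$, so the perturbation to the cumulative loss is controlled in terms of the current iterate.

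First, I would assume WLOG that $\wt{\ell}_{t,i_t} > 0$ (the case $\wt{\ell}_{t,i_t} < 0$ is handled by a mirror construction, and the case $=0$ is trivial). Viewing $z_{t,i}$ as a function of the dual variable $\lambda$, a direct comparison shows $z_{t,i_t}(Z_t) < x_{t,i_t}$ while $z_{t,i}(Z_t) = x_{t,i}$ for $i \ne i_t$, so the simplex constraint forces $\wt{Z}_t > Z_t$ with $\wt{Z}_t - Z_t < \wt{\ell}_{t,i_t}$. Consequently $z_{t,i_t} < x_{t,i_t}$ and $z_{t,i} > x_{t,i}$ for $i \ne i_t$, so the substantive inequalities to establish are $z_{t,i_t} \ge x_{t,i_t}/2$ and $z_{t,i} \le 2x_{t,i}$ for $i \ne i_t$. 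By monotonicity of $\lambda \mapsto z_{t,i}(\lambda)$, each reduces to exhibiting an explicit witness $\bar{Z}_t$ that sandwiches $\wt{Z}_t$: for the lower bound, I need $z_{t,i_t}(\bar Z_t) \ge x_{t,i_t}/2$ together with $\sum_i z_{t,i}(\bar Z_t) \le 1$ (which forces $\wt{Z}_t \ge \bar Z_t$); for the upper bound, I need $z_{t,i}(\bar Z_t) \le 2 x_{t,i}$ for all $i \ne i_t$ together with $\sum_i z_{t,i}(\bar Z_t) \ge 1$.

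For the lower bound witness I would split on the size of $x_{t,i_t}$. If $x_{t,i_t} < 1/2$, the clipping estimate $\wt{\ell}_{t,i_t} \le S_t/(2 x_{t,i_t})$ is already strong enough that the trivial choice $\bar Z_t = Z_t$ works (a direct plug-in gives $z_{t,i_t}(Z_t) \ge (2/3) x_{t,i_t}$, and on the other coordinates $z_{t,i}(Z_t) = x_{t,i}$). If $x_{t,i_t} \ge 1/2$, I would set $\bar Z_t$ to be the exact solution of $z_{t,i_t}(\bar Z_t) = x_{t,i_t}/2$ and verify the sum constraint by bounding $z_{t,i}(\bar Z_t) \le x_{t,i} \cdot 4x_{t,i_t}^2/(4x_{t,i_t}-1)$ for $i \ne i_t$ (this follows from the explicit formula and $x_{t,i} \le 1 - x_{t,i_t}$), and then observing that the function $x \mapsto x/2 + 4x^2(1-x)$ stays below $1$ on $[1/2,1]$. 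For the upper bound witness I would take $j^\ast = \argmax_{j \ne i_t} x_{t,j}$ and let $\bar Z_t$ solve $z_{t,j^\ast}(\bar Z_t) = 2 x_{t,j^\ast}$; the target inequality on the non-played arms follows immediately from $x_{t,i} \le x_{t,j^\ast}$, and the sum constraint is checked by sub-cases on whether $x_{t,i_t} \ge 1/2$ (the clipping estimate then yields $\wt{\ell}_{t,i_t} \le \bar Z_t - Z_t$, so $z_{t,i_t}(\bar Z_t) \ge x_{t,i_t}$) or $x_{t,i_t} < 1/2$ (then either $x_{t,i_t} \ge x_{t,j^\ast}$, reducing to the previous case, or a compensation argument shows $z_{t,j^\ast}(\bar Z_t) - x_{t,j^\ast} > x_{t,i_t} - z_{t,i_t}(\bar Z_t)$).

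The hard part will be the bookkeeping in these sub-cases: the inequalities are tight, so using the precise form $C_{t,i_t} = S_t/(4(1-x_{t,i_t}))$ rather than any loose substitute is essential, and the case $x_{t,i_t} < 1/2 < x_{t,j^\ast}$ in particular requires the compensation argument above rather than a coordinate-wise bound. Once the $\wt{\ell}_{t,i_t} > 0$ case is complete, the negative case is handled by an analogous construction: now $\wt{Z}_t < Z_t$, only $z_{t,i_t} \le 2x_{t,i_t}$ and $z_{t,i} \ge x_{t,i}/2$ for $i \ne i_t$ require work, and I would use the exact solution of $z_{t,i_t}(\bar Z_t) = 2 x_{t,i_t}$ for the first and $\bar Z_t = Z_t - S_t/x_{t,j^\ast}$ for the second, verifying the sum conditions with the same split on $x_{t,i_t}$ versus $1/2$ and $x_{t,i_t}$ versus $x_{t,j^\ast}$.
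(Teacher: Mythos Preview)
Your proposal is correct and follows essentially the same approach as the paper: the KKT characterization, the witness-multiplier strategy with monotonicity in $\lambda$, the identical case splits on $x_{t,i_t}$ versus $1/2$ and versus $x_{t,j^\ast}$, the same explicit choices of $\bar Z_t$ (including $\bar Z_t = Z_t + S_t/(2x_{t,j^\ast})$ for the upper bound and $\bar Z_t = Z_t - S_t/x_{t,j^\ast}$ in the negative case), and the same compensation argument for the remaining sub-case all match. One minor slip: the delicate sub-case is $x_{t,i_t} < 1/2$ \emph{and} $x_{t,i_t} < x_{t,j^\ast}$, not $x_{t,i_t} < 1/2 < x_{t,j^\ast}$ as you wrote, but this does not affect the argument.
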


\subsection{Calculating $w_{t,i}$ via Partial Derivatives}

By inspecting the partial derivatives of the multiplier $\tilde Z_t$ with respect to the feedback vector $\tilde \vl_t$, we can derive the following lemmas on $w_{t,i}$. 
\begin{lemma}
\label{lemma:w-expression-formal}
We have
\begin{align*}
    w_{t,i} = \wt \ell_{t,i_t} \cdot \frac{x_{t,i}}{S_{t}} \cdot \left(\mathbbm{1} [i=i_t] - \frac{{\zeta_{t,i_t}^2}} {\sum_{k\in[K]}{\zeta_{t,k}^2}}\right),
\end{align*}
where
\begin{align*}
    \zeta_t := \argmin_{\vz \in \Delta} \langle \mL_{t-1} + \lambda \tilde \ell_t, \vz \rangle + \Psi_t(\vz).
\end{align*}
Here $\lambda$ is some constant ranged from $(0,1)$, implicitly determined by $\tilde \ell_t$.
\begin{proof}
    Notice that conditioning on the time step before $t - 1$, $z_{t,i}$ only depends on $\wt{\ell}_{t,i}$. Then we denote
    \begin{align*}
        \wt{Z}'_{t,i}(\theta):=\left.\frac{\partial \tilde Z_t}{\partial \wt{\ell}_{t,i}}\right\vert_{\wt{\vl}_{t} = \theta\ve_i},
    \end{align*}
    where $\ve_i$ is the unit vector supported at the $i$-th coordinate, and $\theta$ is a scalar.
    
    Notice that 
    \begin{align}
        \sum_{j\in[K]} z_{t,j} = \sum_{j\in[K]} \frac{S_t}{L_{t,j} - \wt{Z}_t} = 1, \label{eq:z-sum-to-1}
    \end{align}
    Then partially derivate $\wt{\ell}_{t,i}$ on both sides of \Cref{eq:z-sum-to-1}, we get
    \begin{align*}
        \sum_{j\in[K]} -\frac{S_{t}}{(\ell_{t,j} - \wt{ Z}_t)^2}\cdot(\mathbbm{1}[j=i] - \wt{Z}'_{t,i}) = - \frac{z_{t,i}^2}{S_{t}} +  \wt{Z}_{t,i}^{\text{skip}} \sum_{j\in[K]}
        \frac{z_{t,j}^2}{S_{t}} = 0,
    \end{align*}
    which solves to 
    \begin{align*}
        \wt{Z}'_{t,i} = \frac{z_{t,i}^2}{\sum_{j\in[K]} {z_{t,j}^2}}.
    \end{align*}
    
    Similarly, we denote
    \begin{align*}
        z'_{t,ij}(\theta) := \left.\frac{\partial z_{t,i}}{\partial \wt \ell_{t,j}}\right\vert_{\wt \vl_{t} = \theta \ve_j}
    \end{align*}
    Then according to the chain rule, we have
    \begin{align*}
        z'_{t,ij} = \frac{z_{t,i}^2}{S_{t}} \cdot\left( z_{t,j}^2/ \left(\sum_{k\in[K]}{x_{t,k}^2}\right) - \mathbbm{1}[i=j] \right).
    \end{align*}

    We denote
    \begin{align*}
        w'_{t,ij}(\theta) & :=\left.\frac{\partial w_{t,i}}{\partial \wt \ell_{t,j}}\right\vert_{\wt \vl_{t} = \theta \ve_j}.
    \end{align*}
    Recall that $w_{t,i} = \frac{x_{t,i}}{S_{t}}(\tilde \ell_{t,i} - (\tilde Z_t - Z_t))$, hence
    \begin{align*}
        w'_{t,ij} & = \frac{x_{t,i}}{S_{t}}\cdot \left(\mathbbm{1}[i=j] - \left( \frac{z_{t,j}^2}{S_{t}}\right) / \left(\sum_{k\in[K]}\frac{z_{t,k}^2}{S_{t}}\right)\right) \\
        & =\frac{x_{t,i}}{S_{t}}\cdot \left(\mathbbm{1}[i=j] - \left( {z_{t,j}^2}\right) / \left(\sum_{k\in[K]}{z_{t,k}^2}\right)\right).
    \end{align*}

    Thus according to the intermediate value theorem, we have
    \begin{align*}
        w_{t,i} = \wt \ell_{t,i_t} \cdot \frac{x_{t,i}}{S_{t}} \cdot \left(\mathbbm{1} [i=i_t] - \frac{{\zeta_{t,i_t}^2}} {\sum_{k\in[K]}{\zeta_{t,k}^2}}\right),
    \end{align*}
    where
    \begin{align*}
        \zeta_t := \argmin_{\vz \in \Delta^{[K]}} \langle \mL_{t-1} + \lambda \tilde \ell_t, \vz \rangle + \Psi_t(\vz),
    \end{align*}
    and $\lambda$ is some constant ranged from $(0,1)$, implicitly determined by $\tilde \ell_t$. Furthermore, \Cref{lemma:constantly-apart} guarantees that
    \begin{align}
    \label{eq:zeta-bound}
        x_{t,i} / 2 \le \zeta_{t,i} \le 2x_{t,i} \quad \forall i \in [K]. 
    \end{align}
\end{proof}
\end{lemma}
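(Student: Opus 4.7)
The plan is to exploit the fact that the importance-weighted feedback $\tilde{\vl}_t$ is supported on the single coordinate $i_t$, so the dependence of $\vz_t$ on the feedback reduces to a one-dimensional perturbation. Concretely, I would parametrize the feedback as $\theta\, \ve_{i_t}$ for $\theta$ sweeping the segment between $0$ and $\tilde{\ell}_{t,i_t}$, and write $\vz_t(\theta)$ for the corresponding FTRL optimizer. At $\theta = 0$ one has $\vz_t(0) = \vx_t$ and hence $w_{t,i}(0) = 0$; therefore by the mean value theorem applied to $\theta \mapsto w_{t,i}(\theta)$ there exists $\lambda \in (0,1)$ with $w_{t,i} = \tilde{\ell}_{t,i_t} \cdot \partial w_{t,i}/\partial\theta\big|_{\theta = \lambda \tilde{\ell}_{t,i_t}}$.

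To compute this partial derivative I would proceed by implicit differentiation of the KKT representation. Writing $z_{t,j}(\theta) = S_t/(L_{t-1,j} + \theta \mathbbm{1}[j=i_t] - \tilde{Z}_t(\theta))$ and differentiating the sum-to-one constraint $\sum_j z_{t,j}(\theta) = 1$ yields
\begin{equation*}
\tilde{Z}_t'(\theta) \;=\; \frac{z_{t,i_t}(\theta)^2}{\sum_{k\in[K]} z_{t,k}(\theta)^2}.
\end{equation*}
The chain rule on the explicit formula for $z_{t,i}(\theta)$ then gives $\partial z_{t,i}/\partial\theta = z_{t,i}(\theta)^2 (\tilde{Z}_t'(\theta) - \mathbbm{1}[i=i_t])/S_t$, and since $w_{t,i} = x_{t,i}/z_{t,i} - 1$ with $x_{t,i}$ independent of $\theta$, I obtain
\begin{equation*}
\frac{\partial w_{t,i}}{\partial \theta} \;=\; \frac{x_{t,i}}{S_t}\Bigl(\mathbbm{1}[i=i_t] - \tilde{Z}_t'(\theta)\Bigr).
\end{equation*}
Plugging in the intermediate argument $\theta = \lambda \tilde{\ell}_{t,i_t}$ and setting $\zeta_t := \vz_t(\lambda \tilde{\ell}_{t,i_t})$ --- which, by construction of the one-parameter family, is exactly the FTRL optimizer under the partially-updated cumulative loss $\mL_{t-1} + \lambda \tilde{\vl}_t$ with regularizer $\Psi_t$ --- produces the desired identity.

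The main technical obstacle is justifying that the map $\theta \mapsto (\tilde{Z}_t(\theta), \vz_t(\theta))$ is smooth (in particular differentiable) on the entire segment, so that implicit differentiation and the mean value theorem are legitimate. This requires $\vz_t(\theta)$ to stay in the relative interior of $\Delta^{[K]}$ throughout, which is ensured by the barrier property of $\Psi_t$ together with \Cref{lemma:constantly-apart} applied along the perturbation path: the same multiplicative-closeness argument used to bound $z_{t,i}/x_{t,i} \in [\tfrac12, 2]$ at the endpoints extends uniformly in $\theta \in [0, \tilde{\ell}_{t,i_t}]$, so $L_{t-1,i} + \theta\mathbbm{1}[i=i_t] - \tilde{Z}_t(\theta)$ never vanishes. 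This uniform control also delivers the auxiliary bound $\zeta_{t,i} \in [x_{t,i}/2,\, 2x_{t,i}]$ recorded in \Cref{eq:zeta-bound}, which is exactly what the subsequent Bregman divergence estimate requires.
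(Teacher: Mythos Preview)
Your proposal is correct and follows essentially the same approach as the paper: one-dimensional parametrization along $\theta\,\ve_{i_t}$, implicit differentiation of the KKT sum-to-one constraint to obtain $\tilde Z_t'(\theta)=\zeta_{t,i_t}^2/\sum_k\zeta_{t,k}^2$, and then the mean value theorem on $\theta\mapsto w_{t,i}(\theta)$. If anything, your write-up is slightly more careful than the paper's --- you correctly call it the mean value theorem (the paper says ``intermediate value theorem''), and you explicitly flag the smoothness/interior-point justification that the paper leaves implicit.
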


\subsection{Bregman Divergence before Expectation: Proof of \Cref{lemma:bregman-bound}}

\begin{lemma}[Formal version of \Cref{lemma:bregman-bound}]
\label{lemma:bregman-bound-formal}
We have
\begin{equation*}
    \textsc{Div}_t \le 2048 \cdot S_t^{-1} (\ell^{\text{skip}}_{t,i_t})^2(1 - x_{t,i_t})^2.
\end{equation*}
Therefore, the expectation of the Bregman Divergence term $\textsc{Div}_t$ can be bounded by
\begin{align*}
    \E[\textsc{Div}_t \mid \gF_{t-1}] \leq 2048 \sum_{i\in[K]}S_t^{-1}\E[(\ell^{\text{skip}}_{t,i})^2 \mid \gF_{t-1}]x_{t,i}(1 - x_{t,i})^2. 
\end{align*}
Moreover, for any $\gT \in [T]$, we can bound the sum of Bregman divergence term by
\begin{align*}
    \sum_{t=1}^\gT \textsc{Div}_t 
    &\leq 4096 \cdot  S_{\gT+1} K \log T.
\end{align*}

\begin{proof}
    Recall the definition of $w_{t,i}$, we have
    \begin{align*}
        \textsc{Div}_t & =\sum_{i \in [K]}S_t(w_{t,i} - \log(1 + w_{t,i}))
    \end{align*}
    By \Cref{lemma:constantly-apart}, we have $w_{t,i} = x_{t,i}/z_{t,i} - 1 \in [-1/2, 1/2]$. Therefore, since $x - \log(1 + x) \leq x^2$, we have
    \begin{align}
    \label{eq:div-ineq-1}
        \textsc{Div}_t \leq \sum_{i\in[K]}S_t\cdot \left(\mathbbm{1}[i=i_t]w_{t,i}^2 + \mathbbm{1}[i\neq i_t]w_{t,i}^2\right).
    \end{align}
    By \Cref{lemma:w-expression-formal}, we have
    \begin{align*}
        w_{t,i} = \wt \ell_{t,i_t} \cdot \frac{x_{t,i}}{S_{t}} \cdot \left(\mathbbm{1} [i=i_t] - \frac{{\zeta_{t,i_t}^2}} {\sum_{k\in[K]}{\zeta_{t,k}^2}}\right),
    \end{align*}
    and $\zeta_{t,i}$ satisfying $x_{t,i}/2 \leq \zeta_{t,i} \leq 2x_{t,i}$ by \Cref{eq:zeta-bound}.
    Then, for $i \neq i_t$, we have
    \begin{align*}
        w_{t,i} & = -\ell^{\text{skip}}_{t,i_t} \cdot \frac{x_{t,i}}{S_{t}} \cdot \frac{\zeta_{t,i_t}}{\sum_{k\in[K]}{\zeta_{t,k}^2}}, \\
        w_{t,i}^2 & = (\ell_{t,i_t}^{\text{skip}})^2 \cdot \frac{x_{t,i}^2}{S_{t}^2} \cdot \frac{\zeta_{t,i_t}^2} {\left(\sum_{k\in[K]}{\zeta_{t,k}^2}\right)^2} 
         \le 16 \cdot (\ell_{t,i_t}^{\text{skip}})^2 \cdot \frac{x_{t,i}^2}{S_{t}^2} \cdot \frac{x_{t,i_t}^2} {\left(\sum_{k\in[K]}{x_{t,k}^2}\right)^2}.
    \end{align*}
    And for $i = i_t$,
    \begin{align*}
        w_{t,i_t} & = \ell^{\text{skip}}_{t,i_t}\cdot \frac {1} {S_{t}} \cdot  \frac{\sum_{j\ne i_t} \zeta_{t,j}^2}{\sum_{k\in[K]}{\zeta_{t,k}^2}},\\
        w_{t,i_t}^2 & = (\ell_{t,i_t}^{\text{skip}})^2 \cdot \frac{ 1 }{S_{t}^2} \cdot \frac{\left(\sum_{j\ne i_t} {\zeta_{t,j}^2}\right)^2} { \left(\sum_{k\in[K]}{\zeta_{t,k}^2}\right)^2} 
         \le 256\cdot (\ell_{t,i_t}^{\text{skip}})^2 \cdot \frac{ 1 }{S_{t}^2} \cdot \frac{\left(\sum_{j\ne i_t} {x_{t,j}^2}\right)^2} { \left(\sum_{k\in[K]}{x_{t,k}^2}\right)^2}.
    \end{align*}

    Therefore, we have
    \begin{equation}
    \label{eq:div-ineq-2}
    \begin{aligned}
        &\sum_{i \in [K]} S_t \cdot \left(\mathbbm{1}[i=i_t]w_{t,i}^2 + \mathbbm{1}[i\neq i_t]w_{t,i}^2\right) \\
        &\leq 256 \cdot (\ell_{t,i_t}^{\Skip})^2\cdot S_t^{-1} \cdot 
        \left( \frac{\left(\sum_{j\neq i_t} {x_{t,j}^2}\right)^2} { \left(\sum_{k\in[K]}{x_{t,k}^2}\right)^2} +  \sum_{i \neq i_t} \frac{x_{t,i}^2 \cdot x_{t,i_t}^2} {\left(\sum_{k\in[K]}{x_{t,k}^2}\right)^2}\right)
    \end{aligned}
    \end{equation}
    Notice that 
    \begin{align*}
        \frac{\left(\sum_{j\neq i_t} {x_{t,j}^2}\right)^2} { \left(\sum_{k\in[K]}{x_{t,k}^2}\right)^2} +  \sum_{i \neq i_t} \frac{x_{t,i}^2 \cdot x_{t,i_t}^2} {\left(\sum_{k\in[K]}{x_{t,k}^2}\right)^2} = \left(1 - \frac{x_{t,i_t}^2}{\sum_{k\in[K]} x_{t,k}^2}\right)^2 + \frac{\sum_{i \neq i_t} x_{t,i}^2}{\sum_{k\in[K]} x_{t,k}^2} \cdot \frac{x_{t,i_t}^2}{\sum_{k\in[K]} x_{t,k}^2}.
    \end{align*}
    For $x_{t,i_t} \leq 1/2$, we have $4(1 - x_{t,i_t})^2 \geq 1$. Then we have
    \begin{align}
    \label{eq:div-ineq-3}
        \left(1 - \frac{x_{t,i_t}^2}{\sum_{k\in[K]} x_{t,k}^2}\right)^2 + \frac{\sum_{i \neq i_t} x_{t,i}^2}{\sum_{k\in[K]} x_{t,k}^2} \cdot \frac{x_{t,i_t}^2}{\sum_{k\in[K]} x_{t,k}^2} \leq 1 + 1 \cdot 1 \leq 8(1 - x_{t,i_t})^2.
    \end{align}
    For $x_{t,i_t} \geq 1/2$, we denote 
    \begin{align*}
        \wt{x}_{t,i} := \frac{x^2_{t,i}}{\sum_{k\in[K]} x^2_{t,k}}, \quad \forall i \in [K],
    \end{align*}
    which satisfies $\wt{x}_{t,i} \leq x_{t,i}^2 / x_{t,i_t}^2 \leq 4x_{t,i}^2$ for every $i \in [K]$. Since $x_{t,i_t} \geq 1/2$, we also have $1/2 \leq x_{t,i_t} \leq \wt{x}_{t,i_t} \leq 1$. Therefore, 
    \begin{equation}
    \label{eq:div-ineq-4}
    \begin{aligned}
        \left(1 - \frac{x_{t,i_t}^2}{\sum_{k\in[K]} x_{t,k}^2}\right)^2 + \frac{\sum_{i \neq i_t} x_{t,i}^2}{\sum_{k\in[K]} x_{t,k}^2} \cdot \frac{x_{t,i_t}^2}{\sum_{k\in[K]} x_{t,k}^2} &= (1 - \wt{x}_{t,i_t})^2 + \wt{x}_{t,i_t} \sum_{i\neq i_t} \wt{x}_{t,i} \\
        &\leq (1 - x_{t,i_t})^2 + \sum_{i \neq i_t} 4x_{t,i}^2 \\
        &\leq (1 - x_{t,i_t})^2 + 4\left(\sum_{i \neq i_t} x_{t,i}\right)^2 \\
        &\leq 5(1 - x_{t,i_t})^2
    \end{aligned}
    \end{equation}
    Combine these cases, we get
    \begin{align*}
        \textsc{Div}_t &\leq \sum_{i\in[K]}S_t\cdot \left(\mathbbm{1}[i=i_t]w_{t,i}^2 + \mathbbm{1}[i\neq i_t]w_{t,i}^2\right) \\
        &\leq 256 \cdot (\ell_{t,i_t}^{\Skip})^2\cdot S_t^{-1} \cdot 
        \left(\left(1 - \frac{x_{t,i_t}^2}{\sum_{k\in[K]} x_{t,k}^2}\right)^2 + \frac{\sum_{i \neq i_t} x_{t,i}^2}{\sum_{k\in[K]} x_{t,k}^2} \cdot \frac{x_{t,i_t}^2}{\sum_{k\in[K]} x_{t,k}^2}\right) \\
        &\leq 2048 \cdot (\ell_{t,i_t}^{\Skip})^2\cdot S_t^{-1} \cdot (1 - x_{t,i_t})^2,
    \end{align*}
    where the first inequality is due to \Cref{eq:div-ineq-1}, the second inequality is due to \Cref{eq:div-ineq-2}, and the last inequality is due to \Cref{eq:div-ineq-3} for $x_{t,i_t} \leq 1/2$ and \Cref{eq:div-ineq-4} for $x_{t,i_t} \geq 1/2$.

    Moreover, taking expectation conditioning on $\gF_{t-1}$, we directly imply
    \begin{align*}
        \E[\textsc{Div}_t \mid \gF_{t-1}] \leq 2048 \sum_{i\in[K]}S_t^{-1}\E[(\ell^{\text{skip}}_{t,i})^2 \mid \gF_{t-1}]x_{t,i}(1 - x_{t,i})^2.
    \end{align*}

    Consider the sum of $\textsc{Div}_t$, we can write
    \begin{align*}
        \sum_{t=1}^\gT \textsc{Div}_t \leq  2048\sum_{t=1}^\gT S_t^{-1} \cdot (\ell^{\text{skip}}_{t,i_t})^2 \cdot (1 - x_{t,i_t})^2.
    \end{align*}
    Notice that by definition of $S_{t+1}$ in \Cref{eq:St-def}, we have
    \begin{align*}
        (K \log T)\cdot (S_{t+1}^2 - S_t^2) = \left(\ell^{\text{clip}}_{t,i_t}\right)^2 \cdot (1 - x_{t,i_t})^2.
    \end{align*}
    Moreover, since $|\ell^{\text{clip}}_{t,i_t}|$ is controlled by $C_{t,i_t}$, we have
    \begin{align}
    \label{eq:St+1-upper}
        S_{t+1}^2 \leq S_t^2 + C_{t,i_t}^2 (1 - x_{t,i_t})^2 \cdot (K \log T)^{-1} = S_t^2\left( 1 + (4K \log T)^{-1} \right) \leq 2S_t^2,
    \end{align}
    where the first inequlity is due to $|\ell^{\text{clip}}_{t,i_t}| \leq C_{t,i_t}$ and the definition of $C_{t,i_t}$ in \Cref{eq:C-def}, and the second inequlity holds by $T \geq 2$.
    Therefore, as $\lvert \ell_{t,i_t}^{\text{skip}}\rvert\le \lvert \ell_{t,i_t}^{\text{clip}}\rvert$ trivially, we have
    \begin{align*}
        \sum_{t=1}^\gT \textsc{Div}_t &\leq 2048\sum_{t=1}^T  S_t^{-1} \cdot (\ell^{\text{clip}}_{t,i_t})^2 \left( 1 - {x}_{t,i_t}\right)^2 \\
        &\leq 2048 \cdot K\log T\sum_{t=1}^\gT \frac{S_{t+1}^2 - S_t^2}{S_t} \\
        &= 2048 \cdot K\log T\sum_{t=1}^\gT \frac{(S_{t+1} + S_t)(S_{t+1} - S_t)}{S_t} \\
        &\leq 2048 (1 + \sqrt{2}) K\log T \sum_{t=1}^\gT S_{t+1} - S_t \\
        &= 4096 \cdot  S_{\gT+1} K \log T,
    \end{align*}
    where the first ineuqality is by \Cref{lemma:bregman-bound} and $|\ell_{t,i_t}^\Skip| \leq |\ell_{t,i_t}^\Clip|$, the second inequality is due to the definition of $S_{t+1}$ in \Cref{eq:St-def} and $|\ell_{t,i_t}^\Skip| \leq |\ell_{t,i_t}^\Clip|$, and the last inequality is due to the $S_{t+1} \leq \sqrt{2} S_t$ by \Cref{eq:St+1-upper}.
\end{proof}

\end{lemma}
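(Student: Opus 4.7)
\textbf{Proof proposal for \Cref{lemma:bregman-bound-formal}.} The plan is to build on the two preceding technical lemmas (\Cref{lemma:constantly-apart} on multiplicative closeness of $\vx_t, \vz_t$ and \Cref{lemma:w-expression-formal} on the explicit form of $w_{t,i}$) to get a pointwise bound on $\textsc{Div}_t$, then integrate over randomness, and finally telescope using the auto-balancing learning rate \Cref{eq:St-def}.

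\textbf{Step 1: Pointwise bound.} Directly from the log-barrier $\Psi_t(\vx) = -S_t\sum_i \log x_i$, one computes $\textsc{Div}_t = S_t \sum_i \bigl(w_{t,i} - \log(1+w_{t,i})\bigr)$ with $w_{t,i} = x_{t,i}/z_{t,i} - 1$. By \Cref{lemma:constantly-apart}, each $w_{t,i}\in[-\tfrac12,\tfrac12]$, so the elementary inequality $x-\log(1+x)\le x^2$ applies, giving $\textsc{Div}_t\le S_t\sum_i w_{t,i}^2$. Next I would substitute the explicit form of $w_{t,i}$ from \Cref{lemma:w-expression-formal}: for $i\ne i_t$, $w_{t,i} = -\tilde\ell_{t,i_t}\tfrac{x_{t,i}}{S_t}\cdot \tfrac{\zeta_{t,i_t}^2}{\sum_k \zeta_{t,k}^2}$, and for $i=i_t$, $w_{t,i_t} = \tilde\ell_{t,i_t}S_t^{-1}\cdot \tfrac{\sum_{j\ne i_t}\zeta_{t,j}^2}{\sum_k \zeta_{t,k}^2}$. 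Since \Cref{lemma:constantly-apart} also yields $\zeta_{t,i}\asymp x_{t,i}$ up to factor $2$, each ratio $\zeta_{t,\cdot}^2/\sum_k \zeta_{t,k}^2$ is within a constant of $x_{t,\cdot}^2/\sum_k x_{t,k}^2$. Recalling $\tilde\ell_{t,i_t} = \ell_{t,i_t}^{\Skip}/x_{t,i_t}$, collecting factors gives
\begin{equation*}
S_t\sum_i w_{t,i}^2 \;\lesssim\; S_t^{-1}(\ell_{t,i_t}^{\Skip})^2\Bigl[\underbrace{\bigl(\tfrac{\sum_{j\ne i_t}x_{t,j}^2}{\sum_k x_{t,k}^2}\bigr)^2}_{i=i_t\text{ term}} \;+\; \underbrace{\tfrac{x_{t,i_t}^2}{\sum_k x_{t,k}^2}\cdot \tfrac{\sum_{j\ne i_t}x_{t,j}^2}{\sum_k x_{t,k}^2}}_{i\ne i_t\text{ terms}}\Bigr].
\end{equation*}

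\textbf{Step 2: The ``key'' algebraic inequality.} The bracketed quantity above must be shown to be $O((1-x_{t,i_t})^2)$. This is the main obstacle. I plan to do a case split on whether $x_{t,i_t}\le \tfrac12$ or $x_{t,i_t}\ge \tfrac12$. In the first case, $1-x_{t,i_t}\ge \tfrac12$ is trivially large and the bracketed term is trivially $\le 2$, so absorbing a constant $8$ does it. In the second case, introduce the normalized probabilities $\tilde x_{t,i} := x_{t,i}^2/\sum_k x_{t,k}^2$, which satisfy $\tilde x_{t,i_t}\ge x_{t,i_t}\ge \tfrac12$ and $\tilde x_{t,i}\le 4x_{t,i}^2$ for $i\ne i_t$. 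Then the bracket becomes $(1-\tilde x_{t,i_t})^2 + \tilde x_{t,i_t}(1-\tilde x_{t,i_t})$, which is bounded by $(1-x_{t,i_t})^2 + 4(\sum_{i\ne i_t}x_{t,i})^2 \le 5(1-x_{t,i_t})^2$. This proves the first displayed inequality.

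\textbf{Step 3: Conditional expectation.} The second inequality follows immediately from the first by conditioning on $\gF_{t-1}$ and averaging over the draw $i_t\sim \vx_t$: the weight $x_{t,i_t}$ in the denominator of $\tilde\ell_{t,i_t}$ is cancelled by the sampling probability, leaving the sum $\sum_i x_{t,i}(1-x_{t,i})^2 \E[(\ell_{t,i}^{\Skip})^2\mid\gF_{t-1}]$ as claimed.

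\textbf{Step 4: Telescoping for the cumulative bound.} For the third inequality, the plan is to use $|\ell_{t,i_t}^{\Skip}|\le |\ell_{t,i_t}^{\Clip}|$ (trivial from definitions) to replace $\Skip$ with $\Clip$ in the pointwise bound, then invoke \Cref{eq:St-def} to rewrite $(\ell_{t,i_t}^{\Clip})^2(1-x_{t,i_t})^2 = (K\log T)(S_{t+1}^2 - S_t^2)$. Thus $\sum_{t\le\gT}\textsc{Div}_t \lesssim (K\log T)\sum_t (S_{t+1}^2-S_t^2)/S_t = (K\log T)\sum_t (S_{t+1}+S_t)(S_{t+1}-S_t)/S_t$. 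To make this telescope I need a bounded-growth estimate $S_{t+1}\le \sqrt{2}\,S_t$, which follows from the clipping threshold $C_{t,i_t}=S_t/[4(1-x_{t,i_t})]$ plugged into \Cref{eq:St-def}: $S_{t+1}^2 - S_t^2 \le S_t^2/(16K\log T)\le S_t^2$. Hence the sum telescopes to $O(K\log T \cdot S_{\gT+1})$, giving the stated $4096\cdot S_{\gT+1}K\log T$. The trickiest bookkeeping here is just tracking the constants and making sure the multiplicative bounds on $\zeta_{t,i}$ propagate correctly; everything else is routine.
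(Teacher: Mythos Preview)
Your proposal is correct and follows essentially the same approach as the paper: the same reduction $\textsc{Div}_t\le S_t\sum_i w_{t,i}^2$ via \Cref{lemma:constantly-apart}, the same substitution of \Cref{lemma:w-expression-formal} with $\zeta_{t,i}\asymp x_{t,i}$, the same case split on $x_{t,i_t}\lessgtr\tfrac12$ (including the auxiliary $\tilde x_{t,i}$ in the hard case), and the same telescoping via $S_{t+1}\le\sqrt{2}\,S_t$ for the cumulative bound. One tiny slip: in Step~3 the $x_{t,i}$ weight in the conditional-expectation sum arises simply from averaging over $i_t\sim\vx_t$ (the $1/x_{t,i_t}$ in $\tilde\ell_{t,i_t}$ was already absorbed in Step~1), but this does not affect the argument.
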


\subsection{Bounding the Expectation of Learning Rate $S_{T+1}$: Proof of \Cref{lemma:adv-S-bound}}

Recall the definition of $S_t$ in \Cref{eq:S-def}, we have
\begin{align*}
    S_{T+1} &= \sqrt{4+\sum_{t=1}^T (\ell_{t,i_t}^{\text{clip}})^2 \cdot (1-x_{t,i_t})^2 \cdot (K\log T)^{-1}} \\
    &\le \sqrt{4+\sum_{t=1}^T (\ell_{t,i_t}^{\text{clip}})^2 \cdot (K\log T)^{-1} }.
\end{align*}
Therefore, to control $\E[S_{T+1}]$, it suffices to control $\E[S]$ where $S$ is defined as follows:
\begin{align}\label{eq:S-def}
    S := \sqrt{4+\sum_{t=1}^T (\ell_{t,i_t}^{\text{clip}})^2 \cdot (K\log T)^{-1} }.
\end{align}

The formal proof of \Cref{lemma:adv-S-bound} which bounds $\E[S_{T+1}]$  is given below.
\begin{lemma}[Restatement of \Cref{lemma:adv-S-bound}]
    We have
    \begin{align}
        \E[S_{T+1}] \leq 2 \cdot \sigma (K \log T)^{-\frac{1}{\alpha}} T^{\frac{1}{\alpha}}.
    \end{align}
\begin{proof}
    By clipping operation, we have
    \begin{align*}
        |\ell^{\text{clip}}_{t,i_t}| \leq C_{t,i_t} = S_t \cdot \frac{1}{4(1-x_{t,i_t})} \leq S_t \leq S_{T+1} \leq S.
    \end{align*}

    Therefore, we know
    \begin{align*}
        4 + \sum_{t=1}^T (\ell^{\text{clip}}_{t,i_t})^2\cdot (K \log T)^{-1} \leq \frac{1}{4}\sum_{t=1}^T |\ell_{t,i_t}|^\alpha \cdot S^{2-\alpha} \cdot (K \log T)^{-1},
    \end{align*}
    where the inequality is given by $S^2 \geq 16$, $|\ell^{\text{clip}}_{t,i_t}|\leq S$ and $|\ell^{\text{clip}}_{t,i_t}| \leq |\ell_{t,i_t}|$. Hence,
    \begin{align*}
        S^{\alpha} \leq \frac{4}{3}(K \log T)^{-1} \sum_{t=1}^T \lvert \ell_{t,i_t}\rvert^\alpha.
    \end{align*}
    
    Take expectation on both sides and use the convexity of mapping $x \mapsto x^{\alpha}$. We get
    \begin{align*}
        \E[S] &\leq \left(\frac{4}{3}\right)^{1/\alpha}(K \log T)^{-1/\alpha} \cdot \sigma \cdot T^{1/\alpha} \\
        &\leq 2(K \log T)^{-1/\alpha} \cdot \sigma \cdot T^{1/\alpha}.
    \end{align*}
    
    The conclusion follows from the fact that $S_{T+1}\le S$.
\end{proof}
\end{lemma}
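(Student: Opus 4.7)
The plan is to reduce the goal to bounding $\E[S]$ for the simplified quantity $S$ defined in \Cref{eq:S-def}, since dropping the harmless factor $(1-x_{t,i_t})^2 \le 1$ immediately gives $S_{T+1} \le S$. The key structural observation is that $S$ satisfies an implicit self-bounding inequality: $S$ appears both on the left-hand side (through its defining sum) and on the right-hand side (through the learning rate $S_t \le S$ that controls the clipping level), and this circular dependence can be unwound because the exponent we care about is $\alpha < 2$.

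The core estimate I would establish is $(\ell^{\Clip}_{t,i_t})^2 \le S^{2-\alpha}\,|\ell_{t,i_t}|^\alpha$ by splitting $(\ell^{\Clip})^2 = |\ell^{\Clip}|^{2-\alpha}\cdot |\ell^{\Clip}|^\alpha$, using $|\ell^{\Clip}_{t,i_t}| \le S_t \le S$ on the first factor (which follows from the clipping threshold $C_{t,i} = S_t/(4(1-x_{t,i}))$ together with monotonicity of $S_t$), and $|\ell^{\Clip}_{t,i_t}| \le |\ell_{t,i_t}|$ on the second. Summing over $t$ and plugging into the definition of $S^2$, we obtain $S^2 \le 4 + S^{2-\alpha}(K\log T)^{-1}\sum_{t=1}^T |\ell_{t,i_t}|^\alpha$. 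The initialization $S_1 = 4$ gives $S^2 \ge 16$, which lets us absorb the constant $4$ into $\tfrac14 S^2$ on the right, after which rearranging yields the clean self-bounding inequality $S^\alpha \le \tfrac{4}{3}(K\log T)^{-1}\sum_{t=1}^T |\ell_{t,i_t}|^\alpha$.

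With the self-bounding inequality in hand, take expectations on both sides. By the heavy-tailed moment assumption $\E[|\ell_{t,i}|^\alpha] \le \sigma^\alpha$ for every arm, so $\E\bigl[\sum_{t=1}^T |\ell_{t,i_t}|^\alpha\bigr] \le T\sigma^\alpha$ after conditioning on the history. This gives $\E[S^\alpha] \le \tfrac{4}{3}\sigma^\alpha T (K\log T)^{-1}$. Finally, since $\alpha > 1$, the map $x\mapsto x^{1/\alpha}$ is concave, so Jensen's inequality delivers $\E[S] \le (\E[S^\alpha])^{1/\alpha} \le (\tfrac43)^{1/\alpha}\sigma(K\log T)^{-1/\alpha}T^{1/\alpha} \le 2\sigma(K\log T)^{-1/\alpha}T^{1/\alpha}$, which is exactly the stated bound.

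The hardest part to nail down rigorously is the bound $|\ell^{\Clip}_{t,i_t}| \le S_t$, because the clipping threshold $C_{t,i_t} = S_t/(4(1-x_{t,i_t}))$ blows up as $x_{t,i_t}\to 1$; one must either verify this in the relevant regime, or alternatively carry the damping $(1-x_{t,i_t})^2$ throughout and work with $S_{T+1}^2$ directly. Apart from this, the proof plan is essentially a one-line self-bounding trick plus Jensen, with the exponent $2-\alpha$ being the crucial reason why the argument closes.
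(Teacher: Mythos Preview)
Your proposal is correct and follows essentially the same self-bounding argument as the paper: pass to the simpler quantity $S$, split $(\ell^{\Clip}_{t,i_t})^2 = |\ell^{\Clip}_{t,i_t}|^{2-\alpha}\,|\ell^{\Clip}_{t,i_t}|^{\alpha}$, absorb the initialization constant via $S^2\ge 16$ to obtain $S^\alpha \le \tfrac{4}{3}(K\log T)^{-1}\sum_t |\ell_{t,i_t}|^\alpha$, and finish with Jensen. Your caution about the step $|\ell^{\Clip}_{t,i_t}| \le S_t$ is well-placed---the inequality $C_{t,i_t}=\tfrac{S_t}{4(1-x_{t,i_t})}\le S_t$ indeed fails when $x_{t,i_t}>3/4$, and the paper's displayed chain glosses over this; your suggested fix of carrying the factor $(1-x_{t,i_t})^2$ throughout and working directly with $S_{T+1}$ (so that $|\ell^{\Clip}_{t,i_t}|\,(1-x_{t,i_t}) \le C_{t,i_t}(1-x_{t,i_t}) = S_t/4 \le S_{T+1}$ holds unconditionally) is the rigorous route and yields the same final bound.
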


\subsection{Adversarial Bounds for Bregman Divergence Terms: Proof of \Cref{lemma:bregman-adv-bound}}
\label{app:breg-adv}

Equipped with \Cref{lemma:adv-S-bound}, we can verify the main result (\Cref{lemma:bregman-adv-bound}) for Bregman divergence terms in adversarial case.
\begin{theorem}[Formal version of \Cref{lemma:bregman-adv-bound}] 
\label{lemma:bregman-adv-bound-formal}
We have
\begin{align*}
    \E\left[ \sum_{t=1}^T \textsc{Div}_t \right] &\leq 8192 \cdot \sigma K^{1-1/\alpha}T^{1/\alpha}(\log T)^{1-1/\alpha}
\end{align*}
\begin{proof}
By \Cref{lemma:bregman-bound-formal} and \Cref{lemma:adv-S-bound}, we have
\begin{align*}
    \E\left[ \sum_{t=1}^T \textsc{Div}_t \right] &\leq 4096 \cdot \E[S_{T+1}] \cdot K\log T \\
    &\leq 8192 \cdot \sigma K^{1-1/\alpha}T^{1/\alpha}(\log T)^{1-1/\alpha}.
\end{align*}
Ignoring the logarithmic terms, we will get $\E\left[ \sum_{t=1}^T \textsc{Div}_t \right] \leq \wt{\gO}(\sigma K^{1-1/\alpha}T^{1/\alpha})$.
\end{proof}
\end{theorem}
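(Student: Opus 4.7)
The statement is a clean corollary of two earlier lemmas, so the proposal is essentially a two-line combination. The plan is to (i) invoke the deterministic almost-sure bound on partial sums of Bregman divergences from \Cref{lemma:bregman-bound-formal}, which gives, for $\mathcal{T}=T$,
\begin{equation*}
\sum_{t=1}^T \textsc{Div}_t \;\le\; 4096\,S_{T+1}\,K\log T
\end{equation*}
path-wise; then (ii) take expectations on both sides, pulling the deterministic factor $4096\,K\log T$ outside; and finally (iii) substitute the upper bound $\E[S_{T+1}]\le 2\sigma (K\log T)^{-1/\alpha}T^{1/\alpha}$ furnished by \Cref{lemma:adv-S-bound}.

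Multiplying the two bounds yields
\begin{equation*}
\E\!\left[\sum_{t=1}^T \textsc{Div}_t\right] \;\le\; 4096\,K\log T \cdot 2\sigma\,(K\log T)^{-1/\alpha}T^{1/\alpha} \;=\; 8192\,\sigma\,K^{1-1/\alpha}T^{1/\alpha}(\log T)^{1-1/\alpha},
\end{equation*}
which matches the claim. No extra integrability or measurability concern arises because the first inequality is a surely-true pathwise statement (it is a telescoping consequence of the definition of $S_t$ together with $|\ell^{\mathrm{clip}}_{t,i_t}|\le C_{t,i_t}$, both already in hand), and $S_{T+1}$ is a nonnegative random variable whose expectation has just been controlled.

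In short, there is essentially no obstacle at this layer: the entire difficulty was absorbed into \Cref{lemma:bregman-bound-formal} (which required the refined $(1-x_{t,i_t})^2$ factor, the multiplicative closeness of $\vz_t$ to $\vx_t$, and the partial-derivative identity for $w_{t,i}$) and into \Cref{lemma:adv-S-bound} (which required the clipping inequality $|\ell^{\mathrm{clip}}_{t,i_t}|\le S_t$ to turn the second-moment surrogate into an $\alpha$-moment via $S^\alpha \lesssim (K\log T)^{-1}\sum_t |\ell_{t,i_t}|^\alpha$ and Jensen's inequality for $x\mapsto x^\alpha$). Thus the only \emph{thing to be careful about} in the present proof is the constant bookkeeping: the $4096$ from \Cref{lemma:bregman-bound-formal} multiplied by the $2$ from \Cref{lemma:adv-S-bound} gives $8192$, and the exponents of $K$ and $\log T$ combine as $1-\tfrac{1}{\alpha}$ after cancelling the $(K\log T)^{-1/\alpha}$ factor against the $K\log T$ prefactor. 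I would present the proof as a single display chaining these inequalities, and remark that this is exactly the $\wt{\gO}(\sigma K^{1-1/\alpha}T^{1/\alpha})$ adversarial rate advertised in \Cref{lemma:bregman-adv-bound}.
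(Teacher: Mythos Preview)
Your proposal is correct and follows exactly the same approach as the paper's proof: apply the pathwise bound $\sum_{t=1}^T \textsc{Div}_t \le 4096\,S_{T+1}\,K\log T$ from \Cref{lemma:bregman-bound-formal}, take expectations, and plug in $\E[S_{T+1}]\le 2\sigma(K\log T)^{-1/\alpha}T^{1/\alpha}$ from \Cref{lemma:adv-S-bound}. The constant bookkeeping and exponent combination you describe match the paper exactly.
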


\subsection{Stochastic Bounds for Bregman Divergence Terms: Proof of \Cref{lemma:sto-bregman}}
\label{app:bregman-sto}

We first calculate the expectation of a single Bregman divergence term $\E[\textsc{Div}_t\mid \mathcal F_{t-1}]$.
\begin{lemma}
\label{lemma:sto-div-bound-formal}
Conditioning on $\gF_{t-1}$, the expectation of Bregman divergence term $\textsc{Div}_t$ can be bounded by
\begin{align*}
    \E[\textsc{Div}_t \mid \gF_{t-1} ] \leq 4096 \cdot S_t^{1-\alpha}\sigma^{\alpha}(1 - x_{t,i^\ast}).
\end{align*}
\begin{proof}
From \Cref{lemma:bregman-bound-formal}, we have
\begin{align}
\label{eq:Dt-upper-bound}
    \E[\textsc{Div}_t \mid \gF_{t-1}] &\leq 2048 \cdot S_t^{-1} \E\left[((\ell_{t,i_t}^{\text{skip}})^2(1 - x_{t,i_t})^2 \mid \gF_{t-1} \right]  \\
    &\leq 2048 \cdot S_t^{-1} \E\left[(\ell^{\text{clip}}_{t,i_t})^2(1 - x_{t,i_t})^2 \mid \gF_{t-1}\right] 
\end{align}
By definition of $C_{t,i_t}$ in \Cref{eq:C-def}, we have
\begin{align*}
    \E[\textsc{Div}_t\mid \gF_{t-1}] &\leq 2048 \cdot S_t^{-1}\E\left[C_{t,i_t}^{2-\alpha}(1-x_{t,i_t})^{2-\alpha} |\ell_{t,i_t}|^{\alpha} (1 - x_{t,i_t})^{\alpha}\right] \\
    &\leq 2048 \cdot \frac{S_{t}^{1-\alpha}}{4^{2-\alpha}}\sigma^{\alpha}\E[(1 - x_{t,i_t})^\alpha \mid \gF_{t-1}] \\
    &\leq 2048 \cdot {S_{t}^{1-\alpha}}\sigma^{\alpha}\sum_{i\in[K]}x_{t,i}(1 - x_{t,i})^\alpha 
\end{align*}
Notice that
\begin{align*}
    \sum_{i\in[K]}x_{t,i}(1 - x_{t,i})^\alpha &\leq \sum_{i \neq i^\ast} x_{t,i} + x_{t,i^\ast}(1 - x_{t,i^\ast}) \\
    &= (1 - x_{t,i^\ast}) + x_{t,i^\ast}(1 - x_{t,i^\ast}) \\
    &\leq 2(1 - x_{t,i^\ast})
\end{align*}
Therefore, we have
\begin{align*}
    \E[\textsc{Div}_t \mid \gF_{t-1} ] \leq 4096 \cdot S_t^{1-\alpha}\sigma^{\alpha}(1 - x_{t,i^\ast}).
\end{align*}
\end{proof}
\end{lemma}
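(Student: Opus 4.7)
\textbf{Proof Proposal for \Cref{lemma:sto-div-bound-formal}.} The plan is to chain three elementary inequalities --- the pointwise Bregman bound from \Cref{lemma:bregman-bound-formal}, the skipping threshold $C_{t,i}$, and the heavy-tail moment assumption $\E[|\ell_{t,i}|^{\alpha}]\le \sigma^{\alpha}$ --- and then exploit the concavity/monotonicity of $x\mapsto x(1-x)^{\alpha}$ to pull out the factor $(1-x_{t,i^\ast})$.

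First I would invoke the pointwise statement of \Cref{lemma:bregman-bound-formal}, namely $\textsc{Div}_t \le 2048\, S_t^{-1}(\ell_{t,i_t}^{\Skip})^2(1-x_{t,i_t})^2$, and take conditional expectation given $\gF_{t-1}$. Since $\lvert \ell_{t,i_t}^{\Skip}\rvert \le C_{t,i_t}$ by construction of the skipping operator and $\lvert \ell_{t,i_t}^{\Skip}\rvert \le \lvert \ell_{t,i_t}\rvert$ as well, I would factor $(\ell_{t,i_t}^{\Skip})^2 = \lvert \ell_{t,i_t}^{\Skip}\rvert^{2-\alpha}\cdot \lvert \ell_{t,i_t}^{\Skip}\rvert^{\alpha}$ and apply the first bound to the $\lvert\cdot\rvert^{2-\alpha}$ factor (legitimate because $\alpha\in(1,2]$) and the second bound to the $\lvert\cdot\rvert^{\alpha}$ factor. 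Substituting $C_{t,i_t} = S_t/(4(1-x_{t,i_t}))$ from \Cref{eq:C-def}, the $(1-x_{t,i_t})^{-(2-\alpha)}$ created by $C_{t,i_t}^{2-\alpha}$ cancels against two of the exponent-$2$ factors, leaving the crucial combination $(1-x_{t,i_t})^{\alpha}$ multiplied by $S_t^{1-\alpha}\lvert \ell_{t,i_t}\rvert^{\alpha}$ (up to the constant $4^{\alpha-2}\le 1$).

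Next I would integrate out $i_t\sim \vx_t$ and the loss sample: since in the stochastic setting the law of $\ell_{t,i}$ is independent of $\gF_{t-1}$, the conditional expectation becomes $\sigma^{\alpha}\sum_{i\in[K]} x_{t,i}(1-x_{t,i})^{\alpha}$. The only remaining task is to show $\sum_{i}x_{t,i}(1-x_{t,i})^{\alpha}\le 2(1-x_{t,i^\ast})$. I would split the sum at $i=i^\ast$: for $i\ne i^\ast$ I bound $(1-x_{t,i})^{\alpha}\le 1$ giving $\sum_{i\ne i^\ast}x_{t,i} = 1-x_{t,i^\ast}$, and for $i=i^\ast$ I use $\alpha>1$ to write $(1-x_{t,i^\ast})^{\alpha}\le 1-x_{t,i^\ast}$, hence $x_{t,i^\ast}(1-x_{t,i^\ast})^{\alpha}\le 1-x_{t,i^\ast}$. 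Collecting the constants $2048\cdot 2 = 4096$ finishes the bound.

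The argument is almost mechanical; there is no serious obstacle once the exponent split $(\ell^{\Skip})^2 = (\ell^{\Skip})^{2-\alpha}\cdot (\ell^{\Skip})^{\alpha}$ is spotted and $C_{t,i_t}$ is unfolded. The one subtle place worth double-checking is that the factor $(1-x_{t,i^\ast})$ (rather than merely $1$) is extracted; this relies specifically on $\alpha>1$ so that the $i=i^\ast$ contribution enjoys $(1-x_{t,i^\ast})^{\alpha}\le (1-x_{t,i^\ast})$. This $(1-x_{t,i^\ast})$ factor is exactly what enables the later self-bounding / stopping-time analysis that yields the $\gO(\log T)$ stochastic regret, which is the reason the lemma is stated in this particular form.
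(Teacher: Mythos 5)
Your proposal is correct and follows essentially the same route as the paper: start from the pointwise bound in \Cref{lemma:bregman-bound-formal}, split $(\ell^{\Skip}_{t,i_t})^2 = |\ell^{\Skip}_{t,i_t}|^{2-\alpha}|\ell^{\Skip}_{t,i_t}|^{\alpha}$ to substitute $C_{t,i_t}^{2-\alpha}$ and $|\ell_{t,i_t}|^{\alpha}$ respectively, unfold $C_{t,i_t}$ so the $(1-x_{t,i_t})$ powers combine to $(1-x_{t,i_t})^{\alpha}$, integrate over $i_t\sim\vx_t$ and the moment bound, and finally estimate $\sum_i x_{t,i}(1-x_{t,i})^\alpha\le 2(1-x_{t,i^\ast})$ by splitting at $i^\ast$ and using $\alpha>1$. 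The only cosmetic difference is that the paper passes through the intermediate bound $|\ell^{\Skip}_{t,i_t}|\le|\ell^{\Clip}_{t,i_t}|\le C_{t,i_t}$ before applying $C_{t,i_t}^{2-\alpha}$, whereas you bound $|\ell^{\Skip}_{t,i_t}|\le C_{t,i_t}$ directly, which is equivalent.
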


Then we bound the sum of Bregman divergence in stochastic case by the stopping time argument. 
\begin{theorem}[Formal version of \Cref{lemma:sto-bregman}]
\label{lemma:sto-bregman-formal}
In stochastic settings, the sum of Bregman divergence terms can be bounded by
\begin{align*}
    \E\left[\sum_{t=1}^T \textsc{Div}_t\right] 
    &\leq 8192 \cdot 16384^{\frac{1}{\alpha - 1}}\cdot K\Delta_{\min}^{-\frac{1}{\alpha - 1}} \sigma^{\frac{\alpha}{\alpha - 1}}  \log T + \frac{1}{4}\gR_T.
\end{align*}
\begin{proof}
We first consider a stopping threshold $M_1$ 
\begin{align}
\label{eq:M1-def}
    M_1 := \inf \left\{ s > 0 : 4096 \cdot s^{1-\alpha} \sigma^\alpha \leq \Delta_{\min} / 4 \right\}
\end{align}
Then, define the stopping time $\gT_1$ as
\begin{align*}
    \gT_1 := \inf \left\{ t \geq 1 : S_t \geq M_1 \right\} \wedge (T+1).
\end{align*}
Since we have $S_{t+1} \leq (1 + (4K\log T)^{-1})S_t \leq 2S_t$ by \Cref{eq:St+1-upper}, we have $S_{\gT_1}\leq 2M_1$. Then by \Cref{lemma:bregman-bound-formal}, we have
\begin{align*}
    \E\left[\sum_{t=1}^{\gT-1} \textsc{Div}_t\right] \leq 4096 \cdot \E\left[S_{\gT_1} \cdot (K \log T)\right] \leq 8192 \cdot M_1 \cdot K \log T.
\end{align*}
Therefore, by \Cref{lemma:sto-div-bound-formal},
\begin{align*}
    \E\left[\sum_{t=1}^T \textsc{Div}_t\right] &= \E\left[\sum_{t=1}^{\gT_1-1} \textsc{Div}_t\right] + \E\left[\sum_{t=\gT_1}^T \E[\textsc{Div}_t \mid \gF_{t-1} ]\right] \\
    &\leq 8192 \cdot M_1 \cdot K\log T + \E\left[ \sum_{t=\gT_1}^T 4096 \cdot  \E[\sigma^\alpha S_t^{1-\alpha} (1 - x_{t,i^\ast}) \mid \gF_{t-1}] \right] \\
    &\leq 8192 \cdot M_1 \cdot K\log T + \E\left[ \sum_{t=\gT_1}^T  4096 \cdot M_1^{1-\alpha} \sigma^\alpha (1 - x_{t,i^\ast}) \right],
\end{align*}
where the last inequality is due to $S_t \geq M$ for $ t \geq \gT_1$ and $1- \alpha < 0$. Therefore, by definition of $M_1$, we have
\begin{align*}
    \E\left[\sum_{t=1}^T \textsc{Div}_t\right] 
    &\leq 8192 \cdot \frac{\Delta_{\min}^{-\frac{1}{\alpha - 1}}}{(4 \cdot 4096)^{-\frac{1}{\alpha - 1}}} \cdot \sigma^{\frac{\alpha}{\alpha - 1}} \cdot K\log T + \frac{\Delta_{\min}}{4}\E\left[ \sum_{t=\gT_1}^T (1 - x_{t,i^\ast}) \right] \\
    &\leq 8192 \cdot 16384^{\frac{1}{\alpha - 1}}\Delta_{\min}^{-\frac{1}{\alpha - 1}} \cdot \sigma^{\frac{\alpha}{\alpha - 1}} \cdot K \log T + \frac{1}{4}\gR_T,
\end{align*}
where the first inequality is due to the definition of $M_1$ in \Cref{eq:M1-def} and the second inequality is due to $\gR_T \geq \E[\Delta_{\min}\sum_{t=1}^T 1 - x_{t,i^\ast}]$ in stochastic case.
\end{proof}
\end{theorem}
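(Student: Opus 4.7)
The plan is to prove the stochastic bound on $\E[\sum_t \textsc{Div}_t]$ via a stopping-time argument that splits the horizon $[T]$ into an \emph{early} regime, where the learning rate $S_t$ is still small and we rely on the deterministic telescoping sum from \Cref{lemma:bregman-bound}, and a \emph{late} regime, where $S_t$ is large enough for the conditional per-round bound from \Cref{lemma:sto-div-bound-formal} to kick in and yield a self-bounding inequality via $\gR_T$.

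First, I would choose the threshold $M_1$ as the smallest $s > 0$ for which $4096\, s^{1-\alpha}\sigma^{\alpha} \le \Delta_{\min}/4$, so $M_1 = \Theta\bigl((\sigma^{\alpha}/\Delta_{\min})^{1/(\alpha-1)}\bigr)$. Define the stopping time $\gT_1 := \inf\{t \ge 1 : S_t \ge M_1\} \wedge (T+1)$, which is a valid $\{\gF_{t-1}\}$-stopping time since $S_t$ is $\gF_{t-1}$-measurable. For the early phase, \Cref{lemma:bregman-bound} gives
\begin{align*}
\sum_{t=1}^{\gT_1-1} \textsc{Div}_t \;\le\; 4096 \cdot S_{\gT_1} \cdot K\log T.
\end{align*}
Combined with the deterministic growth bound $S_{t+1} \le \sqrt{1+(4K\log T)^{-1}}\,S_t \le \sqrt{2}\,S_t$ (coming from $|\ell^{\Clip}_{t,i_t}| \le C_{t,i_t}$ in the definition of $S_{t+1}$), this yields $S_{\gT_1} \le \sqrt{2}\, M_1$ and hence an $O(M_1 K\log T)$ contribution.

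Second, for the late regime $t \ge \gT_1$, I apply the tower property together with \Cref{lemma:sto-div-bound-formal}: on $\{t \ge \gT_1\}$, we have $S_t \ge M_1$ and therefore
\begin{align*}
\E[\textsc{Div}_t \mid \gF_{t-1}] \;\le\; 4096\, S_t^{1-\alpha} \sigma^{\alpha}(1-x_{t,i^\ast}) \;\le\; 4096\, M_1^{1-\alpha}\sigma^{\alpha}(1-x_{t,i^\ast}) \;\le\; \tfrac{\Delta_{\min}}{4}(1-x_{t,i^\ast}),
\end{align*}
where I used $1-\alpha<0$ together with the defining inequality for $M_1$. Summing over $t \ge \gT_1$ produces at most $\tfrac{\Delta_{\min}}{4}\E[\sum_{t=1}^T (1-x_{t,i^\ast})]$, which is bounded by $\gR_T/4$ thanks to the stochastic regret lower bound $\gR_T \ge \Delta_{\min}\E[\sum_{t=1}^T (1-x_{t,i^\ast})]$ that holds under \Cref{ass:unique best arm}. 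Plugging in the explicit form of $M_1$ gives the $K \sigma^{\alpha/(\alpha-1)}\Delta_{\min}^{-1/(\alpha-1)}\log T$ term.

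The main obstacle is choosing the threshold $M_1$ to precisely equate the per-round late-regime bound $4096\, M_1^{1-\alpha}\sigma^\alpha$ with $\Delta_{\min}/4$, so the late-phase contribution absorbs cleanly into $\gR_T/4$ without blowing up constants, while simultaneously keeping the early-phase contribution at the optimal instance-dependent scale. A subtle point is that $\gT_1$ may equal $T+1$, in which case the early-phase bound alone suffices; this case is automatically handled by the telescoping bound, so no separate argument is needed. Putting the two regimes together and collecting constants yields the stated inequality.
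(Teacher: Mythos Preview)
Your proposal is correct and follows essentially the same approach as the paper: the same threshold $M_1$, the same stopping time $\gT_1$, the telescoping bound from \Cref{lemma:bregman-bound} for the early regime, the conditional bound from \Cref{lemma:sto-div-bound-formal} for the late regime, and the self-bounding step via $\gR_T \ge \Delta_{\min}\E[\sum_t(1-x_{t,i^\ast})]$. The only cosmetic difference is that you use the sharper growth factor $S_{t+1}\le\sqrt{2}\,S_t$ whereas the paper relaxes to $S_{\gT_1}\le 2M_1$; this does not affect the argument or the stated constants.
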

\section{$\Psi$-Shifting Terms: Omitted Proofs in Section~\ref{sec:psi-shift}}
\label{app:shift}

\subsection{$\Psi$-Shifting Terms before Expectation: Proof of Lemma~\ref{lemma:psi-shift-bound}}
First we give the formal version of \Cref{lemma:psi-shift-bound}.
\begin{lemma}[Formal version of \Cref{lemma:psi-shift-bound}]
\label{lemma:psi-shift-bound-formal}
We have for any $t \in [T]$,
\begin{align*}
    \textsc{Shift}_t &\leq \sum_{i\in[K]} (S_{t+1} - S_{t})(-\log(\wt{y}_i)) 
    \leq \frac{1}{2}S_t^{-1}(\ell^{\Clip}_{t,i_t})^2(1 - x_{t,i_t})^2.
\end{align*}
Moreover, for any $\gT \in [T-1]$, we have
\begin{align*}
    \sum_{t=0}^\gT \textsc{Shift}_t  \leq S_{\gT+1} \cdot K\log T
\end{align*}

\begin{proof}
By definition of $\Psi_t$ in \Cref{eq:Psi-def}, we have
\begin{align*}
    \textsc{Shift}_t & = (\Psi_{t+1}(\wt{\vy}) - \Psi_t(\wt{\vy})) - (\Psi_{t+1}(\vx_{t+1}) - \Psi_t(\vx_{t+1})) \\
    & \leq\sum_{i\in[K]} (S_{t+1} - S_{t})(-\log(\wt{y}_i)) \\
    & \leq (K\log T) \cdot \frac{S^2_{t+1} - S_t^2}{S_{t+1} + S_t} \\
    & \leq\frac{1}{2}S_t^{-1}{(\ell_{t,i_t}^{\text{clip}})^2(1 - x_{t,i_t})^2},
\end{align*}
where the first inequality is by $\Psi_{t+1}(\vx) \geq \Psi_t(\vx)$, the second inequality is due to the definition of $\wt{\vy}$, and the third inequality holds by $S_{t+1} \geq S_t$.
Moreover, for any $\gT \in [T-1]$, we also have
\begin{align*}
    \sum_{t=0}^\gT \textsc{Shift}_t &= \sum_{t=0}^\gT  (\Psi_{t+1}(\tilde \vy) - \Psi_{t}(\tilde \vy)) - (\Psi_{t+1}(\vx_t) - \Psi_{t}(\vx_t)) \\
    &\leq \sum_{t=0}^\gT (S_{t+1} - S_t)\cdot K \log T \\
    &= S_{\gT+1} \cdot K\log T.
\end{align*}
\end{proof}
\end{lemma}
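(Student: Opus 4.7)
\textbf{Proof Plan for \Cref{lemma:psi-shift-bound-formal}.} The plan is to split the statement into its three inequalities and handle each using the explicit form $\Psi_t(\vx) = -S_t\sum_{i=1}^K \log x_i$ together with the auto-balancing update in \Cref{eq:St-def}. The overall idea is that because $\{S_t\}$ is non-decreasing, the ``bad'' cross term in the $\Psi$-shift is automatically non-negative, so it can be dropped; what remains is a clean telescoping sum, and the per-step bound is tight enough to be matched against the Bregman divergence bound (cf.\ \Cref{lemma:bregman-bound-formal}) by design.

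First I would expand $\textsc{Shift}_t = (\Psi_{t+1}(\wt\vy)-\Psi_t(\wt\vy)) - (\Psi_{t+1}(\vx_{t+1})-\Psi_t(\vx_{t+1}))$, and notice that both brackets take the form $(S_{t+1}-S_t)\sum_i(-\log\cdot)$. Since $S_{t+1}\geq S_t$ by construction and $-\log x_{t+1,i}\geq 0$ as $x_{t+1,i}\in(0,1]$, the second bracket is non-negative and can be dropped, yielding the first inequality $\textsc{Shift}_t\leq \sum_i(S_{t+1}-S_t)(-\log\wt y_i)$.

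Next I would evaluate $\sum_i(-\log\wt y_i)$ coordinate by coordinate using the definition of $\wt\vy$: for $i\neq i^\ast$ we get $-\log\wt y_i=\log T$, while for $i=i^\ast$ one has $-\log(1-(K-1)/T)\leq \log T$ (for $T$ at least moderately larger than $K$; the remaining corner case is trivial). Summing gives $\sum_i(-\log\wt y_i)\leq K\log T$. For the second inequality I then use the identity
\[
S_{t+1}-S_t \;=\; \frac{S_{t+1}^2-S_t^2}{S_{t+1}+S_t} \;\leq\; \frac{S_{t+1}^2-S_t^2}{2S_t},
\]
and plug in $S_{t+1}^2 - S_t^2 = (\ell^{\Clip}_{t,i_t})^2(1-x_{t,i_t})^2/(K\log T)$ from \Cref{eq:St-def}; the $K\log T$ factors cancel and one lands at $\tfrac{1}{2}S_t^{-1}(\ell^{\Clip}_{t,i_t})^2(1-x_{t,i_t})^2$.

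Finally, the telescoping bound follows by summing the first inequality and the $K\log T$ estimate above:
\[
\sum_{t=0}^{\gT}\textsc{Shift}_t \;\leq\; K\log T\cdot\sum_{t=0}^{\gT}(S_{t+1}-S_t) \;=\; K\log T\cdot(S_{\gT+1}-S_0) \;\leq\; S_{\gT+1}\cdot K\log T.
\]
The only genuinely delicate point is verifying $\sum_i(-\log\wt y_i)\leq K\log T$, which requires a small sanity check at the coordinate $i^\ast$; everything else is bookkeeping, with the choice between $S_{t+1}\geq S_t$ and the sharper $S_{t+1}+S_t\geq 2S_t$ being the only place one has to be careful about constants. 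The whole argument is tuned so that the per-step $\Psi$-shift bound is the mirror image of the Bregman divergence bound and gets telescoped into a single factor of $S_{\gT+1}$, which is exactly what the subsequent adversarial and stochastic analyses in \Cref{sec:psi-shift} consume.
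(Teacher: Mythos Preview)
Your proposal is correct and follows essentially the same argument as the paper: drop the non-negative term $(\Psi_{t+1}-\Psi_t)(\vx_{t+1})$ using $S_{t+1}\ge S_t$, bound $\sum_i(-\log\wt y_i)\le K\log T$ from the definition of $\wt\vy$, then write $S_{t+1}-S_t=(S_{t+1}^2-S_t^2)/(S_{t+1}+S_t)\le (S_{t+1}^2-S_t^2)/(2S_t)$ and substitute \Cref{eq:St-def}; the telescoping for the summed bound is identical. Your extra remark about checking the $i^\ast$ coordinate (i.e., $-\log(1-(K-1)/T)\le \log T$ when $K\le T$) is a detail the paper leaves implicit, but otherwise the two proofs coincide.
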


\subsection{Adversarial Bounds for $\Psi$-Shifting Terms: Proof of \Cref{lemma:adv-shift}}
In adversarial case, we have already bounded the expectation of $\E[S_{T+1}]$ in \Cref{app:breg-adv}. Therefore, we have the following lemma.
\begin{theorem}[Formal version of \Cref{lemma:adv-shift}]
\label{lemma:adv-shift-formal}
The expectation of the sum of $\Psi$-shifting terms can be bounded by
\begin{align*}
    \E\left[ \sum_{t=0}^{T-1} \textsc{Shift}_t \right] \leq 2 \cdot 
    \sigma K^{1-1/\alpha}T^{1/\alpha} (\log T)^{1 - 1/\alpha}.
\end{align*}
\begin{proof}
    By \Cref{lemma:psi-shift-bound-formal}, we have
    \begin{align*}
        \E\left[ \sum_{t=0}^{T-1} \textsc{Shift}_t \right] \le \E[S_{T}] \cdot K \log T \leq \E[S_{T+1}] \cdot K \log T.
    \end{align*}
    Notice that \Cref{lemma:adv-S-bound} gives the bound of $\E[S_{T+1}]$. Therefore, we have
    \begin{align*}
        \E\left[ \sum_{t=0}^{T-1} \textsc{Shift}_t \right] \leq 2\cdot \sigma K^{1 - 1/\alpha}T^{1/\alpha} (\log T)^{1 - 1/\alpha}. 
    \end{align*}
\end{proof}
\end{theorem}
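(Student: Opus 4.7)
The plan is to chain together two already-established results: the pathwise shift bound in \Cref{lemma:psi-shift-bound-formal} and the expected learning-rate bound in \Cref{lemma:adv-S-bound}. First I would invoke the second inequality of \Cref{lemma:psi-shift-bound-formal} with $\gT = T-1$, which telescopes the $\Psi$-shifting terms to give the deterministic bound $\sum_{t=0}^{T-1} \textsc{Shift}_t \leq S_T \cdot K\log T$. Taking expectation and using the trivial observation that $\{S_t\}$ is non-decreasing in $t$ (since the update rule in \Cref{eq:St-def} adds a non-negative quantity to $S_t^2$), we get
\begin{equation*}
\E\left[\sum_{t=0}^{T-1} \textsc{Shift}_t\right] \leq \E[S_T] \cdot K\log T \leq \E[S_{T+1}] \cdot K\log T.
\end{equation*}

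Next I would apply \Cref{lemma:adv-S-bound}, which provides $\E[S_{T+1}] \leq 2\sigma (K\log T)^{-1/\alpha} T^{1/\alpha}$. Substituting this into the previous inequality yields
\begin{equation*}
\E\left[\sum_{t=0}^{T-1} \textsc{Shift}_t\right] \leq 2\sigma (K\log T)^{-1/\alpha} T^{1/\alpha} \cdot K\log T = 2\sigma K^{1-1/\alpha} T^{1/\alpha} (\log T)^{1-1/\alpha},
\end{equation*}
which is precisely the claimed adversarial bound. The proof is essentially a two-line combination once the ingredients are in place.

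The hard work is entirely absorbed into the two lemmas being cited; the final step itself is mechanical. Specifically, the substantive obstacles were overcome earlier: \Cref{lemma:psi-shift-bound-formal} relies on the log-barrier identity $\Psi_{t+1}(\vx) - \Psi_t(\vx) = (S_{t+1} - S_t)(-\sum_i \log x_i)$ together with the shifted benchmark $\wt{\vy}$ having $-\log\wt y_i \leq \log T$, and \Cref{lemma:adv-S-bound} uses the heavy-tail moment condition $\E[|\ell_{t,i_t}|^\alpha] \leq \sigma^\alpha$ combined with the clipping guarantee $|\ell_{t,i_t}^{\Clip}| \leq S_t \leq S_{T+1}$ to convert the quadratic accumulation defining $S_{T+1}^2$ into an $\alpha$-power accumulation. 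The auto-balancing choice of $S_t$ described in \Cref{sec:learning-rate} is precisely what makes the shifting term scale with $S_{T+1} \cdot K\log T$ (matching the Bregman divergence bound in \Cref{lemma:bregman-adv-bound-formal}), so no new rate-tuning is needed here.
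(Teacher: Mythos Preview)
Your proposal is correct and follows essentially the same argument as the paper: invoke \Cref{lemma:psi-shift-bound-formal} with $\gT=T-1$ to bound the shift sum by $S_T\cdot K\log T$, use monotonicity of $S_t$ to pass to $S_{T+1}$, and then apply \Cref{lemma:adv-S-bound}. The additional commentary you provide about where the real work lies is accurate and matches the structure of the surrounding lemmas.
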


\subsection{Stochastic Bounds for $\Psi$-Shifting Terms: Proof of \Cref{lemma:sto-shift-sum}}
Again, we start with a single-step bound on $\textsc{Shift}_t$.
\begin{lemma}
\label{lemma:sto-shift-single-formal}
Conditioning on $\gF_{t-1}$ for any $t \in [T-1]$, we have 
\begin{align*}
    \E[\textsc{Shift}_t \mid \gF_{t-1}] \leq S_t^{1-\alpha} \sigma^\alpha (1 - x_{t,i^\ast})
\end{align*}
\begin{proof}
    According to \Cref{lemma:psi-shift-bound-formal}, we have
    \begin{align*}
        \E[\textsc{Shift}_t \mid \gF_{t-1}] \leq \frac{1}{2}S_t^{-1}\E\left[ \ell^{\Clip}_{t,i_t}(1 - x_{t,i_t})^2 \middle|  \gF_{t-1}\right].
    \end{align*}
    By definition of $C_{t,i_t}$ in \Cref{eq:C-def}, we have
    \begin{align*}
        \E[\textsc{Shift}_t \mid \gF_{t-1}] &\leq \frac{1}{2}S_t^{-1} \cdot \E\left[C_{t,i_t}^{2-\alpha} (1 - x_{t,i_t})^{2-\alpha} |\ell_{t,i_t}|^\alpha (1 - x_{t,i_t})^\alpha \mid \gF_{t-1} \right] \\
        &\leq \frac{S_t^{1-\alpha}}{2 \cdot 4^{2-\alpha}} \sigma^\alpha \E[(1 - x_{t,i_t})^\alpha \mid \gF_{t-1}] \\
        &\leq \frac{1}{2} S_t^{1-\alpha} \sigma^{\alpha} \sum_{i \in [K]} x_{t,i} (1 - x_{t,i})^\alpha
    \end{align*}
    By similar method used in the proof of \Cref{lemma:sto-div-bound-formal}, we further have
    \begin{align*}
        \sum_{i\in[K]} x_{t,i}(1 - x_{t,i})^\alpha \leq \sum_{i\neq i^\ast} x_{t,i} + x_{t,i^\ast} ( 1 - x_{t,i^\ast}) \leq 2(1 - x_{t,i^\ast}).
    \end{align*}
    Therefore, we have
    \begin{align*}
        \E[\textsc{Shift}_t \mid \gF_{t-1}] \leq S_t^{1-\alpha}\sigma^\alpha (1 - x_{t,i^\ast})
    \end{align*}
\end{proof}
\end{lemma}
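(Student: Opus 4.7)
The plan is to mirror exactly the argument already used for $\E[\textsc{Div}_t \mid \gF_{t-1}]$ in \Cref{lemma:sto-div-bound-formal}, because \Cref{lemma:psi-shift-bound-formal} hands us a single-step bound on $\textsc{Shift}_t$ of the same algebraic shape as the one on $\textsc{Div}_t$ in \Cref{lemma:bregman-bound-formal}, namely a constant times $S_t^{-1}(\ell_{t,i_t}^{\Clip})^2(1-x_{t,i_t})^2$. So the entire task reduces to converting that squared-loss factor into an $L^\alpha$-moment using the clipping threshold $C_{t,i_t}$, then averaging over the sampling distribution $\vx_t$.

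First, I would invoke \Cref{lemma:psi-shift-bound-formal} to obtain $\textsc{Shift}_t \le \tfrac{1}{2}S_t^{-1}(\ell_{t,i_t}^{\Clip})^2(1-x_{t,i_t})^2$. Then I split $(\ell_{t,i_t}^{\Clip})^2 = |\ell_{t,i_t}^{\Clip}|^{2-\alpha}\cdot|\ell_{t,i_t}^{\Clip}|^{\alpha}$, bounding the first factor by $C_{t,i_t}^{2-\alpha}$ via the clipping rule and the second by $|\ell_{t,i_t}|^{\alpha}$ since clipping is non-expansive. Substituting $C_{t,i_t} = S_t/(4(1-x_{t,i_t}))$ gives
\begin{align*}
\textsc{Shift}_t \;\le\; \frac{S_t^{\,1-\alpha}}{2\cdot 4^{\,2-\alpha}}\,|\ell_{t,i_t}|^{\alpha}\,(1-x_{t,i_t})^{\alpha}.
\end{align*}
Taking conditional expectation on $\gF_{t-1}$, using that $i_t\sim\vx_t$ is independent of $\{\ell_{t,i}\}_{i\in[K]}$ given $\gF_{t-1}$, and applying $\E[|\ell_{t,i}|^{\alpha}\mid\gF_{t-1}]\le\sigma^{\alpha}$ coordinatewise yields
\begin{align*}
\E[\textsc{Shift}_t\mid\gF_{t-1}]\;\le\;\frac{S_t^{\,1-\alpha}\sigma^{\alpha}}{2\cdot 4^{\,2-\alpha}}\sum_{i\in[K]}x_{t,i}(1-x_{t,i})^{\alpha}.
\end{align*}

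To finish, I would bound $\sum_{i}x_{t,i}(1-x_{t,i})^\alpha \le 2(1-x_{t,i^\ast})$ by the same two-line split used in \Cref{lemma:sto-div-bound-formal}: for sub-optimal arms use $(1-x_{t,i})^{\alpha}\le 1$, and for the optimal arm use $(1-x_{t,i^\ast})^{\alpha}\le 1-x_{t,i^\ast}$ since $\alpha\ge 1$; adding gives $(1-x_{t,i^\ast})+x_{t,i^\ast}(1-x_{t,i^\ast})\le 2(1-x_{t,i^\ast})$. Since $\alpha\in(1,2]$ we have $4^{2-\alpha}\ge 1$, so the leading constant $\tfrac{1}{4^{2-\alpha}}$ is at most $1$, which absorbs the factor of $2$ and produces exactly the target inequality $\E[\textsc{Shift}_t\mid\gF_{t-1}]\le S_t^{1-\alpha}\sigma^{\alpha}(1-x_{t,i^\ast})$.

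I do not expect a genuine obstacle here: the lemma is a single-step moment computation whose structural ingredients are already in place (the $(1-x_{t,i_t})^2$ factor produced by the refined log-barrier analysis, the clipping threshold $C_{t,i_t}$ designed precisely to make the $|\cdot|^{2-\alpha}/|\cdot|^{\alpha}$ trade-off go through, and the collapse $\sum_i x_{t,i}(1-x_{t,i})^\alpha\lesssim 1-x_{t,i^\ast}$). The only place requiring any care is verifying that the clipping-threshold substitution leaves a clean factor $(1-x_{t,i_t})^{\alpha}$ after cancelling $(1-x_{t,i_t})^{2}$ against $(1-x_{t,i_t})^{-(2-\alpha)}$; this is a direct algebraic check.
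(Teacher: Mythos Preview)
Your proposal is correct and follows essentially the same approach as the paper's own proof: invoke \Cref{lemma:psi-shift-bound-formal}, split $(\ell_{t,i_t}^{\Clip})^2$ into a $|\cdot|^{2-\alpha}$ part bounded by $C_{t,i_t}^{2-\alpha}$ and a $|\cdot|^{\alpha}$ part bounded by $|\ell_{t,i_t}|^{\alpha}$, substitute $C_{t,i_t}=S_t/(4(1-x_{t,i_t}))$, take expectation over $i_t\sim\vx_t$ and the heavy-tail moment, and finish with the two-line bound $\sum_i x_{t,i}(1-x_{t,i})^{\alpha}\le 2(1-x_{t,i^\ast})$. The only cosmetic difference is that the paper drops the $4^{2-\alpha}$ factor one step earlier, while you carry it to the end and use $4^{2-\alpha}\ge 1$ there; the arithmetic is identical.
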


\begin{theorem}[Formal version of \Cref{lemma:sto-shift-sum}]
\label{lemma:sto-shift-sum-formal}
We can bound the sum of $\Psi$-shifting terms by the following inequality.
\begin{align*}
    \E\left[ \sum_{t=0}^{T-1} \textsc{Shift}_t \right] \leq 2K \sigma^{\frac{\alpha}{\alpha - 1}}\Delta_{\min}^{-\frac{1}{\alpha - 1}} \log T  + \frac{1}{4}\gR_T
\end{align*}
\begin{proof}
Similar to the proof in \Cref{lemma:sto-bregman-formal}, we consider a stopping threshold $M_2$ defined as follows
\begin{align*}
    M_2 := \inf \left\{ s > 0 : s^{1-\alpha} \sigma^\alpha \leq \Delta_{\min} / 4 \right\}.
\end{align*}
Then we similarly define the stopping time $\gT_2$ as
\begin{align*}
    \gT_2  := \inf \left\{ t \geq 1 : S_t \geq M_2 \right\} \wedge T.
\end{align*}
Therefore, we have $S_{\gT} \leq 2 S_{\gT - 1} \leq 2M_2 $. Then by \Cref{lemma:psi-shift-bound-formal}, we have
\begin{align*}
    \E\left[ \sum_{t=0}^{\gT_2} \textsc{Shift}_t \right] \leq \E[S_{\gT_2 + 1}] \cdot K \log T \leq 2M_2 \cdot K \log T
\end{align*}
Therefore, we have
\begin{align*}
    \E\left[ \sum_{t=0}^{T-1} \textsc{Shift}_t \right] &= \E\left[ \sum_{t=0}^{\gT_2 - 1} \textsc{Shift}_t \right] + \E\left[ \sum_{t=\gT_2}^{T-1} \E[\textsc{Shift}_t\mid \gF_{t-1}] \right] \\
    &\leq 2M_2 \cdot K \log T + \E\left[ \sum_{t=\gT_2}^{T-1} \sigma^\alpha \E[ S_t^{1-\alpha}(1 - x_{t,i^\ast}) \mid \gF_{t-1}] \right] \\
    &\leq 2M_2 \cdot K \log T + \E\left[ \sum_{t=\gT_2}^{T-1} \sigma^\alpha M_2^{1-\alpha} \E[ (1 - x_{t,i^\ast}) \mid \gF_{t-1}] \right] 
\end{align*}
where the first inequality holds by \Cref{lemma:sto-shift-single-formal} and the second inequality is due to $S_t \geq M_2$ for $t \geq \gT_2$ and $1 - \alpha < 0$. Notice that $\gR_T \geq \E[\sum_{t=1}^T \Delta_{\min}(1 - x_{t,i^\ast})]$. We can combine the definition of $M_2$ and get
\begin{align*}
\E\left[ \sum_{t=0}^{T-1} \textsc{Shift}_t \right] \leq 2 \cdot {\Delta_{\min}^{-\frac{1}{\alpha - 1}}}\sigma^{\frac{\alpha}{\alpha - 1}}\cdot K \log T + \frac{1}{4}\gR_T
\end{align*}
\end{proof}
\end{theorem}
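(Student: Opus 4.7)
The plan is to run a stopping-time argument that essentially mirrors the one carried out for the Bregman divergence terms in \Cref{lemma:sto-bregman-formal}. The two ingredients that make this work are already in hand: the cumulative bound $\sum_{t=0}^{\gT}\textsc{Shift}_t \le S_{\gT+1}\cdot K\log T$ from \Cref{lemma:psi-shift-bound-formal}, which is sharp while the learning rate $S_t$ is still small, and the conditional single-step bound $\E[\textsc{Shift}_t\mid\gF_{t-1}] \le S_t^{1-\alpha}\sigma^\alpha(1-x_{t,i^\ast})$ from \Cref{lemma:sto-shift-single-formal}, which becomes favorable once $S_t$ is large (since $1-\alpha<0$).

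First, I would choose the stopping threshold $M_2 := \inf\{s>0 : s^{1-\alpha}\sigma^\alpha \le \Delta_{\min}/4\}$, which evaluates to $M_2 = \Theta((\sigma^\alpha/\Delta_{\min})^{1/(\alpha-1)})$, and define the stopping time $\gT_2 := \inf\{t\ge 1 : S_t \ge M_2\}\wedge T$. Since \Cref{eq:St+1-upper} guarantees $S_{t+1}\le \sqrt 2\,S_t$, the learning rate at the stopping time satisfies $S_{\gT_2}\le 2M_2$. Splitting the sum at $\gT_2$, the ``before'' piece is bounded by $\sum_{t<\gT_2}\textsc{Shift}_t \le S_{\gT_2}\cdot K\log T \le 2M_2\cdot K\log T = \gO(K\sigma^{\alpha/(\alpha-1)}\Delta_{\min}^{-1/(\alpha-1)}\log T)$, which matches the first term of the target bound up to constants.

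For the ``after'' piece, every round $t\ge \gT_2$ has $S_t\ge M_2$ and hence $S_t^{1-\alpha}\sigma^\alpha\le \Delta_{\min}/4$ by the definition of $M_2$; plugging this into the conditional single-step bound yields $\E[\textsc{Shift}_t\mid\gF_{t-1}] \le (\Delta_{\min}/4)(1-x_{t,i^\ast})$. Summing and invoking the stochastic self-bounding inequality $\gR_T \ge \Delta_{\min}\cdot\E[\sum_t(1-x_{t,i^\ast})]$ (which holds under \Cref{ass:unique best arm}) then gives $\E[\sum_{t\ge\gT_2}\textsc{Shift}_t] \le \gR_T/4$. Adding the two pieces yields the claimed inequality.

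The main obstacle is minor: verifying the self-bounding step and tracking the constants through the choice of $M_2$. Neither supporting bound suffices alone -- the cumulative bound blows up once $S_t$ is large, while the single-step bound is too loose when $S_t$ is small -- but the threshold $M_2$ is tuned precisely so that the two regimes join seamlessly at $\gT_2$, and the factor $\Delta_{\min}/4$ is engineered so that self-bounding converts the tail contribution into $\gR_T/4$ rather than something larger.
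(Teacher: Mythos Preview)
Your proposal is correct and follows essentially the same approach as the paper's own proof: the same stopping threshold $M_2$, the same stopping time $\gT_2$, the same split into a ``before'' piece controlled via \Cref{lemma:psi-shift-bound-formal} together with $S_{\gT_2}\le 2M_2$, and an ``after'' piece controlled via \Cref{lemma:sto-shift-single-formal} plus the self-bounding inequality $\gR_T\ge \Delta_{\min}\,\E[\sum_t (1-x_{t,i^\ast})]$. Your tracking of indices (summing to $\gT_2-1$ and bounding by $S_{\gT_2}$) is in fact slightly cleaner than the paper's display.
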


\section{Skipping Loss Terms: Omitted Proofs in Section~\ref{sec:skip}}
\label{app:skip}
\subsection{Constant Clipping Threshold Argument: Proof of Lemma~\ref{lemma:skiperr-bound}}
\label{app:skiperr-threshold}
\begin{proof}[Proof of Lemma~\ref{lemma:skiperr-bound}]
For a given clipping threshold constant $M$, we first perform clipping operation $\operatorname{Clip}(\ell_{t,i_t},M)$, which gives a \textit{universal} clipping error
\begin{equation*}
\textsc{SkipErr}_t^{\text{Univ}}(M):= \begin{cases}
\ell_{t,i_t} - M & \ell_{t,i_t} > M \\
0 & -M \le \ell_{t,i_t} \le M\\
\ell_{t,i_t} + M & \ell_{t,i_t} < -M
\end{cases}.
\end{equation*}

If $C_{t,i_t}\ge M$, then this clipping is ineffective. Otherwise, we get the following \textit{action-dependent} clipping error
\begin{equation*}
\textsc{SkipErr}_t^{\text{ActDep}}(M) := \begin{cases}
\textsc{SkipErr}_t - \textsc{SkipErr}_t^{\text{Univ}} & C_{t,i_t} < M \\
0 & C_{t,i_t} \ge M
\end{cases}.
\end{equation*}
Therefore, we directly have $|\textsc{SkipErr}_t| \leq |\textsc{SkipErr}_t^{\text{Univ}}(M)| + |\textsc{SkipErr}_t^{\text{ActDep}}(M)|$. Another important observation is that if $\textsc{SkipErr}_t^{\text{ActDep}}(M) \neq 0$, we have $S_t \leq M$, $(\ell^{\Clip}_{t,i_t})^2 = C_{t,i_t}^2$, and $|\textsc{SkipErr}_t^{\text{ActDep}}(M)| \leq M$, which implies that 
\begin{align*}
    S_{t+1}^2 = S_t^2 + C_{t,i_t}^2 \cdot (1-x_{t,i_t})^2 \cdot (K \log T)^{-1} = S_t^2 \left(1 + (16K \log T)^{-1}\right).
\end{align*}
Thus the number of occurrences of non-zero $\textsc{SkipErr}_t^{\text{ActDep}}(M)$'s is upper-bounded by $\lceil\log_{\sqrt{1+(K\log T)^{-1} / 4}} M\rceil$. Then we can bound the sum of the sub-optimal skipping loss by
\begin{align*}
    &\quad \E\left[\sum_{t=1}^T |\textsc{SkipErr}_t| \cdot \mathbbm{1}[i_t \neq i^\ast]\right] \\
    &\leq \E\left[\sum_{t=1}^T |\textsc{SkipErr}_t^{\text{Univ}}(M)| \cdot \mathbbm{1}[i_t \neq i^\ast]\right] + \E\left[\sum_{t=1}^T |\textsc{SkipErr}_t^{\text{ActDep}}(M)|\right] \\
    &\leq \E\left[\sum_{t=1}^T |\ell_{t,i_t}|\cdot \mathbbm{1}[|\ell_{t,i_t} \geq M] \cdot \mathbbm{1}[i_t \neq i^\ast]\right] + M\cdot \E[\log_{\sqrt{1+(16K\log T)^{-1}}} M] + M \\
    &\leq \sigma^\alpha M^{1-\alpha}\E\left[\sum_{t=1}^T\mathbbm{1}[i_t \neq i^\ast]\right] + 2M \cdot \frac{\log M}{\log(1+(16 K\log T)^{-1})} + M.
\end{align*}
\end{proof}

\subsection{Adversarial Bounds for Skipping Losses: Proof of \Cref{lemma:adv-skip}}\label{app:adv-skip}
\begin{theorem}[Formal version of \Cref{lemma:adv-skip}]
\label{lemma:adv-skip-formal}
By setting $M^{\text{adv}} := \sigma (K\log T)^{-\nicefrac{1}{\alpha}} T^{\nicefrac{1}{\alpha}}$, we have
\begin{align*}
    &\E\left[\sum_{t=1}^T |\textsc{SkipErr}_t|\cdot \mathbbm{1}[i_t \neq i^\ast]\right] \\
    &\leq \sigma K^{1-\nicefrac{1}{\alpha}} T^\alpha \cdot \left((\log T)^{1-\nicefrac{1}{\alpha}} + \frac{2}{\alpha}(\log T)^{2-\nicefrac{1}{\alpha}} + 2\log \sigma - \frac{2}{\alpha} \log K - \frac{2}{\alpha} \log \log T\right)
\end{align*}
\begin{proof}

Notice that we have $2x \geq \log(1+x^{-1})$ holding for $x \geq 1$. Therefore,
\begin{align*}
    &\quad \E\left[\sum_{t=1}^T \lvert \textsc{SkipErr}_t \cdot \mathbbm{1}[i_t \neq i^\ast] \rvert\right]\\
    &\leq \sigma^\alpha (M^{adv})^{1 - \alpha} \E\left[\sum_{t=1}^T \mathbbm{1}[i_t \neq i^\ast]\right] + 2M^{adv} \cdot \log M^{adv} \cdot K \log T.
\end{align*}
By the expression of $M^{adv}$, we have
\begin{align*}
    &\quad \E\left[\sum_{t=1}^T \lvert \textsc{SkipErr}_t \cdot \mathbbm{1}[i_t \neq i^\ast] \rvert\right]\\
    &\leq \sigma (K\log T)^{1-\nicefrac{1}{\alpha}} T^{\nicefrac{1}{\alpha}} + 16 \cdot \sigma (K \log T)^{1 - \nicefrac{1}{\alpha}} T^{\nicefrac{1}{\alpha}} \cdot \log \left( \sigma (K\log T)^{-\nicefrac{1}{\alpha}} T^{\nicefrac{1}{\alpha}} \right) \\
    &= \sigma K^{1-\nicefrac{1}{\alpha}} T^{\nicefrac{1}{\alpha}} \cdot \left((\log T)^{1-\nicefrac{1}{\alpha}} + \frac{2}{\alpha}(\log T)^{2-\nicefrac{1}{\alpha}} + 2\log \sigma - \frac{2}{\alpha} \log K - \frac{2}{\alpha} \log \log T\right).
\end{align*}
\end{proof}
\end{theorem}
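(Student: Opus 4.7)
The plan is to directly apply \Cref{lemma:skiperr-bound} with the specific stopping-time threshold $M = M^{\text{adv}} := \sigma(K\log T)^{-1/\alpha}T^{1/\alpha}$, use the deterministic bound $\sum_{t=1}^T \mathbbm{1}[i_t\ne i^\ast]\le T$, and then evaluate each of the three resulting terms via elementary estimates. Substituting yields the upper bound
\[
M^{\text{adv}}\cdot\frac{\sigma^\alpha}{(M^{\text{adv}})^{\alpha}}\cdot T \;+\; \frac{2\,M^{\text{adv}}\log M^{\text{adv}}}{\log(1+(16K\log T)^{-1})} \;+\; M^{\text{adv}},
\]
so the task reduces to simplifying these three quantities with the chosen $M^{\text{adv}}$.

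For the first term, I would compute $\sigma^\alpha (M^{\text{adv}})^{1-\alpha}T$: using $(M^{\text{adv}})^{1-\alpha} = \sigma^{1-\alpha}(K\log T)^{1-1/\alpha}T^{1/\alpha - 1}$, this collapses exactly to $\sigma K^{1-1/\alpha}T^{1/\alpha}(\log T)^{1-1/\alpha}$, which is the leading $(\log T)^{1-1/\alpha}$ contribution in the theorem statement. For the second (and dominant) term, I would invoke the elementary inequality $\log(1+y)\ge y/2$ valid for $y\in[0,1]$ with $y=(16K\log T)^{-1}\le 1$ (since $K,\log T\ge 1$), which yields $1/\log(1+(16K\log T)^{-1}) \le 32K\log T$. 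Consequently this term is bounded by $64\cdot M^{\text{adv}}\cdot K\log T\cdot\log M^{\text{adv}}$. The identity $M^{\text{adv}}\cdot K\log T = \sigma K^{1-1/\alpha}T^{1/\alpha}(\log T)^{1-1/\alpha}$, together with the expansion
\[
\log M^{\text{adv}}=\log\sigma+\tfrac{1}{\alpha}\log T-\tfrac{1}{\alpha}\log K-\tfrac{1}{\alpha}\log\log T,
\]
then produces the $\tfrac{2}{\alpha}(\log T)^{2-1/\alpha}$ contribution along with the $2\log\sigma - \tfrac{2}{\alpha}\log K - \tfrac{2}{\alpha}\log\log T$ subleading corrections, after the numerical prefactor is absorbed. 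The residual $+M^{\text{adv}}$ is of order $\sigma K^{1-1/\alpha}T^{1/\alpha}(\log T)^{-1/\alpha}$ and is dominated by the first term.

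The main obstacle is not any new technical estimate but the careful bookkeeping of constants and logarithmic factors: the statement keeps explicit subleading terms with specific coefficients, so one must track precisely how the factor $2$ in front of $\log M^{\text{adv}}$ combines with the factor $32$ from the $\log(1+y)\ge y/2$ estimate and how the expansion of $\log M^{\text{adv}}$ distributes across the resulting product. Once the three terms are regrouped by their $(\log T)$ exponents, the claimed inequality follows by term-by-term comparison, completing the proof.
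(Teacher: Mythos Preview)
Your proposal is correct and follows essentially the same approach as the paper: invoke \Cref{lemma:skiperr-bound} with $M=M^{\text{adv}}$, bound $\sum_t\mathbbm{1}[i_t\ne i^\ast]\le T$, replace $1/\log(1+(16K\log T)^{-1})$ by $\O(K\log T)$ via the elementary inequality $\log(1+y)\ge y/2$ on $[0,1]$, and then expand $\log M^{\text{adv}}$. The paper's own proof is in fact somewhat loose with the numerical prefactors (the coefficient drifts from $2$ to $16$ and back to $2$ across successive displays), so your more explicit tracking of the $32$/$64$ factors is if anything cleaner; the constants in the displayed statement should be read as order-of-magnitude, consistent with the $\tilde{\mathcal O}$ conclusion.
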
 

\subsection{Stochastic Bounds for Skipping Losses: Proof of \Cref{lemma:sto-ht-sum}}\label{app:sto-skip}
\begin{theorem}[Formal version of \Cref{lemma:sto-ht-sum}]
\label{lemma:sto-ht-sum-formal}
By setting $M^{\text{sto}} := 4^{\frac{1}{\alpha - 1}}\sigma^{\frac{\alpha}{\alpha - 1}} \Delta_{\min}^{-\frac{1}{\alpha - 1}}$, we have
\begin{align*}
    &\quad \E\left[\sum_{t=1}^T |\textsc{SkipErr}_t|\cdot \mathbbm{1}[i_t \neq i^\ast]\right] \\
    &\leq \sigma^{\frac{\alpha}{\alpha - 1}}\Delta_{\min}^{-\frac{1}{\alpha - 1}} \cdot K \log T \cdot \frac{4^{\frac{1}{\alpha - 1}}}{\alpha - 1}\left( 2\log 4 + \alpha\log \sigma + \log (1 / \Delta_{\min}) \right) + \frac{1}{4}\gR_T.
\end{align*}
\end{theorem}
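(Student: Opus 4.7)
My plan is to invoke the stopping-time argument for skipping losses (\Cref{lemma:skiperr-bound}) with the stochastic threshold $M = M^{\text{sto}} = 4^{1/(\alpha-1)} \sigma^{\alpha/(\alpha-1)} \Delta_{\min}^{-1/(\alpha-1)}$, and then show that each of the three resulting summands is either absorbed into a quarter of the pseudo-regret $\gR_T$ or reduces to the target $K \log T$ factor times the claimed logarithmic-in-$\sigma$-and-$\Delta_{\min}$ quantity.

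First, I would observe that $M^{\text{sto}}$ is tuned so that $\sigma^\alpha (M^{\text{sto}})^{1-\alpha} = \Delta_{\min}/4$; this is exactly the same ``balancing'' trick used for $\textsc{Div}_t$ in \Cref{lemma:sto-bregman-formal} and for $\textsc{Shift}_t$ in \Cref{lemma:sto-shift-sum-formal}. Substituting into the first term of \Cref{lemma:skiperr-bound} gives
\begin{equation*}
M^{\text{sto}} \cdot \frac{\sigma^\alpha}{(M^{\text{sto}})^\alpha}\, \E\!\left[\sum_{t=1}^T \mathbbm{1}[i_t \neq i^\ast]\right] \;=\; \frac{\Delta_{\min}}{4}\, \E\!\left[\sum_{t=1}^T \mathbbm{1}[i_t \neq i^\ast]\right] \;\le\; \frac{\gR_T}{4},
\end{equation*}
where the last inequality is the standard stochastic regret lower bound $\gR_T \ge \Delta_{\min}\, \E[\sum_t \mathbbm{1}[i_t \neq i^\ast]]$ under \Cref{ass:unique best arm}. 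This accounts for the $\gR_T/4$ in the target.

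Next, I would bound the second term $2 M^{\text{sto}} \log M^{\text{sto}} / \log\!\bigl(1 + (16K\log T)^{-1}\bigr)$. Using the elementary inequality $\log(1+x) \ge x/2$ for $x\in[0,1]$ with $x = 1/(16K\log T)$, the denominator is bounded below by a $\Theta(1/(K\log T))$ quantity, so this term is at most a constant multiple of $M^{\text{sto}} \log M^{\text{sto}} \cdot K\log T$. A direct computation yields $\log M^{\text{sto}} = \frac{1}{\alpha-1}\bigl(\log 4 + \alpha \log \sigma + \log(1/\Delta_{\min})\bigr)$, and multiplying by $M^{\text{sto}} = 4^{1/(\alpha-1)}\sigma^{\alpha/(\alpha-1)}\Delta_{\min}^{-1/(\alpha-1)}$ recovers the structural form $\sigma^{\alpha/(\alpha-1)}\Delta_{\min}^{-1/(\alpha-1)} \cdot K \log T \cdot \frac{4^{1/(\alpha-1)}}{\alpha-1}(\dots)$.

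Finally, the residual ``$+M^{\text{sto}}$'' from \Cref{lemma:skiperr-bound} is absorbed into the $\log M^{\text{sto}}$ contribution, which is how the bound's $\log 4$ term becomes $2\log 4$ after bookkeeping. I do not anticipate a conceptual obstacle here; the only mildly delicate step is tracking the denominator $\log(1+(16K\log T)^{-1})$, since it silently supplies the $K\log T$ factor that is essential to match the final form of the bound. Everything else is algebraic substitution of $M^{\text{sto}}$ followed by collecting constants.
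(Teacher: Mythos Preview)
Your proposal is correct and follows essentially the same approach as the paper: apply \Cref{lemma:skiperr-bound} with $M=M^{\text{sto}}$, use $\sigma^\alpha (M^{\text{sto}})^{1-\alpha}=\Delta_{\min}/4$ together with $\gR_T\ge \Delta_{\min}\E[\sum_t \mathbbm 1[i_t\ne i^\ast]]$ for the first term, and convert the $\log(1+(16K\log T)^{-1})$ denominator into a $K\log T$ factor before expanding $\log M^{\text{sto}}$. Your treatment of the denominator via $\log(1+x)\ge x/2$ is in fact cleaner than the paper's (somewhat garbled) invocation of ``$2x\ge\log(1+x^{-1})$'', and your remark about the residual $+M^{\text{sto}}$ being absorbed into the logarithmic factor is the right bookkeeping observation.
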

\begin{proof}
Since we set the constant in stochastic case as $M^{\text{sto}} := 4^{\frac{1}{\alpha - 1}}\sigma^{\frac{\alpha}{\alpha - 1}} \Delta_{\min}^{-\frac{1}{\alpha - 1}}$, then by $2x \geq \log(1+x^{-1}), \forall x \geq 1$, we have
\begin{align*}
    &\quad \E\left[\sum_{t=1}^T \lvert \textsc{SkipErr}_t \cdot \mathbbm{1}[i_t \neq i^\ast] \rvert\right]\\
    &\leq \sigma^\alpha (M^{\text{sto}})^{1 - \alpha} \E\left[\sum_{t=1}^T \mathbbm{1}[i_t \neq i^\ast]\right] + 2M^{\text{sto}} \cdot \log M^{\text{sto}} \cdot K \log T \\
    &= \frac{1}{4}\Delta_{\min} \E\left[ \sum_{t=1}^T \mathbbm{1}[i_t \neq i^\ast] \right] +  \sigma^{\frac{\alpha}{\alpha - 1}}\Delta_{\min}^{-\frac{1}{\alpha - 1}} \cdot K \log T \cdot \frac{4^{\frac{1}{\alpha - 1}}}{\alpha - 1}\left( 2\log 4 + \alpha\log \sigma + \log (1 / \Delta_{\min}) \right)
\end{align*}
Since we have
\begin{align*}
    \gR_T \geq \E\left[\sum_{t=1}^T \Delta_{\min} \mathbbm{1}[i_t \neq i^\ast]\right],
\end{align*}
which shows that
\begin{align*}
     &\quad \E\left[\sum_{t=1}^T \lvert \textsc{SkipErr}_t \cdot \mathbbm{1}[i_t \neq i^\ast] \rvert\right]\\
     &\leq \sigma^{\frac{\alpha}{\alpha - 1}}\Delta_{\min}^{-\frac{1}{\alpha - 1}} \cdot K \log T \cdot \frac{4^{\frac{1}{\alpha - 1}}}{\alpha - 1}\left( 2\log 4 + \alpha\log \sigma + \log (1 / \Delta_{\min}) \right) + \frac{1}{4}\gR_T.
\end{align*}

\end{proof}
\section{Main Theorem: Proof of Theorem~\ref{thm:main}}
\label{app:mainthm}
In this section, we prove the main theorem. Here we present the formal version of our main result.

\begin{theorem}[Formal version of \Cref{thm:main}]
\label{thm:main-formal}
  For \Cref{alg}, under the adversarial settings, we have
    \begin{align*}
        \gR_T &\leq \sigma K^{1-\nicefrac{1}{\alpha}} T^\alpha \cdot \left(8195(\log T)^{1-\nicefrac{1}{\alpha}} + \frac{2}{\alpha}(\log T)^{2-\nicefrac{1}{\alpha}} + 2\log \sigma - \frac{2}{\alpha} \log K - \frac{2}{\alpha} \log \log T\right) + \sigma K \\
        &= \wt{\gO} \left(\sigma K^{1-\nicefrac 1\alpha}T^{\nicefrac 1\alpha}\right).
    \end{align*}
  Moreover, for the stochastic settings, we have
    \begin{align*}
        \gR_T &\leq 4\cdot K\left(\frac{\sigma^\alpha}{\Delta_{\min}}\right)^{\frac{1}{\alpha - 1}}\log T \cdot \left( 8192 \cdot 16384^{\frac{1}{\alpha - 1}} + 2 + \frac{ 4^{\frac{1}{\alpha - 1}}}{\alpha - 1} \left(2\log 4 + \log \left(\frac{\sigma^\alpha}{\Delta_{\min}}\right) \right)\right) \\
        &= \gO\left(K \left (\frac{\sigma^\alpha}{\Delta_{\min}}\right )^{\frac{1}{\alpha - 1}} \log T \cdot \log \frac{\sigma^\alpha}{\Delta_{\min}}\right).
    \end{align*}
\end{theorem}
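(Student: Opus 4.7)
\textbf{Proof proposal for \Cref{thm:main}.} The plan is to invoke the regret decomposition established in \Cref{eq:definitions of Div Shift SkipErr}, which bounds $\gR_T$ by the sum of three quantities, the Bregman divergence terms $\E[\sum_{t=1}^T \textsc{Div}_t]$, the $\Psi$-shifting terms $\E[\sum_{t=0}^{T-1}\textsc{Shift}_t]$, and the sub-optimal skipping losses $\E[\sum_{t=1}^T|\textsc{SkipErr}_t|\cdot\mathbbm{1}[i_t\ne i^\ast]]$, plus a trivial additive $\sigma K$ coming from the benchmark calibration error. All of the real technical work has been done in \Cref{sec:bregman,sec:psi-shift,sec:skip}; the proof of \Cref{thm:main} is just a matter of substituting the right lemma in the right regime.

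For the \textbf{adversarial} bound, I would directly plug \Cref{lemma:bregman-adv-bound}, \Cref{lemma:adv-shift}, and \Cref{lemma:adv-skip} into \Cref{eq:definitions of Div Shift SkipErr}. Each of the three estimates is $\wt{\gO}(\sigma K^{1-1/\alpha}T^{1/\alpha})$, so their sum has the same order, and the extra $\sigma K$ is absorbed into the same bound (since $K\le K^{1-1/\alpha}T^{1/\alpha}$). Collecting the explicit constants from the formal versions (\Cref{lemma:bregman-adv-bound-formal}, \Cref{lemma:adv-shift-formal}, \Cref{lemma:adv-skip-formal}) then yields the quantitative bound stated in \Cref{thm:main-formal}.

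For the \textbf{stochastic} bound, the step that needs attention is that each of \Cref{lemma:sto-bregman}, \Cref{lemma:sto-shift-sum}, and \Cref{lemma:sto-ht-sum} contains a self-referential $\gR_T/4$ on the right-hand side. This is the main obstacle, and the standard remedy is a self-bounding argument: summing the three inequalities contributes $3\gR_T/4$, so \Cref{eq:definitions of Div Shift SkipErr} becomes
\begin{equation*}
\gR_T \;\le\; \text{(main terms)} + \tfrac{3}{4}\gR_T + \sigma K,
\end{equation*}
where the \emph{main terms} are the non-self-referential parts of the three lemmas, each of order $\gO\bigl(K(\sigma^\alpha/\Delta_{\min})^{1/(\alpha-1)}\log T\cdot \log(\sigma^\alpha/\Delta_{\min})\bigr)$. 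Rearranging, $\tfrac{1}{4}\gR_T\le \text{(main terms)}+\sigma K$, whence $\gR_T\le 4\cdot\text{(main terms)}+4\sigma K$, which gives the claimed rate after absorbing $\sigma K$ into the dominant $K(\sigma^\alpha/\Delta_{\min})^{1/(\alpha-1)}\log T$ factor.

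In summary, the proof is a short algebraic composition, with the only subtle point being the self-bounding absorption of the three $\gR_T/4$ terms in the stochastic regime; the adversarial regime requires no such trick. Once these substitutions are done, a careful bookkeeping of the constants recovers exactly the bounds written in \Cref{thm:main-formal}, and the asymptotic statement in \Cref{thm:main} follows.
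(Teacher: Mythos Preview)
Your proposal is correct and follows essentially the same route as the paper: invoke the decomposition \Cref{eq:definitions of Div Shift SkipErr}, plug in \Cref{lemma:bregman-adv-bound,lemma:adv-shift,lemma:adv-skip} for the adversarial bound and \Cref{lemma:sto-bregman,lemma:sto-shift-sum,lemma:sto-ht-sum} for the stochastic bound, and in the latter case absorb the three $\gR_T/4$ contributions via the self-bounding rearrangement $\tfrac14\gR_T\le \text{(main terms)}+\sigma K$. The paper's proof in \Cref{app:mainthm} does exactly this, so nothing more is needed.
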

\begin{proof}

We denote $\vy \in \R^K$ as the one-hot vector on the optimal action $i^\ast\in [K]$, \textit{i.e.}, $y_i := \mathbbm{1}[i = i^\ast]$. By definition of $\mathcal R_T$ in \Cref{eq:regret}, we have
\begin{align*}
    \gR_T = \E\left[ \sum_{t=1}^T \langle \vx_t - \vy, \vl_t\rangle \right].
\end{align*}
Consider the adjusted benchmark $\wt{\vy}$ where 
$$
    \tilde{y}_i := \begin{cases}
    \frac{1}{T} & i \neq i^\ast \\
    1  - \frac{K - 1}{T} & i = i^\ast
\end{cases}.$$
By standard regret decomposition in FTRL-based MAB algorithm analyses, we have
\begin{align}
    \gR_T = \underbrace{\E\left[ \sum_{t=1}^T \langle  \tilde\vy - \vy, \vl^\Skip_t \rangle \right]}_{\textsc{Benchmark Calibration Error}} + \underbrace{\E\left[ \sum_{t=1}^T \langle \vx_t - \tilde\vy,  \vl^\Skip_t \rangle \right]}_{\textsc{Main Regret}} + \underbrace{\E\left[ \sum_{t=1}^T \langle \vx_t - \vy, \vl_t - \vl^\Skip_t \rangle \right]}_{\textsc{Skipping Error}}.
\end{align}

As in a typical log-barrier analysis, the Benchmark Calibration Error is not the dominant term. This is because we have, by definitions of $\vy$ and $\wt{\vy}$,
\begin{align*}
    \E\left[ \sum_{t=1}^T \langle  \tilde\vy - \vy, \vl^\Skip_t \rangle \right] \leq \sum_{t = 1}^T \frac{K - 1}{T}\E[|\ell^\Skip_{t,i_t}|] \leq \sum_{t = 1}^T \frac{K - 1}{T}\E[|\ell_{t,i_t}|] \leq \sigma K,
\end{align*}
which is independent from $T$. Therefore, the key is analyzing the other two terms.

By the FTRL decomposition in \Cref{lemma:ftrl-decomp}, we have
\begin{align*}
    &\quad \textsc{Main Regret} = \E\left[ \sum_{t = 1}^T \langle \vx_t - \wt\vy , \vl^\Skip_t \rangle \right] = \E\left[ \sum_{t = 1}^T \langle \vx_t - \wt\vy , \wt{\vl}_t \rangle \right] \\
    &\leq \sum_{t=1}^T \E[D_{\Psi_t}(\vx_t, \vz_t)]  + \sum_{t=0}^{T-1}\E\left[\left(\Psi_{t+1}(\tilde \vy) - \Psi_{t}(\tilde \vy)\right) - \left(\Psi_{t+1}(\vx_{t+1}) - \Psi_{t}(\vx_{t+1})\right)\right],
\end{align*}
where 
$$D_{\Psi_t}(\bm y,\bm x)=\Psi_t(\bm y)-\Psi_t(\bm x)-\langle \nabla \Psi_t(\bm x),\bm y-\bm x\rangle,$$ 
given the Bregman divergence induced by the $t$-th regularizer $\Psi_t$, and $\vz_t$ denotes the posterior optimal estimation in episode $t$, namely
\begin{align*}
\vz_t := \argmin_{\vz \in \Delta^{[K]}} \left(\sum_{s=1}^t \langle \wt{\vl}_s, \vz \rangle + \Psi_t(\vz)\right).
\end{align*}

As mentioned in \Cref{sec:sketch}, we denote $\textsc{Div}_t := D_{\Psi_t}(\vx_t, \vz_t)$ for the Bregman divergence between $\vx_t$ and $\vz_t$ under regularizer $\Psi_t$, $\textsc{Shift}_t := \left[\left(\Psi_{t+1}(\tilde \vy) - \Psi_{t}(\tilde \vy)\right) - \left(\Psi_{t+1}(\vx_{t+1}) - \Psi_{t}(\vx_{t+1})\right)\right]$ be the $\Psi$-shifting term, and $\textsc{SkipErr}_t := \ell_{t,i_t} - \ell_{t,i_t}^\Skip = \ell_{t,i_t} \mathbbm{1}[\lvert \ell_{t,i_t}\rvert\ge C_{t,i_t}]$ be the sub-optimal skipping losses. Then, we can reduce the analysis of main regret to bounding the sum of Bregman divergence term $\E[\textsc{Div}_t]$ and $\Psi$-shifting term $\E[\textsc{Shift}_t]$. Moreover, for sub-optimal skipping losses, we have
\begin{align*}
    \langle \vx_t - \vy, \vl_t - \vl_t^\Skip \rangle &= \sum_{i \in [K]} (x_{t,i} - y_i) \cdot (\ell_{t,i} - \ell_{t,i}^\Skip) \\
    &\leq  \sum_{i\ne i^\ast} x_{t,i} \cdot \left|\ell_{t,i} - \ell^\Skip_{t,i}\right| + (x_{t,i^\ast} - 1) \cdot \left(\ell_{t,i^\ast} - \ell_{t,i^\ast}^\Skip\right) \\
    &= \E\left [\lvert \textsc{SkipErr}_t\rvert \cdot \mathbbm{1}[i_t\ne i^\ast] \mid \mathcal F_{t-1}\right ] + (x_{t,i^\ast} - 1) \cdot \left(\ell_{t,i^\ast} - \ell_{t,i^\ast}^\Skip\right).
\end{align*}

Notice that the factor $(x_{t,i^\ast} - 1)$ in the second term is negative and $\gF_{t-1}$-measurable. Then we have
\begin{align*}
    \E\left[\ell_{t,i^\ast} - \ell_{t,i^\ast}^\Skip \middle| \mathcal F_{t-1}\right] & = \E\left[\mathbbm{1}[\lvert \ell_{t,i^\ast} \rvert \geq C_{t,i^\ast}]\cdot \ell_{t,i^\ast} \right]  \ge 0,
\end{align*}
where the inequality is due to the truncated non-negative assumption (\Cref{ass:truncated-non-negative}) of the optimal arm $i^\ast$. Therefore, we have $\mathbb E[(x_{t,i^\ast} - 1) \cdot (\ell_{t,i^\ast} - \ell_{t,i^\ast}^\Skip)\mid \mathcal F_{t-1}] \le 0$ and thus
\begin{align*}
    \mathbb E[\langle \vx_t - \vy, \vl_t - \vl_t^\Skip \rangle \mid \mathcal F_{t-1}] & \leq  \mathbb E[\lvert \textsc{SkipErr}_t \rvert \cdot \mathbbm{1}[i_t \ne i^\ast] \mid \mathcal F_{t-1}],
\end{align*}
which gives an approach to control the skipping error by the sum of skipping losses $\textsc{SkipErr}_t$'s where we pick a sub-optimal arm $i_t \ne i^\ast$. Formally, we give the following inequality:
\begin{align*}
    \textsc{Skipping Error} \leq \E\left[ \sum_{t=1}^T \lvert \textsc{SkipErr}_t \rvert \cdot \mathbbm{1}[i_t \neq i^\ast] \right].
\end{align*}

To summarize, the regret $\gR_T$ decomposes into the sum of Bregman divergence terms $\E[\textsc{Div}_t]$, the $\Psi$-shifting terms $\E[\textsc{Shift}_t]$, and the sub-optimal skipping losses $\E[|\textsc{SkipErr}_t|\cdot \mathbbm{1}[i_t \neq i^\ast]]$, namely
\begin{align*}
\gR_T \leq \underbrace{\E\left[\sum_{t=1}^T \textsc{Div}_t\right]}_{\textsc{Bregman Divergence Terms}} + \underbrace{\E\left[\sum_{t=0}^{T-1} \textsc{Shift}_t\right]}_{\textsc{$\Psi$-Shifting Terms}} + \underbrace{\E\left[\sum_{t=1}^T |\textsc{SkipErr}_t| \cdot \mathbbm{1}[i_t \neq i^\ast]\right]}_{\textsc{Sub-optimal Skipping Losses}} + \sigma K.
\end{align*}

We discuss the regret upper bound under adversarial and stochastic environments separately.

\textbf{Adversarial Cases.} According to \Cref{lemma:bregman-adv-bound-formal,lemma:adv-shift-formal,lemma:adv-skip-formal}, we have
\begin{align*}
    \gR_T &\leq {\E\left[\sum_{t=1}^T \textsc{Div}_t\right]} + {\E\left[\sum_{t=0}^{T-1} \textsc{Shift}_t\right]} + {\E\left[\sum_{t=1}^T |\textsc{SkipErr}_t| \cdot \mathbbm{1}[i_t \neq i^\ast]\right]} + \sigma K \\
    &\leq 8192 \cdot \sigma K^{1- \nicefrac{1}{\alpha}}T^{ \nicefrac{1}{\alpha}}(\log T)^{1-\nicefrac{1}{\alpha}} +  2 \cdot 
    \sigma K^{1- \nicefrac{1}{\alpha}}T^{ \nicefrac{1}{\alpha}} (\log T)^{1 - \nicefrac{1}{\alpha}} \\
    &\quad + \sigma K^{1-\nicefrac{1}{\alpha}} T^{\nicefrac{1}{\alpha}} \cdot \left((\log T)^{1-\nicefrac{1}{\alpha}} + \frac{2}{\alpha}(\log T)^{2-\nicefrac{1}{\alpha}} + 2\log \sigma - \frac{2}{\alpha} \log K - \frac{2}{\alpha} \log \log T\right) + \sigma K\\
    &= \sigma K^{1- \nicefrac{1}{\alpha}} T^{\nicefrac{1}{\alpha}} \cdot \left(8195(\log T)^{1-\nicefrac{1}{\alpha}} + \frac{2}{\alpha}(\log T)^{2-\nicefrac{1}{\alpha}} + 2\log \sigma - \frac{2}{\alpha} \log K - \frac{2}{\alpha} \log \log T\right) + \sigma K.
\end{align*}

\textbf{Stochastic Cases.} According to \Cref{lemma:sto-bregman-formal,lemma:sto-shift-sum-formal,lemma:sto-ht-sum-formal}, we have
\begin{align*}
    \gR_T &\leq {\E\left[\sum_{t=1}^T \textsc{Div}_t\right]} + {\E\left[\sum_{t=0}^{T-1} \textsc{Shift}_t\right]} + {\E\left[\sum_{t=1}^T |\textsc{SkipErr}_t| \cdot \mathbbm{1}[i_t \neq i^\ast]\right]} + \sigma K \\
    &\leq 8192 \cdot 16384^{\frac{1}{\alpha - 1}}\cdot K\Delta_{\min}^{-\frac{1}{\alpha - 1}} \sigma^{\frac{\alpha}{\alpha - 1}}  \log T + \frac{1}{4}\gR_T \\
    &\quad + 2K \sigma^{\frac{\alpha}{\alpha - 1}}\Delta_{\min}^{-\frac{1}{\alpha - 1}} \log T  + \frac{1}{4}\gR_T \\
    &\quad + \sigma^{\frac{\alpha}{\alpha - 1}}\Delta_{\min}^{-\frac{1}{\alpha - 1}} \cdot K \log T \cdot \frac{4^{\frac{1}{\alpha - 1}}}{\alpha - 1}\left( 2\log 4 + \alpha\log \sigma + \log (1 / \Delta_{\min}) \right) + \frac{1}{4}\gR_T + \sigma K \\
    &\leq K\left(\frac{\sigma^\alpha}{\Delta_{\min}}\right)^{\frac{1}{\alpha - 1}}\log T \cdot \left( 8192 \cdot 16384^{\frac{1}{\alpha - 1}} + 2 + \frac{ 4^{\frac{1}{\alpha - 1}}}{\alpha - 1} \left(2\log 4 + \log \left(\frac{\sigma^\alpha}{\Delta_{\min}}\right) \right)\right) + \frac{3}{4}\gR_T,
\end{align*}
which implies that
\begin{align*}
    \gR_T \leq 4\cdot K\left(\frac{\sigma^\alpha}{\Delta_{\min}}\right)^{\frac{1}{\alpha - 1}}\log T \cdot \left( 8192 \cdot 16384^{\frac{1}{\alpha - 1}} + 2 + \frac{ 4^{\frac{1}{\alpha - 1}}}{\alpha - 1} \left(2\log 4 + \log \left(\frac{\sigma^\alpha}{\Delta_{\min}}\right) \right)\right).
\end{align*}
Therefore, we finish the proof, which shows the BoBW property of \Cref{alg}.
\end{proof}

The following lemma characterizes the FTRL regret decomposition, which is the extension of the classical FTRL bound \citep[Theorem 28.5]{lattimore2020bandit}. \citet[Lemma 17]{dann2023blackbox} also gave a similar result, but we include a full proof here for the sake of completeness.
\begin{lemma}[FTRL Regret Decomposition]
\label{lemma:ftrl-decomp}
In \texttt{uniINF} (\Cref{alg}), we have (set $S_0 = 0$ for simplicity),
\begin{align*}
    \E\left[\sum_{t=1}^T \langle \vx_t - \wt{\vy}, \wt{\bm\ell}_t \rangle \right] \leq \sum_{t=1}^T \E[D_{\Psi_t}(\vx_t, \vz_t)] + \E[(\Psi_{t}(\wt{\vy}) - \Psi_{t-1}(\wt{\vy})) - (\Psi_{t}(\vx_t) - \Psi_{t-1}(\vx_t))],
\end{align*}
where $D_{\Psi_t}$ is the Bregman divergence induced by $\Psi_t$, and $\vz_t$ is given by
\begin{align*}
    \vz_t := \argmin_{\vz \in \Delta^{[K]}}\left( \sum_{s=1}^t \langle \wt{\vl}_s, \vz \rangle + \Psi_t(\vz) \right).
\end{align*}
\begin{proof}
    We denote $\mL_t := \sum_{s=1}^t \wt{\bm\ell}_s$. Denote $f^*: \R^k \to \R$ as the Frenchel conjugate of function $f : \R^K \to \R$, where
    \begin{align*}
        f^*(\vy) := \sup_{\vx \in \R^K} \left\{ \langle \vy, \vx \rangle - f(\vx) \right\}.
    \end{align*}
    Moreover, denote $\bar f : \R^K \to \R$ as the restriction of $f : \R^K \to \R$ on $\Delta^{[K]}$, i.e, 
    \begin{align*}
        \bar f(\vx) = \begin{cases}
            f(\vx), &\vx\in\Delta^{[K]}\\
            \infty, &\vx\notin\Delta^{[K]}
        \end{cases}.
    \end{align*}
    Therefore, by definition, we have
    \begin{align*}
        \vz_t = \nabla \bar \Psi^*_t( -\mL_t ), \quad \vx_t = \nabla \bar\Psi^*_t(-\mL_{t-1}).
    \end{align*}
    Then recall the properties of Bregman divergence, we have
    \begin{align*}
        D_{\Psi_t}(\vx_t, \vz_t) = D_{\Psi_t}(\nabla\bar\Psi^*_t(-\mL_{t-1}), \nabla \bar \Psi^*_t( -\mL_t ))  = D_{\bar\Psi_t^*}(-\mL_t, -\mL_{t-1}).
    \end{align*}
    Therefore, we have
    \begin{align*}
         \sum_{t=1}^T \langle \vx_t - \wt{\vy}, \wt{\bm\ell}_t \rangle  
        &=  \langle \wt{\vy}, -\mL_T\rangle - \sum_{t=1}^T \langle \vx_t, -\wt{\bm\ell}_t \rangle \\
        &=  \sum_{t=1}^T  \left( \bar\Psi^*_t(-\mL_t) - \bar\Psi^*_t(-\mL_{t-1}) - \langle \nabla \bar\Psi^*_t(-\mL_{t-1}), -\mL_t + \mL_{t-1} \rangle \right) \\
        &\quad +\langle \wt{\vy}, -\mL_T\rangle - \sum_{t=1}^T \left(\bar\Psi^*_t(-\mL_t) - \bar\Psi_t^*(-\mL_{t-1})\right) \\
        &= \sum_{t=1}^T D_{\bar\Psi_t^*}(-\mL_t, -\mL_{t-1}) + \langle \wt{\vy}, -\mL_T\rangle - \sum_{t=1}^T \left(\bar\Psi^*_t(-\mL_t) - \bar\Psi_t^*(-\mL_{t-1})\right) \\
        &= \sum_{t=1}^T D_{\Psi_t}(\vx_t,\vz_t) + \langle \wt{\vy}, -\mL_T\rangle - \sum_{t=1}^T \left(\bar\Psi^*_t(-\mL_t) - \bar\Psi_t^*(-\mL_{t-1})\right).
    \end{align*}
    For the second and third terms, we have
    \begin{align*}
        &\quad \langle \wt{\vy}, -\mL_T\rangle - \sum_{t=1}^T \left(\bar\Psi^*_t(-\mL_t) - \bar\Psi_t^*(-\mL_{t-1})\right) \\
        & = \langle \wt{\vy}, -\mL_T\rangle - \sum_{t=1}^T \left(\bar\Psi^*_t(-\mL_t) - \langle \vx_t, -\mL_{t-1} \rangle + \Psi_t(\vx_t)  \right) \\
        & = \langle \wt{\vy}, -\mL_T\rangle - \sum_{t=1}^T \left(\sup_{\vx \in \Delta^{[K]}}\left\{\langle \vx, -\mL_t \rangle - \Psi_t(\vx)\right\} - \langle \vx_t, -\mL_{t-1} \rangle + \Psi_t(\vx_t)  \right) \\
        & \le \langle \wt{\vy}, -\mL_T\rangle - \sum_{t=1}^{T-1} \left(\langle \vx_{t+1}, -\mL_t \rangle - \Psi_t(\vx_{t+1}) - \langle \vx_t, -\mL_{t-1} \rangle + \Psi_t(\vx_t)  \right)  \\
        &\quad - \sup_{\vx \in \Delta^{[K]}}\left\{ \langle \vx, -\mL_T \rangle - \Psi_T(\vx) \right\} + \langle \vx_T, -\mL_{T-1}\rangle - \Psi_T(\vx_T) \\
        & = \langle \wt{\vy}, -\mL_T\rangle  - \sum_{t=1}^T \left(\Psi_t(\vx_t) -\Psi_{t-1}(\vx_t)\right) - \sup_{\vx \in \Delta^{[K]}}\left\{ \langle \vx, -\mL_T \rangle - \Psi_T(\vx) \right\}\\
        & = \langle \wt{\vy}, -\mL_T\rangle - \Psi_T(\wt{\vy}) - \sup_{\vx \in \Delta^{[K]}}\left\{ \langle \vx, -\mL_T \rangle - \Psi_T(\vx) \right\} + \Psi_T(\wt{\vy}) - \sum_{t=1}^T \left(\Psi_t(\vx_t) -\Psi_{t-1}(\vx_t)\right) \\
        & \le \sum_{t=1}^T (\Psi_t(\wt{\vy}) - \Psi_{t-1}(\wt{\vy})) - \sum_{t=1}^T \left(\Psi_t(\vx_t) -\Psi_{t-1}(\vx_t)\right),
    \end{align*}
    which finishes the proof.
\end{proof}
\end{lemma}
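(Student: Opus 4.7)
The plan is to follow the classical Fenchel-duality approach to FTRL regret decomposition, treating the inequality as a pathwise statement that becomes the stated expected-value bound after taking $\E[\cdot]$. I would first set up notation: let $\mL_t := \sum_{s=1}^t \wt{\bm\ell}_s$ with $\mL_0 = \vzero$, let $\bar\Psi_t$ be the restriction of $\Psi_t$ to $\Delta^{[K]}$ (extended by $+\infty$ outside), and let $\bar\Psi_t^\ast(\vy) := \sup_{\vx}\{\langle\vy,\vx\rangle - \bar\Psi_t(\vx)\}$ be its Fenchel conjugate. Strict convexity of the log-barrier, together with the definitions of $\vx_t$ and $\vz_t$ as the FTRL minimizers, then give the dual representations $\vx_t = \nabla\bar\Psi_t^\ast(-\mL_{t-1})$ and $\vz_t = \nabla\bar\Psi_t^\ast(-\mL_t)$, as well as the primal--dual Bregman identity $D_{\Psi_t}(\vx_t,\vz_t) = D_{\bar\Psi_t^\ast}(-\mL_t,-\mL_{t-1})$.

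Next I would rewrite the quantity of interest as $\langle\wt\vy,-\mL_T\rangle - \sum_{t=1}^T \langle\vx_t,-\wt{\bm\ell}_t\rangle$, and then convert each summand $\langle\vx_t,-\wt{\bm\ell}_t\rangle = \langle\nabla\bar\Psi_t^\ast(-\mL_{t-1}),\,(-\mL_t)-(-\mL_{t-1})\rangle$ via the definition of the dual-side Bregman divergence into $\bar\Psi_t^\ast(-\mL_t) - \bar\Psi_t^\ast(-\mL_{t-1}) - D_{\bar\Psi_t^\ast}(-\mL_t,-\mL_{t-1})$. Applying the primal--dual Bregman identity, the sum decomposes into the target $\sum_{t=1}^T D_{\Psi_t}(\vx_t,\vz_t)$ plus a residual
\begin{equation*}
\langle\wt\vy,-\mL_T\rangle - \sum_{t=1}^T\bigl[\bar\Psi_t^\ast(-\mL_t) - \bar\Psi_t^\ast(-\mL_{t-1})\bigr],
\end{equation*}
which I must show is upper-bounded by the $\Psi$-shifting sum.

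For this residual I would use the variational characterization of $\bar\Psi_t^\ast$: tightly, $\bar\Psi_t^\ast(-\mL_{t-1}) = \langle\vx_t,-\mL_{t-1}\rangle - \Psi_t(\vx_t)$ because $\vx_t$ is the exact maximizer at regularizer $\Psi_t$ and loss $\mL_{t-1}$; and as a lower bound, $\bar\Psi_t^\ast(-\mL_t) \ge \langle\vx_{t+1},-\mL_t\rangle - \Psi_t(\vx_{t+1})$ for $t<T$, while $\bar\Psi_T^\ast(-\mL_T) \ge \langle\wt\vy,-\mL_T\rangle - \Psi_T(\wt\vy)$. Substituting these into the residual, the $\langle\wt\vy,-\mL_T\rangle$ terms cancel; the surviving $\langle\vx_s,-\mL_{s-1}\rangle$ terms telescope via the index shift $s=t+1$ down to the single boundary term $\langle\vx_1,-\mL_0\rangle = 0$; and the primal $\Psi$-terms collapse (with the convention $\Psi_0 \equiv 0$) into precisely $\sum_{t=1}^T[(\Psi_t(\wt\vy) - \Psi_{t-1}(\wt\vy)) - (\Psi_t(\vx_t) - \Psi_{t-1}(\vx_t))]$.

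The main obstacle is the bookkeeping around the off-by-one index shifts: matching the $\vx_t$ used in the tight evaluation of $\bar\Psi_t^\ast(-\mL_{t-1})$ against the shifted $\vx_{t+1}$ used to lower-bound $\bar\Psi_t^\ast(-\mL_t)$, while handling the boundary at $t=T$ via $\wt\vy$. A mis-indexing here breaks the telescoping cancellation and produces spurious boundary terms. Beyond this, the whole argument is pathwise, so taking $\E[\cdot]$ at the very end delivers the stated inequality.
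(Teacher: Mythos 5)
Your proposal is correct and follows essentially the same Fenchel-duality route as the paper: the same dual representations $\vx_t = \nabla\bar\Psi_t^\ast(-\mL_{t-1})$, $\vz_t = \nabla\bar\Psi_t^\ast(-\mL_t)$, the same primal--dual Bregman identity, the same conversion of $\langle\vx_t,-\wt{\bm\ell}_t\rangle$ into dual Bregman increments, and the same telescoping of the residual via the tight evaluation of $\bar\Psi_t^\ast(-\mL_{t-1})$ against the lower bounds at $\vx_{t+1}$ (and $\wt\vy$ at the $t=T$ boundary). The only cosmetic difference is that you plug the $\wt\vy$ lower bound into $\bar\Psi_T^\ast(-\mL_T)$ immediately, whereas the paper carries the $\sup$ one extra line before doing so; the bookkeeping you worry about works out exactly as you describe, with $\langle\vx_1,-\mL_0\rangle=0$ and $\Psi_0\equiv 0$ closing the telescope.
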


\end{document}